\newcommand{\HDETC}{\texttt{HD-ETC}}
\newcommand{\EBHDETC}{\texttt{EBHD-ETC}}
\newcommand{\MLAS}{\texttt{MLAS}}
\newcommand{\KL}{\texttt{kl-UCB}}
\newcommand{\HCLA}{\texttt{HCLA}}
\newcommand{\GHCLA}{\texttt{G-HCLA}}
\newcommand{\GPHCLA}{\texttt{GP-HCLA}}
\newcommand{\HALG}{\texttt{Hungarian}}
\newcommand{\HHMMAB}{\texttt{HMA2B}}
\newcommand{\ghint}{G^{\text{hint}}}
\DeclareMathOperator*{\argmax}{arg\,max}
\DeclareMathOperator*{\argmin}{arg\,min}
\newtheorem{theorem}{Theorem}
\newtheorem{lemma}{Lemma}
\newtheorem{conjecture}{Conjecture}
\newtheorem{definition}{Definition}
\newcolumntype{P}[1]{>{\centering\arraybackslash}p{#1}}
\DeclareMathOperator{\E}{\mathbb{E}}
\newcommand{\1}[1]{\mathds{1}{\{#1\}}}
\newcommand{\abs}[1]{\left\lvert#1\right\rvert}
\let\oldComment=\Comment
\renewcommand{\Comment}[1]{\oldComment{\texttt{#1}}}
\algnewcommand{\LeftComment}[1]{\Statex $\triangleright$ \texttt{#1}}
\algnewcommand{\RightComment}[1]{\Statex \leavevmode\hfill$\triangleright$ \texttt{#1}}
\algnewcommand\algorithmicinput{\textbf{Input:}}
\algnewcommand\Input{\item[\algorithmicinput]}%
\algnewcommand\algorithmicoutput{\textbf{Output:}}
\algnewcommand\Output{\item[\algorithmicoutput]}%
\algnewcommand\algorithmicinitial{\textbf{Initialize:}}
\algnewcommand\Initial{\item[\algorithmicinitial]}%
\title{Heterogeneous Multi-Agent Bandits
    with Parsimonious Hints}
\author{
    Amirmahdi Mirfakhar\textsuperscript{\rm 1},
    Xuchuang Wang\textsuperscript{\rm 1},
     Jinhang Zuo\textsuperscript{\rm 2},
    Yair Zick\textsuperscript{\rm 1},
   Mohammad Hajiesmaili\textsuperscript{\rm 1}
}
\begin{document}

\maketitle

\begin{abstract}
  We study a hinted heterogeneous multi-agent multi-armed bandits problem (\texttt{HMA2B}), where agents can query low-cost observations (hints) in addition to pulling arms. In this framework, each of the $M$ agents has a unique reward distribution over $K$ arms, and in $T$ rounds, they can observe the reward of the arm they pull only if no other agent pulls that arm.  
The goal is to maximize the total utility by querying the minimal necessary hints without pulling arms, achieving time-independent regret. We study \texttt{HMA2B} in both centralized and decentralized setups. Our main centralized algorithm, \texttt{GP-HCLA}, which is an extension of \texttt{HCLA}, uses a central decision-maker for arm-pulling and hint queries, achieving $O(M^4K)$ regret with $O(MK\log T)$ adaptive hints. In decentralized setups, we propose two algorithms, \texttt{HD-ETC} and \texttt{EBHD-ETC}, that allow agents to choose actions independently through collision-based communication and query hints uniformly until stopping, yielding $O(M^3K^2)$ regret with $O(M^3K\log T)$ hints, where the former requires knowledge of the minimum gap and the latter does not. Finally, we establish lower bounds to prove the optimality of our results and verify them through numerical simulations.

\end{abstract}


\section{Introduction}

\begin{table*}[t]
    \centering
    
    \begin{tabularx}{0.80\textwidth}{ccccc}

        \toprule
        Algorithm                       & D/C & Regret                 & Queried Hints                  & Communication           \\
        \toprule
        $\HCLA$ (Algorithm~\ref{routin:hcla})      & C   & $O\left(MK^{2M}\right)$   & $O\left(MK^M \log T\right)$    & N/A                            \\
        \toprule
        $\GHCLA$ (Algorithm~\ref{routin:hcla2})      & C   & $O\left(M^4K\right)$   & $O\left(M^2K \log T\right)$    & N/A                            \\

        \midrule
        $\GPHCLA$ (Algorithm~\ref{routin:hcla1})    & C   & $O\left(M^4K\right)$   & $O\left(MK \log T\right)$      & N/A                            \\
        \midrule
        $\HDETC$ (Algorithm~\ref{routin:hdetc})$^\dagger$     & D   & $O\left(M^3K^2\right)$ & $O\left(M^3K \log (MT)\right)$ & $O\left(\log T\right)$             \\
        \midrule
        $\EBHDETC$ (Algorithm~\ref{routin:ebhdetc}) & D   & $O\left(M^3K^2\right)$ & $O\left(M^3K \log (MT)\right)$ & $O\left(\log T\right)$           \\
        \bottomrule
    \end{tabularx}
    \caption{Regret, Queried Hints, and Communication Bounds for Centralized and Decentralized Algorithms (
    `C' and `D' stand for centralized and decentralized algorithms respectively, 
    $^\dagger$ indicates that \HDETC{} relies on the knowledge of minimum gap)}
    \label{demo-table}
\end{table*}

The multi-agent multi-armed bandit (MA2B) problem~\citep{liu2010distributed,anandkumar2011distributed} is a sequential decision making task consisting of \(K\in\mathbb{N}^+\) arms and \(M\in\mathbb{N}^+\) agents.
In each of the total \(T\in\mathbb{N}^+\) decision rounds,
each agent selects one arm to pull and observes its reward if no other agent pulls the same arm (called no \emph{collision}).
This model has applications in wireless communication~\citep{jouini2009multi,jouini2010upper}, caching~\citep{xu2020collaborative,xu2020decentralized}, and edge computing~\citep{wu2021multi}.
Among various models in MA2B, the heterogeneous multi-agent multi-armed bandit~\citep{bistritz2018distributed,shi2021heterogeneous} is a more realistic variant for these applications where agents have different reward distributions over the arms, e.g., in a wireless communication scenario where agents have different channel qualities due to different geographical locations.
In this heterogeneous MA2B model, the optimal action of all agents is a bipartite matching (between agents and arms) that maximizes the total reward, called the \emph{optimal matching}.
An algorithm's performance is evaluated by \emph{regret}, the difference between the accumulative reward of keeping to choose the optimal matching in all decision rounds and the total reward of the bandit algorithm. A smaller expected regret implies a better algorithm.

 Recently, learning-augmented approaches are emerging, e.g.,~\citet{lykouris2021competitive,bamas2020primal,bhaskara2023bandit}. This stream of research studies how to assist an algorithm with \emph{hints} (a.k.a., predictions) queried from existing ML models, e.g., large language model~\citep{achiam2023gpt}, deep convolutional neural network~\citep{krizhevsky2017imagenet}, and deep reinforcement learning~\citep{franccois2018introduction}.

In this paper, we study the utilization of hint information in heterogeneous multi-agent multi-armed bandits. In addition to receiving feedback from pulling arms, agents can sequentially query hints about the potential rewards of other arms, assisting in their decision-making process.
We call the model \emph{\underline{H}inted \underline{H}eterogeneous \underline{M}ulti-\underline{A}gent \underline{M}ulti-\underline{A}rmed \underline{B}andits} (\texttt{HMA2B}).
Specifically, we consider a simple and accurate hint mechanism where agents can query the reward of an arm without pulling it, with no regret incurred from the queried hint. Despite assuming accurate hints, this model poses challenges, such as balancing hint queries and arm-pullings while accounting for agent heterogeneity and potential future collisions. In addition to minimizing regret, we aim to reduce \emph{hint complexity}, the total number of queried hints, as querying hints, such as via the GPT-4 API~\citep{achiam2023gpt}, can be costly. Efficiently leveraging hints is crucial in scenarios where hint costs are significantly lower than the costs of taking actions. For instance, in labor markets, structured, low-cost interviews provide hints to improve applicant-role matching, reducing the risk of human resource misallocation. Similarly, in radio channel assignments, test signals serve as hints to allocate high-bandwidth channels effectively, preventing delays and disruptions in critical applications like disaster recovery, where drones depend on reliable communication channels.

We study two scenarios of \texttt{HMA2B}: \emph{centralized} and \emph{decentralized} setups.
In the centralized setup, an omniscient decision-maker determines which arm each agent should pull or query hints from, similar to decision-making in hiring processes where the employer has access to the applicants' information to decide which of them to interview and which to hire. In the decentralized setup, agents independently decide their actions through collision-based communications, e.g., in radio channel allocation to stations.

Designing an algorithm that achieves time-independent regret with a linear number of hints in \(T\) is straightforward. However, reducing the queried hints to a sub-linear number in \(T\) is challenging. To tackle this, we first analyze the fundamental limits of hint complexity in the centralized setup and propose \(\GPHCLA\), a fine-tuned algorithm based on the advanced \(\KL\) algorithm~\citep{cappe2013kullback}, which achieves asymptotically optimal hint complexity (Appendix~\ref{App:OofR}). In decentralized setups, the essence of communication lies in the absence of a central decision maker, requiring collision-based signaling~\citep{wang2020optimal}, where making or avoiding collisions encode '1' or '0' information bit. This method introduces inaccuracies from sending decimal statistics in binary and additional regret due to delayed exploration while balancing communication to determine the optimal matching. To address these challenges, we propose \(\HDETC\) and \(\EBHDETC\), which achieve relatively similar bounds on hint complexity and regret. These algorithms use a round-robin hint querying strategy combined with an \emph{Explore-then-Commit}~\citep{garivier2019explore} approach until a stopping condition is met.  Finally, we discuss the optimality of the results in both centralized and decentralized setups in Appendix \ref{App:OofR}.

\subsection{Contributions}
For the centralized setup (Section~\ref{sec:centralized-algorithm}), we propose two algorithms: \(\HCLA\) and \(\GPHCLA\). Both use empirical means to select a matching to pull and \(\KL\) indices~\citep{cappe2013kullback} to identify another matching, querying a hint if the latter has a higher value. Additionally, we analyze an intermediate algorithm, \(\GHCLA\) (Appendix~\ref{section:ghcla}), which operates similarly to \(\HCLA\) but differs from \(\GPHCLA\) in how it selects the matching to hint after deciding to query. As summarized in Table~\ref{demo-table}, both \(\GPHCLA\) and \(\GHCLA\)—extensions of \(\HCLA\)—achieve time-independent regret with an asymptotically optimal number of hints. We further prove that the upper bound on the hint complexity for \(\GPHCLA\) is tight, with both \(\GPHCLA\) and \(\GHCLA\) matching the established lower bounds. In the decentralized setup (Section~\ref{sec:decentralized-algorithm}), we introduce two algorithms: \(\HDETC\) and \(\EBHDETC\). Both divide the time horizon into three phases: \emph{exploration}, \emph{communication}, and \emph{exploitation}, with a key difference in how they transition to the exploitation phase. In the exploitation phase, no further communication, exploration, or hint querying occurs, and the two algorithms handle this transition differently. In \(\HDETC\), agents know the minimum gap—the smallest utility difference between the optimal and other matchings—and the time horizon \(T\), allowing them to switch to exploitation at a fixed time step \(T_0\). Conversely, \(\EBHDETC\) does not require this knowledge, using an edge elimination strategy to determine the transition point, which makes it a random variable. This results in slightly higher hint queries and regret compared to \(\HDETC\). We provide regret bounds for both algorithms that align with the lower bounds, accounting for uncertainties due to delayed communication.

\subsection{Related Works}

\paragraph{Heterogeneous MMAB (HMMAB)}
HMMAB is one of the standard models in multi-player multi-armed bandits with collision literature; to name a few, \citet{rosenski2016multi,boursier2019sic,mehrabian2020practical,bistritz2018distributed,shi2021heterogeneous}.
Among them, \citet{bistritz2018distributed} was the first to study the HMMAB, where they proposed a decentralized algorithm with $O(\log^2 T)$ regret.
Later on, the regret bound of this model was improved to \(O(M^3K\log T)\) by~\citet{mehrabian2020practical} and further to \(O(M^2K\log T)\) by~\citet{shi2021heterogeneous} that is the state-of-the-art result.
We are the first to introduce the hint mechanism to HMMAB.

\paragraph{Bandits with Hints}
Learning algorithms with hints (or predictions) are part of the emerging literature on learning-augmented methods, as seen in works like~\citep{lykouris2021competitive,purohit2018improving,mitzenmacher2022algorithms}, etc. The hint mechanism was initially explored in the basic stochastic multi-armed bandits model by~\citet{yun2018multi}. Later, \citet{lindstaahl2020predictive} examined a more realistic hint mechanism, which includes failure noise, for the same model. Additionally, \citet{bhaskara2023bandit} investigated the impact of hints in adversarial bandits. We are the first to study the hint mechanism in a multi-agent scenario.

\section{Hinted Heterogeneous Multi-Agent Multi-Armed Bandits}
\paragraph{Basic model} A Hinted Heterogeneous Multi-agent Multi-Armed Bandit (\texttt{HMA2B}) model consists of a set of $K$ \emph{arms}  \(\mathcal{K}\)  and a set of $M$ \emph{agents} \(\mathcal{M}\), such that \(M < K\).
Agents have heterogeneous \emph{rewards} for arms. That is, for each agent \(m\in \mathcal{M}\), each arm \(k\in\mathcal{K}\) is associated with a Bernoulli reward random variable \(X_{m,k}\) with mean \(\mu_{m,k} \coloneqq \mathbb{E}[X_{m,k}]\).
The heterogeneous reward means are represented by a matrix $\bm{\mu} \in [0,1]^{M\times K}$, where each of its rows is denoted by $\bm{\mu}_m = (\mu_{m,k})_{k\in\mathcal{K}}\in [0,1]^{K}$.

\paragraph{Reward feedback} Suppose that \(T\in\mathbb{N}^+\) denotes the total number of decision rounds. At each \emph{time step} \(t\in \{1,2,\dots, T\}\), every agent \(m\) chooses an arm \(k_{m}(t)\) to pull. The arms requested by the agents construct a bipartite graph characterized by \(M\) nodes (agents) on one side and \(K\) nodes (arms) on the other comprising \(M\) edges, ensuring that each node on the agent side is connected to exactly one arm.
Let us define \(\mathcal{G}\) as the set of all such graphs. Denote \(G(t) \coloneqq (m, k_m(t))_{m\in\mathcal{M}}\) as the bipartite graph representing the arm pulling graph of the agents at time step \(t\).
We consider the \emph{collision} setting~\citep{boursier2019sic,shi2021heterogeneous}: that is, if there exist other agents pulling the arm \(k_m(t)\) at time step \(t\), then agent \(m\) gets a reward of zero; otherwise, agent \(m\) gets a reward \(X_{m,k_m(t)}(t)\) sampled from the reward distribution of arm \(k_m(t)\), or formally, \(r_m(t) \coloneqq X_{m,k_m(t)}(t)\1{\forall m' \neq m: k_{m'}(t) \neq k_m(t)}\). This induces the optimal action to be a matching.

Given a matching $G\in\mathcal{G}$ and a reward mean matrix ${\bm{\mu} \in [0,1]^{M\times K}}$, we define the expected utility as
        \begin{align*}
            U(G;\bm{\mu}) & \coloneqq \mathbb{E}\left[\sum_{m \in \mathcal{M}}r_{m}\right] \\ &= \sum_{m\in\mathcal{M}} \mu_{m,k_m^G} \1{\forall m' \neq m: k_{m'}^G \neq k_m^G},
        \end{align*}
where \(k_m^G\) denotes the matched arm of agent \(m\) under matching \(G\).
We denote the matching with the highest utility as the optimal matching \(G^* \coloneqq \max_{G \in \mathcal{G}} U(G;\bm{\mu}).\) We assume that $G^*$ is \emph{unique}, i.e., there does not exist any $G\neq G^*$ in $\mathcal{G}$ such that $ U(G;\bm{\mu}) = U(G^*;\bm{\mu})$.

\paragraph{Hint mechanism} At each time slot \(t\), besides the pulled arm \(k_m(t)\), agent \(m\) can query another arm \(k_m^{\text{hint}}(t)\) and observe the arm's reward realization \(X_{m,k_m^{\text{hint}}(t)}(t)\) without regret cost. The hint graph then is denoted by $G^{\text{hint}}(t)$ and $k^{G^{\text{hint}}(t)}_m$ is the arm agent $m$ queried a hint for in it. These hint observations do not impact the accumulative reward and regret, and the agent can decide whether to query for a hint, denoted by the indicator function \(\ell^\pi_m(t) \coloneqq \1{\text{agent }m\text{ query a hint at }t\text{ under policy }\pi}\).
We denote \(L^\pi(T) \coloneqq \E\left[\sum_{m\in\mathcal{M}}\sum_{t=1}^T \ell^\pi_m(t)\right]\) as the total number of times of agents querying hints, and we want to design a learning policy $\pi$ minimizes the \(L^\pi(T)\) while maintaining low regret.

\paragraph{Regret.} We aim to find a policy $\pi$ that maximizes the cumulative reward of all agents by determining $G(t)$ at each round in the \(T\) rounds.
To evaluate the performance of $\pi$, we define the \emph{regret} of a policy as the difference between the total reward of all agents under the optimal matching \(G^*\) in all decision rounds and the total reward of all agents following the policy \(\pi\), as follows,
\begin{align}
    R^{\pi}(T) \coloneqq \sum_{t=1}^{T} U(G^*;\bm{\mu}) - \mathbb{E}\left[ U(G(t);\bm \mu )\right],
\end{align}
where the expectation is taken over the randomness of the policy \(\pi\). Last, we define the important parameter, the minimum gap, which is crucial and appears in our regret analysis. The \emph{minimum gap} here represents the minimum difference between the utility of any matching $G$ and $G^*$, i.e., $\Delta^{\text{match}}_{\min} \coloneqq \min_{G \neq G^* \in \mathcal{G}} U(G^*;\bm{\mu}) - U(G;\bm{\mu})$.

\paragraph{Main goal and motivating examples}
Our goal is to design learning policies that use hints—one per agent at a time—to reduce the large regret bounds established in previous works \citep{shi2021heterogeneous,mehrabian2020practical,wang2020optimal,boursier2019sic} to a preferably time-independent regret, while minimizing the number of hints queried. We assume that hints are sampled from the same distributions as the rewards from pulling arms. Our algorithms query these hints strategically, only when exploring a sub-optimal matching is necessary before committing to the optimal one. This approach minimizes the costs of direct exploration and improves performance by separating the exploration of sub-optimal matchings from the exploitation of the optimal one.


In practical scenarios, hints are typically much cheaper than direct actions. For instance, in labor markets, a low-cost interview process can provide valuable insights into candidate suitability without the high costs of hiring mistakes. Similarly, in communication networks, using test signals to estimate bandwidth needs can prevent wasting high-quality channels on low-demand stations. These examples demonstrate how the hint-based approach in \texttt{HMA2B} can improve decision-making across various applications.

\section{Algorithms for Centralized Hinted Heterogeneous Multi-Armed Bandits}\label{sec:centralized-algorithm}
In the \emph{Centralized Hinted Heterogeneous Multi-Armed Bandit} ($\texttt{C}\_\HHMMAB$) setup, we consider an \emph{omniscient} decision maker who selects both the matching and the hint graph at each round. The agents then follow the decision maker's instructions to pull arms and query hints. We propose two learning policies for this setup: the \emph{Hinted Centralized Learning Algorithm} ($\HCLA$) and the \emph{Generalized Projection-based Hinted Centralized Learning Algorithm} ($\GPHCLA$).

Under both policies, the decision maker treats each matching \(G \in \mathcal{G}\) as a super arm for hint inquiries. However, the handling of observations differs between the two: in $\HCLA$, observations are maintained for each matching, while in $\GPHCLA$, they are treated at the edge level. This distinction allows us to reduce the potentially exponential regret relative to the size of \(\mathcal{G}\) to a polynomial regret upper bound in the number of edges, \(MK\).

We first introduce the statistics maintained by agents in $\HCLA$ and $\GPHCLA$, aiding the central decision maker in deciding when and how to query hints. Next, we describe $\HCLA$ as a baseline for designing $\GPHCLA$, our main algorithm. We also present an intermediate algorithm, $\GHCLA$, as a direct extension of $\HCLA$. Finally, we detail $\GPHCLA$, which requests hints more efficiently than $\GHCLA$. $\GHCLA$ is further discussed in Appendix~\ref{section:ghcla}.

\subsection{Preliminaries}
Beyond the generic empirical means matrix $\bm{\hat{\mu}}$, the decision maker employs $\KL$ indices $\bm{d}$ \citep{cappe2013kullback} as upper confidence bounds for $\bm{\mu}$ in the $\texttt{C}\_\HHMMAB$ setup to determine when to query a hint. These indices are defined as:  
{\footnotesize  
\begin{align}
\hat{\mu}_{G}(t)    & \coloneqq \frac{\sum_{t'=1}^t \1{G(t') = G}U(G(t');\bm{r}(t'))}{ N^\pi_{G}(t)},  \\                                                                                              
\hat{\mu}_{m,k}(t)  &\coloneqq \frac{\sum_{t'=1}^t \1{(m,k)\in G(t')}r_m(t')}{ N^\pi_{m,k}(t)},                                                                                                     \\
    d_{G}(t)    & \coloneqq \sup \left\{q \geq 0: N^\pi_{G}(t)\ \mathrm{kl}\left(\hat{\mu}_{G}(t), q\right) \leq f(t) \right\}, 
    \label{ineq:KLsuper}                                                                                                                               \\
    d_{m, k}(t) & \coloneqq \sup \left\{q \geq 0: N^\pi_{m,k}(t)\ \mathrm{kl}\left(\hat{\mu}_{m,k}(t), q\right) \leq  f(t)\right\},
    \label{ineq:KLedge}
\end{align}  
}

\noindent for any matching \(G \in \mathcal{G}\) and edge \((m,k) \in \mathcal{M} \times \mathcal{K}\), respectively, where $\mathrm{kl}$ is the Kullback-Leibler divergence, and \(f(t) = \log t + 4 \log \log t\). Here, \(N^\pi_{G}(t)\) and \(N^\pi_{m,k}(t)\) represent the number of times matching \(G\) or edge \((m,k)\) has been pulled or hinted.

Before detailing the algorithm, we define a fixed set \(\mathcal{R} \coloneqq \{R_1, \dots, R_K\}\) of \(K\) pairwise edge-disjoint matchings that cover all edges \((m,k) \in \mathcal{M} \times \mathcal{K}\), referred to as \emph{covering matchings}. For uniquely labeled agents and arms in \([M]\) and \([K]\), \(R_i \in \mathcal{R}\) is the matching where agent \(m\) is paired with arm \((m + i - 1) \bmod K\), as shown in Figure \ref{fig:rr} for \(M=3\) and \(K=4\). By pulling or hinting each covering matching in \(\mathcal{R}\) at least once, agents can observe all \(G \in \mathcal{G}\) at least once. This set serves as a \emph{hint pool}, from which all hint graphs \(G^{\text{hint}}\) will be selected.




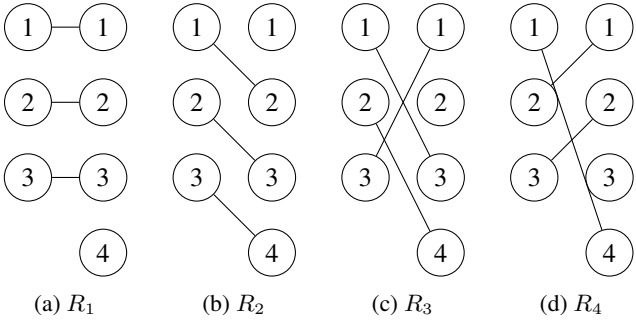
\begin{figure}[!t]

    \centering
    \begin{subfigure}[b]{0.2\linewidth}%
        \centering

        \begin{tikzpicture}[main/.style = {draw, circle},node distance=1cm and 0.5cm]
            \node[main] (1) {1};
            \node[main] (2)[below of=1] {2};
            \node[main] (3)[below of=2] {3};

            \node[main] (5)[right of=1] {1};
            \node[main] (6)[below of=5] {2};
            \node[main] (7)[below of=6] {3};
            \node[main] (8)[below of=7] {4};

            \draw (1) -- (5);
            \draw (2) -- (6);
            \draw (3) -- (7);

        \end{tikzpicture}
        \caption{$R_1$}
    \end{subfigure}
    \hfill
    \begin{subfigure}[b]{0.2\linewidth}
        \centering
        \begin{tikzpicture}[main/.style = {draw, circle},node distance=1cm and 2cm]
            \node[main] (1) {1};
            \node[main] (2)[below of=1] {2};
            \node[main] (3)[below of=2] {3};
            \node[main] (5)[right of=1] {1};
            \node[main] (6)[below of=5] {2};
            \node[main] (7)[below of=6] {3};
            \node[main] (8)[below of=7] {4};

            \draw (1) -- (6);
            \draw (2) -- (7);
            \draw (3) -- (8);

        \end{tikzpicture}
        \caption{$R_2$}
    \end{subfigure}
    \hfill
    \begin{subfigure}[b]{0.2\linewidth}
        \centering
        \begin{tikzpicture}[main/.style = {draw, circle},node distance=1cm and 2cm]
            \node[main] (1) {1};
            \node[main] (2)[below of=1] {2};
            \node[main] (3)[below of=2] {3};

            \node[main] (5)[right of=1] {1};
            \node[main] (6)[below of=5] {2};
            \node[main] (7)[below of=6] {3};
            \node[main] (8)[below of=7] {4};

            \draw (1) -- (7);
            \draw (2) -- (8);
            \draw (3) -- (5);

        \end{tikzpicture}
        \caption{$R_3$}
    \end{subfigure}
    \hfill
    \begin{subfigure}[b]{0.2\linewidth}
        \centering
        \begin{tikzpicture}[main/.style = {draw, circle},node distance=1cm and 2cm]
            \node[main] (1) {1};
            \node[main] (2)[below of=1] {2};
            \node[main] (3)[below of=2] {3};

            \node[main] (5)[right of=1] {1};
            \node[main] (6)[below of=5] {2};
            \node[main] (7)[below of=6] {3};
            \node[main] (8)[below of=7] {4};

            \draw (1) -- (8);
            \draw (2) -- (5);
            \draw (3) -- (6);

        \end{tikzpicture}
        \caption{$R_4$}
    \end{subfigure}

    \caption{Set of covering matchings \(\mathcal{R}\) for $M=3$ and $K=4$: $R_1$, $R_2$, $R_3$ and $R_4$ are depicted in (a), (b), (c) and (d).}
    \label{fig:rr}
\end{figure}

\subsection{Warm-up: The $\HCLA$ Algorithm}

As noted earlier, the $\HCLA$ algorithm treats each \(G \in \mathcal{G}\) as a super arm and maintains separate statistics: empirical mean \(\hat{\mu}_{G}(t)\), \(\KL\) index \(d_G(t)\), and counters \(N^\HCLA_G(t)\). At each time step \(t\), the central decision maker selects a matching \(G(t)\) with the maximum empirical mean \(\hat{\mu}_G(t)\) and another matching \(G'(t)\) with the maximum \(d_G(t)\) (Lines~\ref{line:empirical-G}--\ref{line:hint-G}). If \(d_{G'(t)}(t) > \hat{\mu}_{G(t)}(t)\), the decision maker chooses \(\ghint(t)\) as either \(G'(t)\) or a uniformly at random chosen matching \(\ghint_2(t)\), each with probability \(\nicefrac{1}{2}\). It then queries a hint from \(\ghint(t)\) and updates \(\hat{\mu}_{\ghint(t)}(t+1)\) based on the hint observation (Lines~\ref{l:line:klh}--\ref{line:end-query}). Finally, the decision maker pulls \(G(t)\), updates \(\hat{\mu}_{G(t)}(t+1)\) with the reward observation, and recalculates \(d_{G}(t+1)\) for all \(G \in \mathcal{G}\) (Lines~\ref{line:pull-G}--\ref{line:update-mu-kl-ucb}). The detailed pseudocode of the $\HCLA$ algorithm is provided in Algorithm~\ref{routin:hcla}.


\begin{algorithm}[tp]
    \caption{ Hinted Centralized Learning Algorithm ($\HCLA$) }\label{routin:hcla}
    
    \begin{algorithmic}[1]
    \Input  agent set \(\mathcal{M}\), arm set \(\mathcal{K}\), number of agents $M$, matching set $\mathcal{G}$, time horizon $T$
        \State \textbf{Initialization:} $t\gets 0$, $\hat{\mu}_{G}(t)\leftarrow 0$, $d_{G}(t)\leftarrow 0$, $N_{G}(t) = 0$ for each matching $G \in \mathcal{G}$
        \For{$ t \in  [T]$}
        \State $G(t) \leftarrow \argmax_{G \in \mathcal{G}} \hat{\mu}_{G}(t)$ \label{line:empirical-G}
        \State $G'(t) \leftarrow \argmax_{G \in \mathcal{G}} d_{G}(t)$ \label{line:hint-G}
        \If{$d_{G'(t)}(t) > \hat{\mu}_{G(t)}(t)$} \label{l:line:klh}
        \State $G^{\textit{hint}}_1(t)\gets G'(t)$
        \State $\ghint_2(t) \gets$  pick a matching out of $\mathcal{G}$ uniformly at random \label{line:hint-choose-2}
        \State $\ghint(t) \gets
            \begin{cases}
                \ghint_1(t), & \text{w.p. } \frac{1}{2} \\
                \ghint_2(t), & \text{w.p. } \frac{1}{2}
            \end{cases}$ \label{line:1-2-hint-choose}

        \State Each agent $m$ asks for a hint from $k^{\ghint(t)}_m$
        \State Update $\hat{\mu}_{G^{\textit{hint}}(t)}(t+1)$ according to the observation of \(\ghint(t)\)
        \EndIf\label{line:end-query}
        \State Each agent $m$ pulls $k^{G(t)}_m$ \label{line:pull-G}
        \State Update $\hat{\mu}_{G(t)}(t+1)$ and  according to the reward observation of \(G(t)\)
        \State Update $d_{G}(t+1)$ for all $G\in\mathcal{G}$\label{line:update-mu-kl-ucb}
        \EndFor
    \end{algorithmic}
\end{algorithm}

Next, we present the upper bounds for the time-independent regret and the number of queried hints $L^\HCLA(T)$ for the $\HCLA$ algorithm in Theorem \ref{thm:matirphi}. The detailed proof is presented in Appendix~\ref{proof:hcla}.
\begin{restatable}{theorem}{hclap}\label{thm:matirphi}
    For $0< \delta < \frac{\Delta^\text{match}_{\min}}{2}$ and policy $\pi = \HCLA$, the policy $\pi$ has
    \begin{enumerate}
        \item time-independent regret  $R^\pi(T) \in O\left(MK^{2M}\right)$,
        \item hint complexity $L^\pi(T) \in O\left(\frac{MK^M \log T }{\Delta^{\mathrm{kl}}}\right),$
    \end{enumerate}
    where $\Delta^{\mathrm{kl}} = \mathrm{kl}(U(G^*;\bm{\mu}) - \Delta^\text{match}_{\min} + \delta, U(G^*;\bm{\mu}) - \delta)$.
\end{restatable}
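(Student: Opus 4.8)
The plan is to treat $\HCLA$ as a $\KL$-UCB-style algorithm running over the "super-arm" space $\mathcal{G}$ of all $|\mathcal{G}|$ matchings (note $|\mathcal{G}| = K!/(K-M)! \le K^M$), and to charge regret only to rounds where the pulled matching $G(t) = \argmax_G \hat\mu_G(t)$ is suboptimal. The key structural observation is that a hint query to matching $G^{\text{hint}}(t)$ updates $\hat\mu_{G^{\text{hint}}(t)}$ and hence increments $N_{G^{\text{hint}}(t)}$, so every matching that ever looks promising under its $\KL$ index gets its confidence interval shrunk \emph{for free} (regret-wise). First I would show a deterministic "clean event" statement: with the choice $f(t) = \log t + 4\log\log t$, standard $\KL$-UCB concentration (Cappé et al.\ 2013, Theorem analogous to their Lemma 3) gives that on an event of probability $1 - O(1/t)$ (summing to an $O(|\mathcal{G}|)$ contribution, i.e. $O(K^M)$ per matching and one extra $K^M$ factor from the union over matchings, yielding the $K^{2M}$ scaling), we have $\mu_G \le d_G(t)$ for all $G$ and all $t$ simultaneously.

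Second, I would bound the number of suboptimal pulls. A round contributes to regret only if $G(t)\ne G^*$, which requires $\hat\mu_{G(t)}(t) \ge \hat\mu_{G^*}(t)$. On the clean event $\hat\mu_{G^*}(t)$ is close to $\mu_{G^*} = U(G^*;\bm\mu)$ once $N_{G^*}(t)$ is moderately large; and crucially, whenever $G(t)\ne G^*$ \emph{and} a hint was not triggered, the algorithm must have had $d_{G'(t)}(t) \le \hat\mu_{G(t)}(t)$, but since $d_{G^*}(t) \ge \mu_{G^*} > U(G(t);\bm\mu) + \Delta^{\text{match}}_{\min}$ on the clean event, this forces $\hat\mu_{G(t)}(t)$ to be inflated well above $U(G(t);\bm\mu)$ — which can happen only $O(1/\Delta^2)$-many times per matching before its own empirical concentration kicks in (here I would fold the $\delta$-slack into the constants, using $\delta < \Delta^{\text{match}}_{\min}/2$ so that $U(G^*;\bm\mu) - \delta$ still strictly dominates $U(G(t);\bm\mu) + \delta$). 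Alternatively, if a hint \emph{was} triggered, then $G(t)$ was still pulled but the extra exploration of the hinted matching pushes toward identifying $G^*$; I would show the number of such "hint-triggering but still-suboptimal-pull" rounds is also $O(K^M)$ per matching by the same pigeonhole on $N_G$. Summing $M \cdot \Delta^{\text{match}}_{\min} \cdot (\text{number of bad rounds})$ over the at most $K^M$ suboptimal matchings, together with the clean-event failure contribution, gives $R^\pi(T) \in O(MK^{2M})$, absorbing $\Delta$-dependence into the constant (the bound is stated $\Delta$-free, so one uses $\Delta^{\text{match}}_{\min} \cdot \frac{1}{(\Delta^{\text{match}}_{\min})^2}$-type cancellation carefully, or more simply bounds each bad round's regret by $M$ and the count by $O(K^{2M})$).

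Third, for the hint complexity $L^\pi(T)$: a hint is queried at round $t$ only if $d_{G'(t)}(t) > \hat\mu_{G(t)}(t)$. Once $N_{G^*}(t)$ is large enough that $\hat\mu_{G^*}(t) \ge U(G^*;\bm\mu) - \delta$ (clean event), the trigger can fire only if some matching $G$ has $d_G(t) > U(G^*;\bm\mu) - \delta$; since for suboptimal $G$ we have $\hat\mu_G(t) \to U(G;\bm\mu) \le U(G^*;\bm\mu) - \Delta^{\text{match}}_{\min}$, the $\KL$-index $d_G(t)$ exceeds $U(G^*;\bm\mu) - \delta$ only while $N_G(t) \lesssim f(T)/\mathrm{kl}\big(U(G^*;\bm\mu) - \Delta^{\text{match}}_{\min} + \delta,\, U(G^*;\bm\mu) - \delta\big) = f(T)/\Delta^{\mathrm{kl}}$ — the standard $\KL$-UCB sample-count inversion. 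Each such round either increments $N_{G^*}$ (at most $O(\log T/\Delta^{\mathrm{kl}})$ times before $G^*$ itself is pinned down, but $G^*$'s hint count is driven by it being $G'(t)$) or increments $N_G$ for a suboptimal $G$, and the latter happens at most $O(\log T/\Delta^{\mathrm{kl}})$ times per matching; multiplying by $|\mathcal{G}| \le K^M$ and the $M$ agents each issuing one hint gives $L^\pi(T) \in O\big(MK^M \log T / \Delta^{\mathrm{kl}}\big)$. The randomization between $G^{\text{hint}}_1$ and $G^{\text{hint}}_2$ only doubles constants. The main obstacle I anticipate is handling the interaction between the "pull" updates and the "hint" updates to a single shared counter $N_G(t)$ — one must argue that hints never \emph{hurt} (they only accelerate convergence of $d_G$ and $\hat\mu_G$) while ensuring the regret charge is never double-counted, and carefully verifying that the uniformly-random hint matching $G^{\text{hint}}_2(t)$ does not inflate hint complexity beyond the claimed bound (it is pulled only on rounds where a hint is already triggered, so it piggybacks on the $O(MK^M\log T/\Delta^{\mathrm{kl}})$ budget). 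A secondary subtlety is making the clean-event union bound over all $|\mathcal{G}|$ matchings and all $t$ sum to the claimed $O(K^{2M})$ rather than something larger, which is where the second power of $K^M$ enters.
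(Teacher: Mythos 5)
Your hint-complexity argument is essentially the paper's: for each suboptimal matching $G'$, invert the \KL{} index to show the trigger $d_{G'}(t)>U(G^*;\bm{\mu})-\delta$ can only fire while $N_{G'}(t)\lesssim f(T)/\Delta^{\mathrm{kl}}$, absorb the coin flip between $\ghint_1$ and $\ghint_2$ into a factor of $2$, and sum over the at most $K^M$ matchings and $M$ agents. That half is sound.

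The regret half has a genuine gap: you never correctly identify where the second factor of $K^M$ comes from, and neither mechanism you offer for it works. The clean-event union bound does not produce $K^{2M}$: the relevant concentration (the paper's Lemma~\ref{lemma:KLh}) gives $\sum_t \Pr[d_G(t)\le \mu_G]\le 15$ \emph{per matching}, so the union over $\mathcal{G}$ contributes only $O(K^M)$, and your claim of ``$O(K^M)$ per matching'' for this event has no basis. The real difficulty is the set of rounds where $G(t)\neq G^*$, a hint is triggered, $G(t)$ is well-estimated, but $\hat{\mu}_{G^*}(t)$ is off by more than $\delta$: since $G^*$ is not being pulled, its estimate improves only if $G^*$ happens to be the hinted matching. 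You gesture at this with ``the extra exploration of the hinted matching pushes toward identifying $G^*$'' and an unspecified ``pigeonhole on $N_G$,'' but the index-maximizing candidate $G'(t)$ need not be $G^*$; the argument only closes because the uniformly random fallback $\ghint_2(t)$ samples $G^*$ with probability at least $\nicefrac{1}{2K^M}$ on every such round. Feeding $c=\nicefrac{1}{2K^M}$ into the concentration lemma for arms observed with probability at least $c$ on a random set of rounds (the paper's Lemma~\ref{lemma:obs}, whose bound is $2c^{-1}(2c^{-1}+\varepsilon^{-2})$) is exactly what yields the $O\left(K^{2M}+K^M\delta^{-2}\right)$ count of such rounds and hence $R^{\HCLA}(T)\in O\left(MK^{2M}\right)$. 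Without isolating this event (the paper's set $\mathcal{D}$) and invoking that lemma with the $\nicefrac{1}{2K^M}$ sampling probability, your accounting does not reach the stated bound.
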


The regret of $\HCLA$ is time-independent, but the exponential constants in its regret and hint upper bounds are unsatisfactory. To address this, we propose a new algorithm called $\GPHCLA$, which provides a more refined analysis while maintaining the same hint inquiry and arm-pulling approach but using observations differently.

\subsection{The $\GPHCLA$ Algorithm}

We present $\GPHCLA$ in Algorithm~\ref{routin:hcla1}. The $\GPHCLA$ algorithm follows steps similar to $\HCLA$ to identify \(G(t)\) and \(G'(t)\). However, unlike $\HCLA$, the central decision maker maintains statistics \(\hat{\mu}_{m,k}(t)\) and \(d_{m,k}(t)\) for each edge \((m,k) \in \mathcal{M} \times \mathcal{K}\). It then defines \(d_G(t) \coloneqq \sum_{(m,k)\in G}d_{m,k}(t)\), with a slight abuse of notation, enabling the use of the \(\HALG\) algorithm \citep{kuhn1955hungarian}, which finds the matching with maximum additive utility in a weighted bipartite graph. Accordingly, $\GPHCLA$ utilizes \(\HALG\) to compute \(G(t)\) and \(G'(t)\), where the weights of the edges \((m,k)\) are \(\hat{\mu}_{m,k}(t)\) and \(d_{m,k}(t)\), respectively (Lines~\ref{line:hungarian-G1}--\ref{line:hungarian-hint-G1}). The decision maker employs a distinctly different approach from $\HCLA$ for selecting \(\ghint(t)\) and updating the statistics after each observation, whether from pulling an arm or querying a hint. As in $\HCLA$, the algorithm queries for a hint if \(U(G'(t);\bm{d}(t)) > U(G(t);\bm{\hat{\mu}}(t))\) (Line~\ref{line:hint-query1}). However, instead of querying a hint directly from \(G'(t)\), $\GPHCLA$ \emph{projects} \(G'(t)\) onto a matching in \(\mathcal{R}\), a set of pairwise edge-disjoint covering matchings. By projection, we mean mapping \(G'(t) \in \mathcal{G}\) to a matching in \(\mathcal{R}\), which contains \(K\) covering matchings and is exponentially smaller. During this step, the algorithm selects the matching \(\ghint_1(t)\) from \(\mathcal{R}\) that contains the edge \((m,k) \in G'(t)\) with the fewest \(N^{\GPHCLA}_{m,k}(t)\), and a second matching \(\ghint_2(t)\), chosen uniformly at random from \(\mathcal{R}\). The hint graph \(\ghint(t)\) is then set to either \(\ghint_1(t)\) or \(\ghint_2(t)\), each with probability \(\nicefrac{1}{2}\) (Lines~\ref{line:hint-choose-11}--\ref{line:end-hint-query1}).

\begin{algorithm}[tp]
    \caption{ Generalized Projection-based Hinted Centralized Learning Algorithm ($\GPHCLA$) }\label{routin:hcla1}
    \begin{algorithmic}[1]
        \Input agent set $\mathcal{M}$, arm set \(\mathcal{K}\), time horizon \(T\), 
        \State \textbf{Initialization:} $t\gets 0$,  $\hat{\bm \mu}_{m}(t)\leftarrow \bm 0$,  $\bm{d}_{m}(t)\leftarrow \bm 0$,  $\bm{N}^{\GPHCLA}_{m}(t)\gets \bm 0$ for each agent $m \in \mathcal{M}$

        \For{$t \in T$}

        \State $G(t) \leftarrow \HALG\left(\bm{\hat{\mu}}(t)\right)$
        \label{line:hungarian-G1}

        \State $G'(t) \leftarrow \HALG\left(\bm{d}(t)\right)$\label{line:hungarian-hint-G1}
        \If{$U(G'(t);\bm{d}(t)) > U(G(t);\bm{\hat{\mu}}(t))$}\label{line:hint-query1}
        \State $( m,k ) \leftarrow \argmin_{(m',k') \in G'(t)}N^{\GPHCLA}_{m',k'}\left(t\right)$\label{line:hint-choose-11}
        \State $\ghint_1(t) \leftarrow \{R\in \mathcal{R}: (m,k) \in R\}$ 
        \State $\ghint_2(t) \gets$  pick a matching out of $\mathcal{R}$ uniformly at random
        \State $\ghint(t) \gets
            \begin{cases}
                \ghint_1(t), & \text{w.p. } \frac{1}{2} \\
                \ghint_2(t), & \text{w.p. } \frac{1}{2}
            \end{cases}$
        \State Each agent $m$ asks for a hint from $k^{\ghint(t)}_m$
        \label{line:end-hint-query1}
        \EndIf
        \State Each agent $m$ pulls $k^{G(t)}_m$
        \State Update $\bm{\hat{\mu}}_m(t+1)$, \(\bm N^{\GPHCLA}_{m}\left(t+1\right)\), and $\bm {d}_m(t+1)$ for each agent $m$ according to new observations

        \EndFor
    \end{algorithmic}
\end{algorithm}


In Theorem \ref{thm:c_whhmmab}, we provide the bound for the regret and the asymptotically optimal bound for the number of hints. The detailed proof is presented in Appendix~\ref{proof:gphcla}.

\begin{restatable}{theorem}{gphclap}\label{thm:c_whhmmab}
    For $0< \delta < \frac{\Delta^\text{match}_{\min}}{2}$ and policy $\pi = \GPHCLA$, the policy $\pi$ has
    \begin{enumerate}
        \item time-independent regret $R^\pi(T) \in O\left(M^4K\right)$ regret,
        \item hint complexity $L^{\pi}(T) \in O\left(\frac{MK \log T}{\Delta^{\mathrm{kl}}}\right)$,
    \end{enumerate}
where $\Delta^{\mathrm{kl}} = \mathrm{kl}(U(G^*;\bm{\mu}) - \Delta^\text{match}_{\min} + \delta, U(G^*;\bm{\mu}) - \delta)$.
\end{restatable}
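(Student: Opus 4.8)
The plan is to rerun the $\KL$-style bookkeeping behind the $\HCLA$ bound (Theorem~\ref{thm:matirphi}), but entirely at the level of the $MK$ edges rather than the exponentially many matchings, exploiting that $G(t)=\HALG(\bm{\hat\mu}(t))$ and $G'(t)=\HALG(\bm d(t))$ are additive over edges and that the hint pool $\mathcal R$ partitions the edge set. I condition on the clean event $\mathcal E=\{\,d_{m,k}(t)\ge\mu_{m,k}\text{ for all }(m,k)\text{ and all }t\,\}$: by the standard $\KL$ deviation bound of \citet{cappe2013kullback} applied to each edge, $\sum_t\Pr(d_{m,k}(t)<\mu_{m,k})=O(1)$, so a union bound over the $MK$ edges makes $\mathcal E^{c}$ contribute only $O(M^2K)$ to the regret and a negligible amount to the hint count. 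On $\mathcal E$ we always have $U(G'(t);\bm d(t))\ge U(G^{*};\bm d(t))\ge U(G^{*};\bm\mu)$ and $U(G(t);\bm{\hat\mu}(t))\ge U(G^{*};\bm{\hat\mu}(t))$.

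For the regret I bound the expected number of rounds with $G(t)\neq G^{*}$, each costing at most $M$. When $G(t)\neq G^{*}$, combining $U(G(t);\bm{\hat\mu}(t))\ge U(G^{*};\bm{\hat\mu}(t))$ with $U(G(t);\bm\mu)\le U(G^{*};\bm\mu)-\Delta^{\text{match}}_{\min}$ forces either (i) some edge of $G(t)$ to be over-estimated by $\ge\Delta^{\text{match}}_{\min}/(2M)$, or (ii) some edge of $G^{*}$ to be under-estimated by $\ge\Delta^{\text{match}}_{\min}/(2M)$. In case (i) the culprit edge is \emph{pulled} this round, so by the usual coupling with i.i.d.\ samples plus a Hoeffding bound and a geometric sum, each edge is ``pulled while over-estimated'' only $O(M^{2}/(\Delta^{\text{match}}_{\min})^{2})$ times in expectation; summing over $MK$ edges and multiplying by $M$ gives $O(M^{4}K)$. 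In case (ii), note first that if (i) fails then $U(G(t);\bm{\hat\mu}(t))<U(G(t);\bm\mu)+\tfrac12\Delta^{\text{match}}_{\min}<U(G^{*};\bm\mu)\le U(G'(t);\bm d(t))$, so a hint is queried; since $\ghint(t)=\ghint_{2}(t)$ with probability $\tfrac12$ and $\ghint_{2}(t)$ is uniform over the $K$ disjoint covering matchings, the under-estimated edge of $G^{*}$ gets a fresh observation with probability at least $1/(2K)$, and an optional-stopping argument bounds such rounds by $O(KM^{2}/(\Delta^{\text{match}}_{\min})^{2})$ per edge of $G^{*}$; over the $M$ edges of $G^{*}$ and a factor $M$ for the per-round cost this is again $O(M^{4}K)$. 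Adding the three contributions gives the claimed regret.

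For the hint complexity, a hint is queried only if $U(G'(t);\bm d(t))>U(G(t);\bm{\hat\mu}(t))\ge U(G^{*};\bm{\hat\mu}(t))$. Since only $O(M^3K/(\Delta^{\text{match}}_{\min})^2)$ rounds have $G(t)\neq G^{*}$ in expectation, for large $t$ the matching $G^{*}$ has been pulled $\Theta(t)$ times, so after an initial $O(\log T/\Delta^{\mathrm{kl}})$ rounds we have $U(G^{*};\bm{\hat\mu}(t))\ge U(G^{*};\bm\mu)-\delta$; hence a hint query forces $\sum_{(m,k)\in G'(t)}d_{m,k}(t)>U(G^{*};\bm\mu)-\delta$. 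If $G'(t)\neq G^{*}$ this means the edges of $G'(t)$ carry total $\KL$-inflation exceeding $\Delta^{\text{match}}_{\min}-\delta$, which by the definition of the $\KL$ index pins the \emph{least-pulled} edge of $G'(t)$ to a count of order $\log T/\Delta^{\mathrm{kl}}$, and this is exactly the edge that the projection routes $\ghint_{1}(t)$ to, so with probability $\tfrac12$ that edge advances. Because the $K$ covering matchings partition the $MK$ edges and any fixed edge can be the least-pulled culprit only $O(\log T/\Delta^{\mathrm{kl}})$ times, a counting argument caps the number of hint-query rounds at $O(K\log T/\Delta^{\mathrm{kl}})$; multiplying by the $M$ agents each querying one hint per such round yields $L^{\pi}(T)=O(MK\log T/\Delta^{\mathrm{kl}})$.

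The genuine obstacle is the last step: going from ``$U(G'(t);\bm d(t))$ large'' to a useful per-edge count bound naively loses a factor of $M$ (a single edge's inflation is only a $1/M$ share of the matching's), so obtaining the clean $O(MK\log T/\Delta^{\mathrm{kl}})$ bound stated \emph{with the matching-level} kl-gap $\Delta^{\mathrm{kl}}$ needs a combinatorial-bandit-style potential argument that charges each hint to the least-pulled edge of $G'(t)$ and exploits the fact that $\mathcal R$ is a partition to show no edge is charged more than $O(\log T/\Delta^{\mathrm{kl}})$ times; this is precisely where $\GPHCLA$'s projection improves on $\GHCLA$. Two easier loose ends remain: dispatching the warm-up rounds before $U(G^{*};\bm{\hat\mu}(t))$ concentrates (absorbed after choosing $\delta=\Theta(\Delta^{\text{match}}_{\min})$) and the $\mathcal E^{c}$ mass. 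The regret half, by contrast, should be routine once the case split and the ``a hint is queried whenever $G^{*}$ is under-estimated'' observation are established.
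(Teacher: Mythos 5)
Your proposal reconstructs the paper's own argument almost exactly: your clean event, "over-estimated pulled edge," and "under-estimated edge of $G^*$ caught because a hint is then forced and the uniform covering matching observes it with probability $1/(2K)$" correspond to the paper's event sets $\mathcal{C}$, $\mathcal{B}$, and $\mathcal{D}$ (Lemmas bounding them at the edge level), and your hint-complexity split into a concentration term plus an under-sampled term charged to the least-pulled edge of $G'(t)$ — with the union bound taken over the $K$ edge-disjoint covering matchings rather than over $\mathcal{G}$ — is precisely the paper's $\mathcal{E}_{1,G'}/\mathcal{E}_{2,G'}$ decomposition and projection argument. The "genuine obstacle" you flag (converting the matching-level $\Delta^{\mathrm{kl}}$ into a per-edge count without losing a factor of $M$) is resolved in the paper exactly by the charging scheme you describe, together with an auxiliary KL-scaling inequality, so your sketch is correct and follows essentially the same route.
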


Theorem \ref{thm:c_whhmmab} highlights the impact of maintaining edge-level statistics in $\GPHCLA$, reducing the exponential time-independent regret bound to a polynomial. It also shows that projection in hint inquiries minimizes hints, achieving asymptotic optimality (matching the lower bound given by Theorem~\ref{thm:hint-lower-bound} in the Appendix). We study $\GHCLA$, an extension of $\HCLA$, which updates statistics like $\GPHCLA$ but skips projection, using $\ghint(t)$ as in $\HCLA$. Theorem \ref{lemma:hcla2} (Appendix) shows $\GHCLA$ can have up to \(M\)-times higher hint complexity than $\GPHCLA$, highlighting the importance of projection. Experiments (Appendix \ref{App:OofR}, Figure \ref{fig1:chints}) confirm $\GPHCLA$ outperforms $\GHCLA$ on small problem instances. The exact tightness of this gap remains open due to the complexity of the $\KL$ index.

\section{Algorithms for Decentralized Hinted Heterogeneous Multi-Armed Bandits} \label{routine:decgame}\label{sec:decentralized-algorithm}

We study the \emph{Decentralized Hinted Heterogeneous Multi-Armed Bandits} (\(\texttt{D}\_\HHMMAB\)s), where no central decision maker coordinates agents to avoid collisions while learning the optimal matching \(G^*\). Theorem \ref{obs:msg} demonstrates that sub-linear regret is unattainable in a decentralized setup without agents sharing statistics, making communication essential in \(\texttt{D}\_\HHMMAB\)s. To enable communication, agents intentionally collide to exchange statistics like \(\bm{\hat{\mu}}\)s, while non-colliding agents continue pulling their assigned arms \(k^{G}_m\) from the matching \(G \in \mathcal{G}\) without interference~\citep{shi2021heterogeneous,wang2020optimal}. Communication order is determined by unique agent ranks, as discussed below.

\begin{restatable}[Necessity of Communication]{theorem}{obsp}\label{obs:msg}
    No decentralized learning algorithm can achieve sub-linear instance-independent regret in $\HHMMAB$s without communication.
\end{restatable}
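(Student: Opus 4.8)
The plan is to prove this by a direct information-theoretic argument: without communication, each agent's action at round $t$ depends only on its own private history (its own past pulls, own past rewards, own past hints), so I will construct two problem instances that are statistically indistinguishable from \emph{every individual agent's} point of view yet have different optimal matchings, forcing any algorithm to incur linear regret on at least one of them. Concretely, I would fix $M$ and $K$ and build instance $\bm{\mu}$ and a perturbed instance $\bm{\mu}'$ that differ only in entries $\mu_{m,k}$ for arms $k$ that agent $m$ rarely pulls under the given algorithm; the trick is that hints do not rescue the agents here, because a hint only reveals the reward of the single arm an agent queries, and I can force the discriminating coordinates to be exactly the ones no agent (in expectation) queries often on instance $\bm{\mu}$. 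Since the regret is required to be \emph{instance-independent} (i.e., a bound like $o(T)$ that holds uniformly over all instances), it suffices to exhibit one pair of instances on which no single decentralized no-communication algorithm can be simultaneously sublinear.

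The key steps, in order: (i) Restrict attention to a small hard family, e.g. $M=2$, $K=3$ (or general $M$, $K=M+1$), with a base matrix $\bm{\mu}$ whose optimal matching $G^*$ is unique and has gap $\Delta$ bounded below by a constant. (ii) Given any decentralized algorithm $\pi$ with no communication, let $N_m$ denote the expected number of rounds (up to $T$) in which agent $m$ either pulls or hints arm $k^\star$, where $k^\star$ is a designated "swing" arm; by a pigeonhole/averaging argument over a set of candidate swing configurations, there is a choice for which $\sum_m N_m = O(1)$ or at least $o(T)$ under instance $\bm{\mu}$. (iii) Define $\bm{\mu}'$ by raising the relevant entries just enough (by $2\Delta$, say) that the optimal matching flips to some $G' \neq G^*$; by construction $\bm{\mu}'$ also has a unique optimum with constant gap. (iv) Apply a change-of-measure / Pinsker-type bound (the standard Bretagnolle--Huber or Le Cam two-point argument, applied to the product measure over all agents' private observation streams, which factorizes precisely because there is no communication) to show that the law of the full interaction history under $\pi$ on $\bm{\mu}$ and on $\bm{\mu}'$ has total variation bounded away from $1$ unless $\sum_m N_m = \Omega(T)$. (v) Conclude that on at least one of $\bm{\mu}, \bm{\mu}'$ the algorithm plays a suboptimal matching a constant fraction of the time, giving $R^\pi(T) = \Omega(\Delta T) = \Omega(T)$, contradicting sublinear instance-independent regret.

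The subtle point — and the main obstacle — is handling the hints correctly in the change of measure. Unlike the classical bandit lower bound where only pulled arms generate observations, here an agent can additionally observe a free hint each round, so the ``information budget'' per agent per round is two arm-observations rather than one; I must make sure the swing arm $k^\star$ can be made one that is neither frequently pulled \emph{nor} frequently hinted, which is why I restrict to a family where the number of arms an agent could profitably probe is small (this is where $K$ close to $M$, or a careful choice of the candidate-swing set of size $\Theta(K)$, matters). A second delicate issue is that ``no communication'' must be formalized so that the joint history measure genuinely factorizes across agents — collisions could in principle leak information (agent $m$ sees a zero reward and infers someone collided), so the clean statement is that we compare two instances under which the \emph{collision pattern distribution is identical} (e.g. perturb only reward means of arms, never which matching is played by the algorithm's own logic on the indistinguishable coordinates), so that the zero-reward events carry no discriminating signal. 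Once those two modeling points are pinned down, the rest is the routine two-point/KL computation and I would not belabor it.
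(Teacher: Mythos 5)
Your high-level opening — two instances with different optimal matchings that some agent cannot tell apart — is the right idea, but the concrete plan (steps (ii)–(iv)) goes down a path that cannot work in this model. The pigeonhole/change-of-measure step requires a ``swing'' coordinate $\mu_{m,k^\star}$ that agent $m$ observes only $O(1)$ times (for a constant-gap perturbation) so that the KL between the two observation laws stays $O(1)$. But hints completely decouple observations from regret: an agent can query a round-robin hint every round at zero cost and thereby observe \emph{every} entry of its own row $\Omega(T/K)$ times, regardless of what it pulls. So no within-row coordinate can be made statistically indistinguishable at constant gap, and your step (ii) claim that some configuration has $\sum_m N_m = o(T)$ is simply unachievable against an algorithm that hints uniformly. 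Even if you shrink the perturbation to $\Theta(\sqrt{K/T})$ to keep the KL bounded, the resulting regret lower bound is only $\Omega(\sqrt{KT})$, which is sublinear and does not contradict the hypothesis — so this route cannot establish the theorem as stated.

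The paper's proof avoids this entirely by making the indistinguishability \emph{structural} rather than statistical: it takes $M=K=2$ and two instances $A$, $B$ whose rows for agent $m_1$ are \emph{identical} ($\langle \tfrac12+\epsilon,\tfrac12-\epsilon\rangle$ in both) and which differ only in agent $m_2$'s row ($\langle 2\epsilon,\epsilon\rangle$ vs.\ $\langle 4\epsilon,\epsilon\rangle$), chosen so that the optimal matching flips (in $A$ agent $m_1$ should win the conflict on arm $k_1$; in $B$ it should yield). No amount of pulling or hinting by $m_1$ can distinguish $A$ from $B$, because the difference lives in another agent's row, and without communication the conflict on $k_1$ must be resolved by fixed local policies that behave identically under both instances — hence linear regret on at least one. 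Your closing worry about collisions leaking information is legitimate and is in fact the one place where the paper's own argument is informal (agent $m_2$ \emph{can} distinguish $A$ from $B$, so one must argue its conflict behavior cannot adapt correctly without knowing $m_1$'s magnitudes); but fixing that requires sharpening the paper's locality argument, not a two-point KL computation. If you want to salvage your write-up, replace steps (ii)–(iv) with: choose the two instances to agree on one agent's entire row, and argue that agent's policy induces the same (distribution over) action sequences under both.
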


Building on Theorem~\ref{obs:msg}, we propose a cooperative learning framework for the \emph{Hinted Decentralized Explore-then-Commit} ($\HDETC$) and \emph{Elimination-Based Hinted Decentralized Explore-then-Commit} ($\EBHDETC$) algorithms. These divide time into \emph{Initialization}, \emph{Exploration}, and \emph{Communication} phases, where agents request hints in a round-robin manner until meeting a \emph{stopping condition}, after which they transition to the \emph{Exploitation} phase with no further hints or communication.

\subsection{Decentralized Learning Framework}\label{routine:framework}
We first outline the common framework for the $\HDETC$ and $\EBHDETC$ algorithms. Both divide the \(T\) decision-making rounds into alternating exploration and communication phases. A counter \(\rho\) tracks exploration epochs, and \(N_{m,k}^\rho\) records the number of times agent \(m\) has pulled or hinted at arm \(k\) by the start of epoch \(\rho\). The decentralized learning framework for $\HHMMAB$ consists of four phases:

\begin{description}[leftmargin=0pt]

    \item[Initialization phase:] Assigning unique ranks among the agents. The detailed rank assignment procedure and analysis follows~\citet{wang2020optimal} (detailed in Appendix \ref{routine:rank}).
    
    \item[Exploration phase:] Agents use the gathered statistics to identify the best matching \(G^\rho\) at the start of each epoch \(\rho\) using $\HALG$ algorithm. They then commit to their corresponding arm \(k^{G^\rho}_m\) for \(K\) rounds until the epoch ends. At the end of epoch \(\rho\), agents signal the communication phase by creating collisions on arms pulled by other agents.

    \item[Communication phase:] Before each exploration epoch \(\rho\), agents transmit their statistics \(\bm{\hat{\mu}}\) to others, denoted as \(\bm{\hat{\mu}}^\rho\). This communication is realized via the intentional collision signals, where a collision represents a '1' and its absence a '0' information bits \cite{boursier2019sic}, with agents relying on their unique ranks to identify senders and receivers. Since \(\bm{\hat{\mu}}^\rho\) often contains decimal values, agents transmit a quantized version, \(\bm{\tilde{\mu}}^\rho\), optimized for binary communication at the cost of minor information loss. To further reduce communication length and minimize information loss, agents employ the \emph{Differential} communication~\cite{shi2021heterogeneous}, sending only the differences \(\bm{\tilde{\delta}}^\rho = \bm{\tilde{\mu}}^\rho - \bm{\tilde{\mu}}^{\rho-1}\) at the start of epoch \(\rho\). This method reduces communication-induced regret through $t^\rho_\text{com} \in O(M^2K)$ communication rounds. It enables agents to synchronize actions and exchange critical information efficiently via the $\operatorname{Send2All(\bm{\tilde{\delta}}^\rho)}$ routine, detailed in Appendix~\ref{routine:com}.

    \item[Exploitation phase:]
   Agents stop communicating, exploring, and querying for hints after a specific time \(T^\pi_0\), which depends on the policy $\pi$ being used. After that, they agree on a matching \(G'^*\) and commit to it for the rest of the time.

\end{description}

Unlike~\citet{shi2021heterogeneous} employing exponentially increasing exploration epoch lengths summing to \( O(\log T) \) epochs, our approach simplifies this by assigning each epoch the same length \( K \), resulting in potentially \( O\left(\frac{T}{K}\right) \) epochs. However, with stop conditions, our algorithms reduce the number of epochs and transition to the \textbf{exploitation phase} while maintaining \( O(\log T) \) exploration epochs.

\paragraph{Hint inquiry mechanism}
$\HDETC$ and $\EBHDETC$ employ a round-robin approach for querying hints, setting \( G^\text{hint}(t) = R_{(t\%K)+1} \), and follow an Explore-then-Commit exploration style. By evenly distributing hint queries over \( K \) rounds, this method reduces communication costs and prevents time-dependent regret. In comparison, the decentralized $\HCLA$ queries hints on demand, requiring constant communication and potentially incurring linear regret.

\paragraph{Regret decomposition}
We decompose the regret as follows to analyze its components separately:  
\[
R^{\pi}(T) = R^{\pi_{\text{rank}}}(T) + R^{\pi_{\text{exp}}}(T) + R^{\pi_{\text{com}}}(T),
\]  
where \( R^{\pi_{\text{rank}}}(T) \), \( R^{\pi_{\text{exp}}}(T) \), and \( R^{\pi_{\text{com}}}(T) \) represent the regret due to `rank assignment,' `exploration,' and `communication,' respectively, under policy \(\pi\).

Under this framework, we introduce the $\HDETC$ and $\EBHDETC$ algorithms in the following sections.

\subsection{Warm-Up: The $\HDETC$ algorithm}
The $\HDETC$ algorithm builds on the learning framework in Section \ref{routine:framework}, extending the Explore-then-Commit (\texttt{ETC}) method in bandits literature. To follow this method, agents uniformly query hints for covering matchings \( R \in \mathcal{R} \) , Lines~\ref{line:epo3}--\ref{line:epo4}, until time step \( T^\HDETC_0 \), determined by the assumed knowledge of \(\Delta^\text{match}_{\min}\).

At \( T^\HDETC_0 \) where $\rho$ is the index of the last exploration epoch, agents run \(\HALG\left(\bm{\tilde{\mu}}^\rho\right)\) to identify the matching \( G'^* \), which they commit to for all \( t > T^\HDETC_0 \), i.e., \( G(t) = G'^* \) (Lines~\ref{line:epo10}--\ref{line:epo11}). Theorem \ref{thm:hdetc} establishes that with a properly chosen \( T^\HDETC_0 \), which depends on \(\Delta^\text{match}_{\min}\), the algorithm achieves time-independent exploration regret while ensuring asymptotically optimal hint and communication usage. Detailed proofs are provided in Appendix~\ref{proof:hdetc}.

\begin{figure*}[t]
        \centering
        \begin{subfigure}[b]{0.99\textwidth}
            \centering\includegraphics[width=0.5\textwidth]{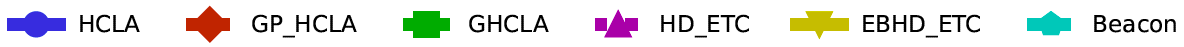}
        \end{subfigure}
        \\
        \setcounter{subfigure}{0}
        \begin{subfigure}[b]{0.23\textwidth}
            \centering
            \includegraphics[width=\textwidth]{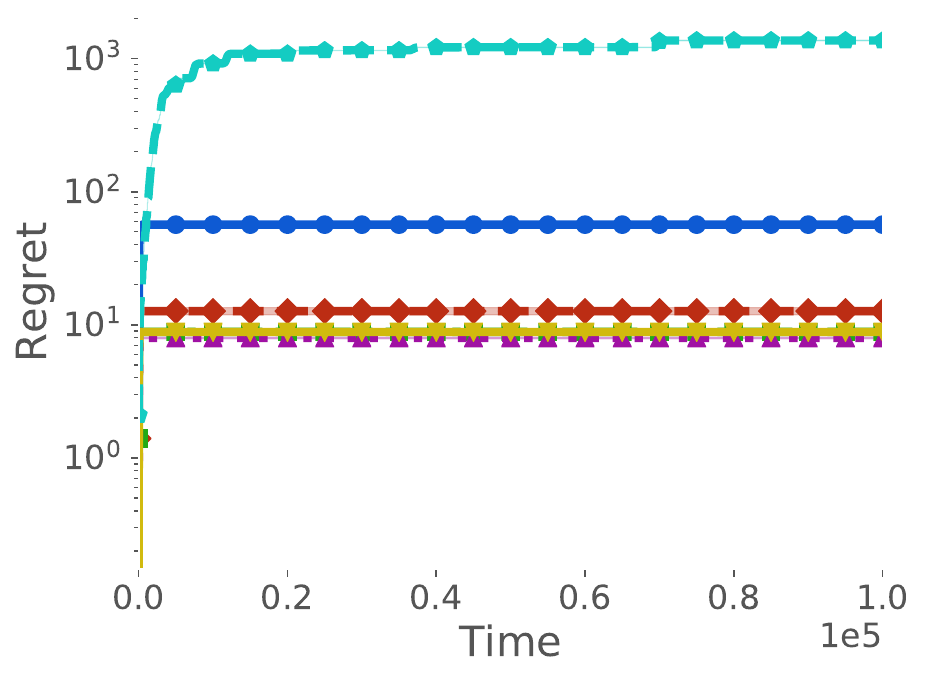}
            \caption[Regret]%
            {{\small Exploration Regret}}      
            \label{fig:exp}
        \end{subfigure}
        \hfill
        \begin{subfigure}[b]{0.23\textwidth}  
            \centering 
            \includegraphics[width=\textwidth]{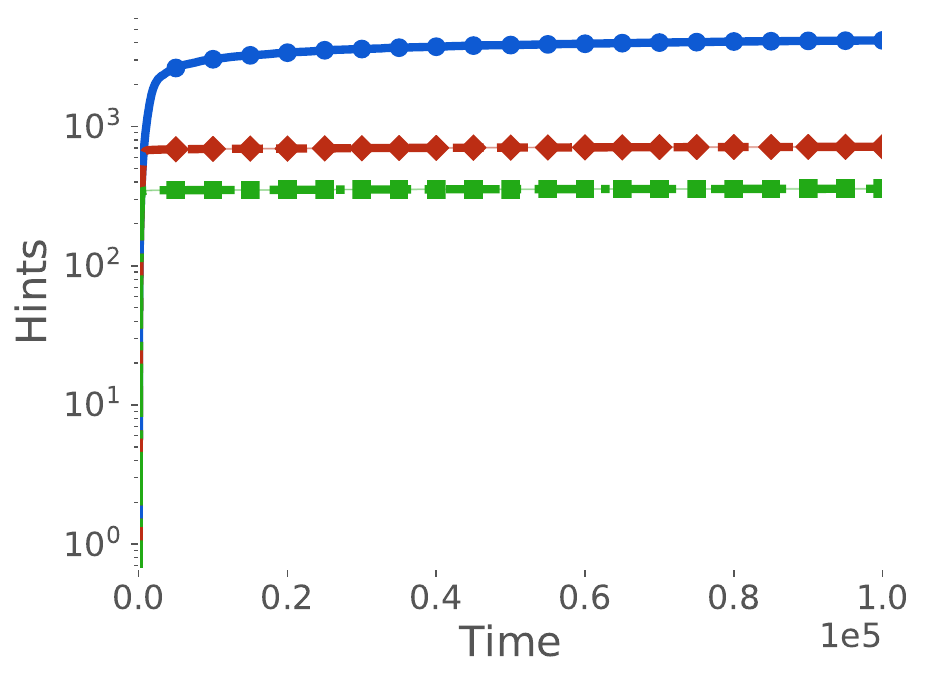}
            \caption[]%
            {{\small Centralized Hint Complexity}}    
            \label{fig:chints}
        \end{subfigure}
        \hfill
        \begin{subfigure}[b]{0.23\textwidth}   
            \centering 
            \includegraphics[width=\textwidth]{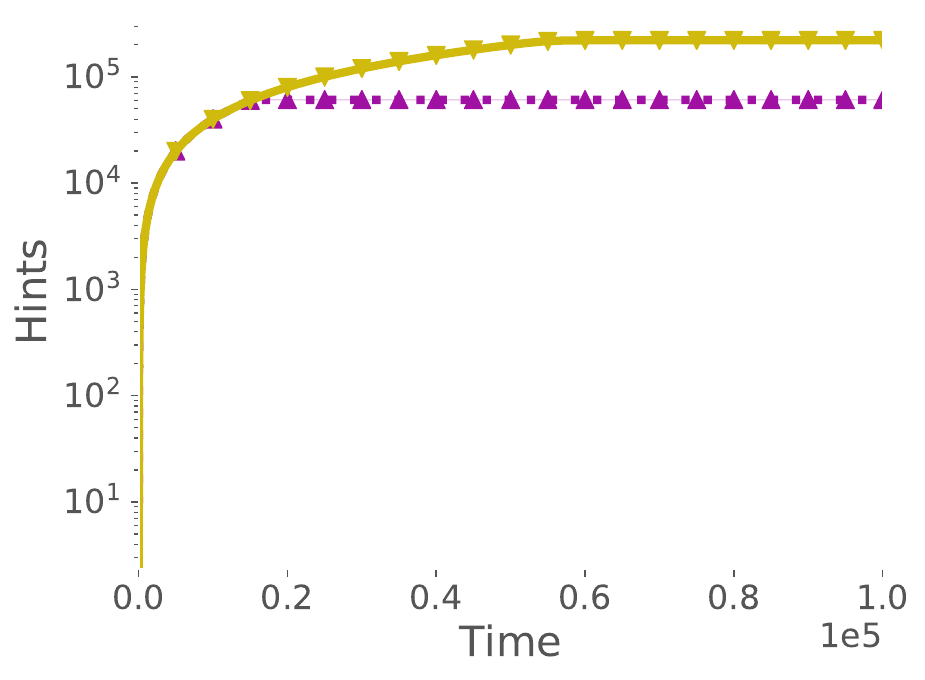}
            \caption[]%
            {{\small Decentral. Hint Complexity}}    
            \label{fig:dhints}
        \end{subfigure}
        \hfill
        \begin{subfigure}[b]{0.23\textwidth}   
            \centering 
            \includegraphics[width=\textwidth]{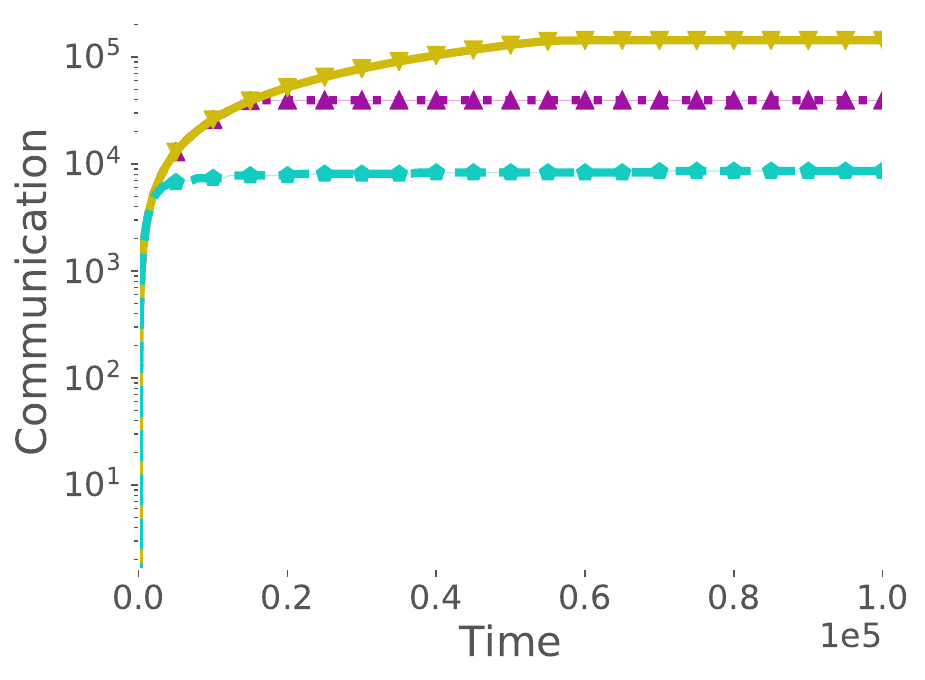}
            \caption[]%
            {{\small Communication Regret}}    
            \label{fig:dcom}
        \end{subfigure}
        \caption[ ]
        {{\small Figure \ref{fig:exp} plots $R^\pi(T)$ and $R^{\pi_{\text{exp}}}(T)$ for both centralized and decentralized setups. Figures \ref{fig:chints} and \ref{fig:dhints} reflects the $L^\pi(T)$ for centralized and decentralized algorithms respectively and Figure \ref{fig:dcom} shows $R^{\pi_{\text{com}}}(T)$ for decentralized algorithms. }}
        \label{fig:figss}
    \end{figure*}

\begin{algorithm}[t]
    \caption{ Hinted Decentralized Explore then Commit ($\HDETC$) : agent $m$}\label{routin:hdetc}
    \begin{algorithmic}[1]
        \Input agent \(m\), agent set $\mathcal{M}$, arm set \(\mathcal{K}\), number of agents \(M\), time horizon $T$, time threshold for hint inquiry $T^\HDETC_0$
        \State \textbf{Initialization:} $t \gets 0$, $\rho \gets 0$,
         $\bm{\hat{\mu}}_{m}(t)\gets 0$, $ \bm{N}^\HDETC_{m}(t) \gets 0,$
         $\bm{\tilde{\mu}}_{m'}^\rho \gets 0$ for each $m' \in \mathcal{M}$

        \While{$t < T^\HDETC_0$}
        \For{each epoch \(\rho\) }

        \State $G^\rho \gets \HALG\left(\bm{\tilde{\mu}}^\rho\right)$\label{line:epo2}
        \State $ t_0 \gets t$ 
        \LeftComment{Exploration Phase}
        \For{\(t \le t_0 + K\)}\label{line:hd-etc-reuse-begin}
        \State Pull the arm $k^{G^{\rho}}_m$
        {
        \footnotesize
        \State Update $\hat{\mu}_{m,k^{G\rho}_m} (t+1)$ and  $N^\HDETC_{m,k^{G\rho}_m} (t+1)$}
        \LeftComment{Hint Inquiry}
        \State $\ghint(t) \gets R_{(t\%K)+1}$\label{line:epo3}
        \State Ask for a hint from $k^{\ghint(t)}_m$ 
         {
        \footnotesize
        \State Update $\hat{\mu}_{m,k^{\ghint(t)}_m} (t+1)$ and $N^\HDETC_{m,k^{\ghint(t)}_m} (t+1)$} \label{line:epo4}

        \State $t \leftarrow t + 1$ 
        \EndFor
        \LeftComment{Communication Phase}
        \For{ $k \in [K]$ }\label{line:epo5}

        \State $\tilde{\delta}_{m, k}^{\rho+1} \gets \tilde{\mu}^{\rho+1}_{m,k} - \tilde{\mu}^{\rho}_{m,k}$
        \State $\operatorname{Send2All(\tilde{\delta}_{m, k}^{\rho+1})}$

        \EndFor
        
        \State $t \gets t + t_{\text{com}}^{\rho+1}$ \label{line:hd-etc-reuse-end}
        \State $\rho \gets \rho + 1$
        \EndFor
        \EndWhile
        \LeftComment{Exploitation Phase}
        \State $G'^* \gets  \HALG\left(\bm{\tilde{\mu}}^\rho\right)$ \label{line:epo10}
        \While{$t \leq T$}
        \State Pull the arm $k^{G'^*}_m$ \label{line:epo11}
        \EndWhile
    \end{algorithmic}
\end{algorithm}

{
\footnotesize
\begin{restatable}{theorem}{hdetcp} \label{thm:hdetc}
     Assuming knowing the minimum gap \(\Delta^\text{match}_{\min}\), for the policy $\pi=\HDETC$ and $T^\pi_0 = \frac{9M^2K \log(2MT)}{(\Delta^\text{match}_{\min})^2}$, the policy $\pi$ has
    \begin{enumerate}
        \item exploration regret $R^{\pi_{\text{exp}}}(T) \in O\left(M^3K^2\right)$.
         \item hint complexity $L^{\pi}(T) \in O\left(MT^\pi_0\right)$
        \item communication regret $R^{\pi_{\text{com}}}(T) \in O\left(M^2T^\pi_0\right)$.
       
    \end{enumerate}

\end{restatable}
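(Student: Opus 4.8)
## Proof Proposal for Theorem~\ref{thm:hdetc}

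The plan is to analyze the three regret components separately, leveraging the fact that the stopping time $T_0^\pi$ is deterministic and chosen precisely so that, with high probability, all agents agree on the optimal matching $G^*$ by the time exploitation begins. The backbone of the argument is a concentration bound: after $n$ pulls/hints of each edge, the empirical mean $\hat\mu_{m,k}$ (and its quantized version $\tilde\mu_{m,k}$, which differs by at most the quantization error) is within roughly $\sqrt{\log(2MT)/n}$ of $\mu_{m,k}$. Since each epoch contributes $K$ rounds of exploration (one pull) plus one hint per covering matching per agent — i.e., each edge is pulled or hinted a constant number of times per epoch — after $\rho$ epochs each edge has $\Theta(\rho)$ samples. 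By the choice $T_0^\pi = \Theta(M^2K\log(2MT)/(\Delta_{\min}^{\text{match}})^2)$, which corresponds to $\rho_0 = \Theta(M^2\log(2MT)/(\Delta_{\min}^{\text{match}})^2)$ epochs (accounting for the $O(M^2K)$ communication rounds interleaved), a union bound over the $MK$ edges gives that with probability $1 - O(1/T)$ every edge estimate is accurate to within $\Delta_{\min}^{\text{match}}/(2M)$, hence $U(G;\bm{\tilde\mu}^\rho)$ is within $\Delta_{\min}^{\text{match}}/2$ of $U(G;\bm\mu)$ for all matchings $G$ simultaneously, so $\HALG(\bm{\tilde\mu}^\rho) = G^*$.

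For the exploration regret $R^{\pi_{\text{exp}}}(T)$: in each exploration epoch the agents commit to $G^\rho = \HALG(\bm{\tilde\mu}^\rho)$ for $K$ rounds, incurring at most $\Delta_{\max} \cdot K \le M K$ regret per epoch (using that per-round regret is at most $M$). The number of exploration epochs before $T_0^\pi$ is $\rho_0 = O(M^2\log(2MT)/(\Delta_{\min}^{\text{match}})^2)$; however, since we want a time-independent bound, we must argue more carefully — after $O(M^2/(\Delta_{\min}^{\text{match}})^2 \cdot \log(\cdot))$ epochs the estimates are good, but the naive count still has a $\log T$. The resolution (as in standard ETC analyses) is that the epochs where $G^\rho \ne G^*$ are bounded: the probability that edge estimates are still bad at epoch $\rho$ decays exponentially in $\rho$, so $\sum_\rho \Pr[G^\rho \ne G^*] \cdot MK = \sum_\rho MK \cdot MK \exp(-\Omega(\rho (\Delta_{\min}^{\text{match}})^2/M^2))$, which sums to $O(M^3K^2/(\Delta_{\min}^{\text{match}})^2) = O(M^3K^2)$ treating $\Delta_{\min}^{\text{match}}$ as constant — matching the claim. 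I must also add the rank-assignment regret $R^{\pi_{\text{rank}}}(T) = O(\text{poly}(M,K))$ from Appendix~\ref{routine:rank}, which is absorbed.

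For the hint complexity: each exploration epoch queries exactly one hint per agent per round over $K$ rounds, i.e., $MK$ hints per epoch, and hint querying stops at $T_0^\pi$. Since the total number of rounds spent in exploration-plus-communication up to $T_0^\pi$ is $T_0^\pi$ and each epoch has length $K + t_{\text{com}}^\rho = \Theta(K + M^2K) = \Theta(M^2K)$... actually more simply, the number of hint-querying rounds is at most $T_0^\pi$ and there are $M$ agents, giving $L^\pi(T) = O(M T_0^\pi)$ directly. For the communication regret: there are $O(\log T)$-many — more precisely $\rho_0 = O(T_0^\pi/(M^2K))$ — communication phases, each of length $t_{\text{com}}^\rho = O(M^2K)$ with per-round regret $O(M)$, giving $R^{\pi_{\text{com}}}(T) = O(M \cdot \rho_0 \cdot M^2 K) = O(M^3 K \rho_0) = O(M \cdot T_0^\pi)$; obtaining the stated $O(M^2 T_0^\pi)$ is loose but valid — alternatively, bounding per communication round by regret $\le M$ and noting there are at most $M T_0^\pi / (\text{something})$... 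I will state it as at most $M$ per round times the number of communication rounds, which is at most a constant fraction of $T_0^\pi$ scaled appropriately, yielding $O(M^2 T_0^\pi)$.

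The main obstacle I anticipate is the quantization/differential-communication error analysis: the agents exchange $\bm{\tilde\delta}^\rho = \bm{\tilde\mu}^\rho - \bm{\tilde\mu}^{\rho-1}$ in binary, so I must track that the accumulated quantization error over $\rho_0$ epochs stays below $\Delta_{\min}^{\text{match}}/(4M)$ per edge — this requires the quantization precision to scale like $\log(\rho_0)$ or $\log\log T$ bits, which then feeds back into the communication length $t_{\text{com}}^\rho$ and must be consistent with the claimed $O(M^2K)$ bound. I will invoke the differential-communication machinery of~\citet{shi2021heterogeneous} (and the $\operatorname{Send2All}$ routine of Appendix~\ref{routine:com}) to handle this, checking only that the error budget closes. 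A secondary subtlety is the synchronization argument — showing all agents share the same epoch counter $\rho$ and the same $\bm{\tilde\mu}^\rho$ despite decentralization — which again follows from the collision-signaling protocol and unique ranks established in the Initialization phase.
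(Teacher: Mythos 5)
Your proposal is sound in its overall architecture and would establish all three claims; two of the three components (hint complexity and communication regret, both trivial round-counting; and the post-$T_0^\pi$ exploitation argument via Hoeffding plus a union bound over the edges of $G'^*\cup G^*$, splitting the error into an estimation part and a quantization part controlled by the differential-communication guarantee $\tilde{\mu}^\rho_{m,k}-\hat{\mu}^\rho_{m,k}\le\sqrt{1/\rho}$) coincide with the paper's proof, which proves exactly your high-probability claim as Lemma~\ref{lemma:main-hdetc} and concludes $R^{\pi_{\text{exp}}}(T_0^\pi{:}T)\le MT\Pr(G'^*\neq G^*)\le M$. Where you genuinely diverge is the pre-$T_0^\pi$ exploration regret: you bound it by the classical ETC summation $\sum_\rho MK\cdot\Pr[G^\rho\neq G^*]$ with $\Pr[G^\rho\neq G^*]\le MK\exp(-\Omega(\rho(\Delta^\text{match}_{\min})^2/M^2))$ (after discarding the first $O(M^2/(\Delta^\text{match}_{\min})^2)$ epochs where the deterministic quantization error $\sqrt{1/\rho}$ alone exceeds the per-edge budget), whereas the paper instead reuses its event-based machinery: Lemma~\ref{lemma:deba} decomposes the bad rounds into the sets $\mathcal{B}',\mathcal{C}',\mathcal{D}'$ of Definition~\ref{def:sets} (redefined on $\bm{\tilde{\mu}}$) and bounds each via Lemmas~\ref{lemma:proof22}--\ref{lemma:proof42} and the concentration Lemma~\ref{lemma:obs}. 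Your route is more elementary and self-contained and makes the role of the gap and the quantization budget explicit; the paper's route buys uniformity with the centralized analysis and avoids re-deriving concentration epoch by epoch. Both land on a time-independent bound of the same order: note that your geometric sum actually evaluates to $O(M^4K^2/(\Delta^\text{match}_{\min})^2)$ rather than the $O(M^3K^2)$ you wrote (the series $\sum_\rho e^{-c\rho}$ contributes a factor $M^2/(\Delta^\text{match}_{\min})^2$ on top of $MK\cdot MK$), which in fact matches the $8M^4K^2\delta^{-2}$ term in the paper's own Lemma~\ref{lemma:deba}; the theorem's stated $O(M^3K^2)$ suppresses the gap dependence in both treatments. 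The one place to be careful in a full write-up is the epoch accounting: the stated $T_0^\pi$ must be interpreted so that each edge receives $T_0^\pi/K$ hint observations (i.e., the communication rounds are not charged against the hinting budget), as both you and Lemma~\ref{lemma:main-hdetc} implicitly assume.
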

}
Although $\HDETC$ performs well, it assumes agents know \(\Delta^\text{match}_{\min}\), an unrealistic requirement in many settings. To overcome this, we propose $\EBHDETC$, an elimination-based algorithm that achieves similar bounds without relying on the minimum gap. The simplicity of the Explore-then-Commit structure in $\HDETC$ necessitates knowledge of \(\Delta^\text{match}_{\min}\) to determine the stopping time \(T^\HDETC_0\). Removing this assumption requires a more advanced algorithm design. In the next section, we introduce an elimination-based approach within the decentralized learning framework that operates without this gap assumption.

\subsection{The $\EBHDETC$ algorithm }

In $\EBHDETC$, agents transition into the exploitation phase differently compared to$\HDETC$. Accordingly, each agent maintains a set of active edges $\mathcal{C}^\rho$, which includes edges likely to be in $G^\rho$ for the upcoming epoch $\rho$, initially $\mathcal{C}^0 = \{(m,k) \in \mathcal{M}\times \mathcal{K}\}$. The set $\mathcal{C}^\rho$ maintained by each agent is the same across the others, determined by $\bm{\tilde{\mu}}$. Agents enter the exploitation phase when \(\abs{\mathcal{C}^\rho} = M\) (Line~\ref{line:epo101}). Edge removal from \(\mathcal{C}^\rho\) is guided by the error variable \(\epsilon^\rho\), defined as $\epsilon^\rho \coloneqq \sqrt{\frac{\log\left(\frac{2}{\eta}\right)}{\rho}}$, with $\eta = \sqrt{\frac{2}{MT^2}}$. At the beginning of each epoch, agents determine \( G^\rho \) and check whether there exists a matching \( G^\rho_{(m,k)} \neq G^\rho \) such that  
\begin{align}
U(G^\rho, \bm{\tilde{\mu}}^\rho) - U(G^\rho_{(m,k)}, \bm{\tilde{\mu}}^\rho) \leq 4M \bm{\epsilon}^\rho, \label{ineq:removal} 
\end{align} 
for all \((m,k) \in \mathcal{C}^\rho\). Each \( G^\rho_{(m,k)} \) is constructed as  
\( 
G_{(m,k)}^\rho \coloneqq \{(m,k)\} \cup \HALG\left(\bm{\tilde{\mu}}^\rho_{-m,-k}\right),  
\) 
where \(\bm{\tilde{\mu}}^\rho_{-m,-k}\) is the matrix \(\bm{\tilde{\mu}}^\rho\) with row \( m \) and column \( k \) removed. Agents update \(\mathcal{C}^{\rho+1}\) by removing edges \((m,k)\) for which \( G^\rho_{(m,k)} \) does not satisfy \eqref{ineq:removal} (Lines~\ref{line:epo120}--\ref{line:epo121}).This process continues until the stopping condition \(\abs{\mathcal{C}^\rho} = M\) is met. At that point, agents transition to the exploitation phase, committing to \( G'^* \), which corresponds to either \(\HALG\left(\bm{\tilde{\mu}}^\rho\right)\) or the matching formed by the edges in \(\mathcal{C}^\rho\) (both refer to the same matching) for the remainder of the rounds (Lines~\ref{line:epo130}--\ref{line:epo131}).

\begin{algorithm}[tb]
    \caption{ Elimination-Based Hinted Decentralized Explore then Commit ($\EBHDETC$) : agent $m$}\label{routin:ebhdetc}
    \begin{algorithmic}[1]
         \Input agent \(m\), agent set $\mathcal{M}$, arm set \(\mathcal{K}\), number of agents \(M\), time horizon $T$
        \State \textbf{Initialization:} $t \gets 0$, $\rho \gets 0$,
         $\bm{\hat{\mu}}_{m}(t)\gets 0$, $ \bm{N}^\HDETC_{m}(t) \gets 0,$
         $\bm{\tilde{\mu}}_{m'}^\rho \gets 0$ for each $m' \in \mathcal{M}$, $\mathcal{C}^\rho \gets \{(m,k) \in \mathcal{M} \times \mathcal{K}\}$


        \While{$\abs{\mathcal{C}^{\rho}} > M$} \label{line:epo101}

        \State $G^\rho \gets \HALG\left(\bm{\tilde{\mu}}^\rho\right)$

        \State Execute Lines~\ref{line:hd-etc-reuse-begin}-\ref{line:hd-etc-reuse-end} of Algorithm~\ref{routin:hdetc}



         \State $ \mathcal{C}^{\rho+1} \gets \emptyset$
         \LeftComment{Updating Active Edges Set}
        \For{ $(m,k) \in \mathcal{C}^{\rho}$} \label{line:epo120}
        \State $G_{(m,k)}^\rho \gets (m,k) \cup \HALG\left(\bm{\tilde{\mu}}^\rho_{-m,-k}\right)$
        \If{$U(G^\rho; \bm{\tilde{\mu}}^\rho) - U(G^\rho_{(m,k)}; \bm{\tilde{\mu}}^\rho) \leq 4M \bm{\epsilon}^\rho$}
        \State $ \mathcal{C}^{\rho+1} \gets \mathcal{C}^{\rho+1} \cup (m,k)$ \label{line:epo121}
        \EndIf
        \EndFor
        \State $\rho \gets \rho + 1$

        \EndWhile
        \LeftComment{Exploitation Phase}
        \State $G'^* \gets  \HALG\left(\bm{\tilde{\mu}}^\rho\right)$ \label{line:epo130}\Comment{or $G'^* \gets \mathcal{C}^\rho$}
        \While{$t \leq T$}
        \State Pull the arm $k^{G'^*}_m$ \label{line:epo131}
        \EndWhile
    \end{algorithmic}
\end{algorithm}

{
    \footnotesize
\begin{restatable}{theorem}{ebhdetcp} \label{thm:ebhdetc}
    The policy $\pi=\EBHDETC$ has
    
    \begin{enumerate}
        \item exploration regret $R^{\pi_{\text{exp}}}(T) \in O\left(M^3K^2\right)$.
        \item hint complexity $L^{\pi}(T) \in O\left(\frac{M^3K\log\left(T\sqrt{2M}\right)}{\left(\Delta^\text{match}_{\min}\right)^2}\right)$
        \item communication regret  
        \(R^{\pi_{\text{com}}}(T) \in O\left(\frac{M^4K\log\left(T\sqrt{2M}\right)}{\left(\Delta^\text{match}_{\min}\right)^2}\right).\)
    \end{enumerate}
    
\end{restatable}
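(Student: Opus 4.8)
I will run the whole analysis on a high‑probability \emph{clean event} $\mathcal{E}$ and dismiss its complement trivially. Let $\mathcal{E}$ be the event that at every epoch $\rho$ and every edge $(m,k)$ the synchronized estimate satisfies $|\tilde{\mu}^{\rho}_{m,k}-\mu_{m,k}|\le\epsilon^{\rho}$. Because the round‑robin schedule $\ghint(t)=R_{(t\%K)+1}$ hints each covering matching — hence each edge — exactly once per epoch, we have $N^{\rho}_{m,k}\ge\rho$, so a Hoeffding bound of width $\epsilon^{\rho}=\sqrt{\log(2/\eta)/\rho}$ (the rounding error of $\bm{\tilde{\mu}}^{\rho}$ relative to the raw empirical mean $\bm{\hat{\mu}}$ being driven into a lower‑order term, which is what the factor $4M$ in \eqref{ineq:removal} leaves room for) has per‑edge‑per‑epoch failure probability $O(\eta^{2})$; a union bound over the $\le MK$ edges and $\le T/K$ epochs with $\eta=\sqrt{2/(MT^{2})}$ gives $\Pr[\mathcal{E}^{c}]=O(1/T)$, which adds only $O(1)$ to every quantity below. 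Note $\log(2/\eta)=\log(T\sqrt{2M})$, the logarithmic factor in the statement. On $\mathcal{E}$, for every matching $G$ we have $|U(G;\bm{\tilde{\mu}}^{\rho})-U(G;\bm{\mu})|\le M\epsilon^{\rho}$, which I use repeatedly.

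\textbf{Correctness.} First I show that on $\mathcal{E}$ no edge of $G^{*}$ is ever removed, so $G^{*}\subseteq\mathcal{C}^{\rho}$ for all $\rho$; hence the moment $|\mathcal{C}^{\rho}|=M$ is reached we must have $\mathcal{C}^{\rho}=G^{*}$, and the algorithm commits to $G'^{*}=G^{*}$, incurring no exploitation regret. Indeed, for $(m,k)\in G^{*}$ the matching $G^{\rho}_{(m,k)}$ is $\bm{\tilde{\mu}}^{\rho}$‑optimal among matchings through $(m,k)$, so $U(G^{\rho}_{(m,k)};\bm{\tilde{\mu}}^{\rho})\ge U(G^{*};\bm{\tilde{\mu}}^{\rho})$; combining this with $U(G^{\rho};\bm{\mu})\le U(G^{*};\bm{\mu})$ and two applications of the concentration inequality yields $U(G^{\rho};\bm{\tilde{\mu}}^{\rho})-U(G^{\rho}_{(m,k)};\bm{\tilde{\mu}}^{\rho})\le 4M\epsilon^{\rho}$, i.e.\ the retention test \eqref{ineq:removal} holds.

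\textbf{Epoch count, hints, communication regret.} Conversely, for $(m,k)\notin G^{*}$ set $\Delta_{(m,k)}:=U(G^{*};\bm{\mu})-\max_{G\ni(m,k)}U(G;\bm{\mu})$, which is $\ge\Delta^{\text{match}}_{\min}>0$ since the best matching through $(m,k)$ is not $G^{*}$. A mirror‑image computation shows that as soon as $8M\epsilon^{\rho}<\Delta_{(m,k)}$, i.e.\ $\rho=\Omega(M^{2}\log(2/\eta)/\Delta_{(m,k)}^{2})$, the test \eqref{ineq:removal} fails for $(m,k)$ and it is deleted; taking the worst surviving edge, $|\mathcal{C}^{\rho}|=M$ is reached after $\rho^{*}=O(M^{2}\log(T\sqrt{2M})/(\Delta^{\text{match}}_{\min})^{2})$ epochs on $\mathcal{E}$. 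From this: (i) \emph{hint complexity} — $M$ hints per round, $K$ rounds per exploration epoch, $\rho^{*}$ epochs give $L^{\pi}(T)=MK\rho^{*}=O(M^{3}K\log(T\sqrt{2M})/(\Delta^{\text{match}}_{\min})^{2})$; (ii) \emph{communication regret} — each of the $\rho^{*}$ epochs carries a \texttt{Send2All} block of $t^{\rho}_{\text{com}}=O(M^{2}K)$ rounds with $O(1)$ loss per round (the non‑signalling agents keep pulling $G^{\rho}$, whose $2M\epsilon^{\rho}$ suboptimality sums to a lower‑order term), so $R^{\pi_{\text{com}}}(T)=O(M^{2}K)\,\rho^{*}=O(M^{4}K\log(T\sqrt{2M})/(\Delta^{\text{match}}_{\min})^{2})$, plus the instance‑independent rank‑assignment regret imported from the initialization of \citet{wang2020optimal}.

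\textbf{Exploration regret, and the expected obstacle.} In exploration epoch $\rho$ the agents pull $G^{\rho}=\HALG(\bm{\tilde{\mu}}^{\rho})$ with per‑round loss $U(G^{*};\bm{\mu})-U(G^{\rho};\bm{\mu})\le 2M\epsilon^{\rho}$ on $\mathcal{E}$; moreover whenever $G^{\rho}\ne G^{*}$ this loss is at least $\Delta^{\text{match}}_{\min}$ by the minimum‑gap assumption, which forces $\epsilon^{\rho}\ge\Delta^{\text{match}}_{\min}/(2M)$ and so confines all sub‑optimal epochs to a bounded prefix. The hardest step is converting the resulting sum $K\sum_{\rho}2M\epsilon^{\rho}$ into the claimed \emph{time‑ and gap‑independent} $O(M^{3}K^{2})$ rather than the naive $O(M^{2}K\log(T\sqrt{2M})/\Delta^{\text{match}}_{\min})$: this requires playing the upper bound $2M\epsilon^{\rho}$ against the lower bound $\Delta^{\text{match}}_{\min}$ on the loss of any non‑optimal matching, and using that once $\mathcal{C}^{\rho}$ has shrunk the Hungarian output $G^{\rho}$ ranges only over matchings supported on active edges, so that the number and total cost of epochs with $G^{\rho}\ne G^{*}$ are bounded by a polynomial in $M,K$ alone. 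A secondary difficulty is the quantization / differential‑communication bookkeeping — bounding the accumulated rounding error inside $\bm{\tilde{\mu}}^{\rho}$ and the length $t^{\rho}_{\text{com}}$ — since both feed into $\epsilon^{\rho}$ and into the communication‑regret constant.
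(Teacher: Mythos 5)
Your handling of items 2 and 3, and of the correctness of the committed matching, follows essentially the paper's route: you define a high-probability clean event on the quantized estimates (the paper's $\mathcal{H}_T$ in Lemma~\ref{lemma:ht}), show that no edge of $G^*$ is eliminated under it so that $|\mathcal{C}^{\rho}|=M$ forces $\mathcal{C}^{\rho}=G^*$ (Lemma~\ref{lemma:matching}), and bound the last hinting epoch by $\rho'=O\bigl(M^2\log(T\sqrt{2M})/(\Delta^\text{match}_{\min})^2\bigr)$ by comparing $4M\epsilon^{\rho}$ with the gap (Lemma~\ref{lemma:finale}); multiplying by $MK$ hints and $O(M^2K)$ communication rounds per epoch gives items 2 and 3 exactly as in the paper.

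The genuine gap is item 1, which you explicitly leave open, and the obstacle is not merely technical: the clean-event route cannot by itself deliver a time-independent exploration regret. On your event $\mathcal{E}$, an epoch may have $G^{\rho}\neq G^*$ whenever $2M\epsilon^{\rho}\geq\Delta^\text{match}_{\min}$, i.e., for up to $\Theta\bigl(M^2\log(T\sqrt{2M})/(\Delta^\text{match}_{\min})^2\bigr)$ epochs, because $\epsilon^{\rho}=\sqrt{\log(2/\eta)/\rho}$ with $\eta=\sqrt{2/(MT^2)}$ carries a $\log T$ factor; whether you charge $MK$ per such epoch or sum $K\sum_{\rho}2M\epsilon^{\rho}$, the result still contains $\log T$ (and a gap dependence), not the claimed $O(M^3K^2)$. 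The paper closes this by \emph{not} using the clean event for the exploration regret: it reruns the event-based decomposition (Lemma~\ref{lemma:deba}), writing $R^{\pi_\text{exp}}(T)\leq M\bigl(\E[\abs{\mathcal{B}'}]+\E[\abs{\mathcal{C}'}]+\E[\abs{\mathcal{D}'}]\bigr)$ and bounding each term by the time-uniform counting bound of Lemma~\ref{lemma:obs} — the expected number of rounds over the entire horizon at which $\abs{\hat{\mu}_{m,k}-\mu_{m,k}}\geq\delta/(2M)$ is $O(1+M^2\delta^{-2})$ per edge, independent of $T$ — together with Lemma~\ref{lemma:KLh} and inequality \eqref{ineq:comerr} to absorb the quantization error. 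That counting argument is the missing idea; without it (or an equivalent time-uniform deviation count), your sketch of item 1 does not close.
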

}

Theorem \ref{thm:ebhdetc} proves that $\EBHDETC$ achieves mostly similar but slightly weaker bounds in terms of coefficients compared to $\HDETC$, even without the assumption of knowing \(\Delta^\text{match}_{\min}\). A proof is given in Appendix~\ref{proof:ebhdetc}.

\section{Experiments}
We executed the algorithms $\HCLA$, $\GPHCLA$, $\GHCLA$, $\HDETC$, and $\EBHDETC$ with \( M = 4 \), \( K = 4 \), and \(\Delta^\text{match}_{\min} \leq 0.18\), averaging regret and hint complexity over 50 replications for \( 10^5 \) rounds. We also benchmarked the \texttt{Beacon} algorithm~\citep{shi2021heterogeneous} and compared its regret \( R^{{\texttt{Beacon}}_\text{exp}}(T) \) with ours in Fig.~\ref{fig:exp}, showing that our hint-augmented algorithms significantly reduce regret. Figs.~\ref{fig:chints} and \ref{fig:dhints} demonstrate that our centralized algorithms achieve better hint-query efficiency than $\HDETC$ and $\EBHDETC$ while maintaining similar regret. Fig.~\ref{fig:dcom} highlights \texttt{Beacon}'s advantage in communication regret due to its exponentially growing epoch lengths. These results confirm the effectiveness of centralized algorithms in balancing regret and hint complexity. Further extended experiments are presented in Appendix~\ref{sec:experiment}.

\section{Conclusion}
In this paper,we studied how hints enhance learning in the \HHMMAB\ problem, with heterogeneous rewards and collisions. We proposed both centralized and decentralized algorithms, analyzing their regret and hint usage. An interesting future work is extending these methods to two-sided matching markets, where both sides have preferences, and ties are broken due to them while collision occurs. This could enable the design of decentralized algorithms without communication to learn matchings with the same regret and hint optimality.


\newpage
\section*{Acknowledgments}
The work of Mohammad Hajiesmaili is supported by NSF CNS-2325956, CAREER-2045641, CPS-2136199, CNS-2102963, and CNS-2106299. The work of Jinhang Zuo was supported by CityU 9610706. Xuchuang Wang is the corresponding author.
\bibliography{abb,aaai25}

\begin{thebibliography}{28}
\providecommand{\natexlab}[1]{#1}

\bibitem[{Achiam et~al.(2023)Achiam, Adler, Agarwal, Ahmad, Akkaya, Aleman, Almeida, Altenschmidt, Altman, Anadkat et~al.}]{achiam2023gpt}
Achiam, J.; Adler, S.; Agarwal, S.; Ahmad, L.; Akkaya, I.; Aleman, F.~L.; Almeida, D.; Altenschmidt, J.; Altman, S.; Anadkat, S.; et~al. 2023.
\newblock Gpt-4 technical report.
\newblock \emph{arXiv preprint arXiv:2303.08774}.

\bibitem[{Anandkumar et~al.(2011)Anandkumar, Michael, Tang, and Swami}]{anandkumar2011distributed}
Anandkumar, A.; Michael, N.; Tang, A.~K.; and Swami, A. 2011.
\newblock Distributed algorithms for learning and cognitive medium access with logarithmic regret.
\newblock \emph{IEEE Journal on Selected Areas in Communications}, 29(4): 731--745.

\bibitem[{Bamas, Maggiori, and Svensson(2020)}]{bamas2020primal}
Bamas, E.; Maggiori, A.; and Svensson, O. 2020.
\newblock The primal-dual method for learning augmented algorithms.
\newblock \emph{Advances in Neural Information Processing Systems}, 33: 20083--20094.

\bibitem[{Bhaskara et~al.(2023)Bhaskara, Cutkosky, Kumar, and Purohit}]{bhaskara2023bandit}
Bhaskara, A.; Cutkosky, A.; Kumar, R.; and Purohit, M. 2023.
\newblock Bandit online linear optimization with hints and queries.
\newblock In \emph{International Conference on Machine Learning}, 2313--2336. PMLR.

\bibitem[{Bistritz and Leshem(2018)}]{bistritz2018distributed}
Bistritz, I.; and Leshem, A. 2018.
\newblock Distributed multi-player bandits-a game of thrones approach.
\newblock \emph{Advances in Neural Information Processing Systems}, 31.

\bibitem[{Boursier and Perchet(2019)}]{boursier2019sic}
Boursier, E.; and Perchet, V. 2019.
\newblock SIC-MMAB: Synchronisation involves communication in multiplayer multi-armed bandits.
\newblock \emph{Advances in Neural Information Processing Systems}, 32.

\bibitem[{Capp{\'e} et~al.(2013)Capp{\'e}, Garivier, Maillard, Munos, and Stoltz}]{cappe2013kullback}
Capp{\'e}, O.; Garivier, A.; Maillard, O.-A.; Munos, R.; and Stoltz, G. 2013.
\newblock Kullback-Leibler upper confidence bounds for optimal sequential allocation.
\newblock \emph{The Annals of Statistics}, 1516--1541.

\bibitem[{Combes et~al.(2015)Combes, Magureanu, Proutiere, and Laroche}]{combes2015learning}
Combes, R.; Magureanu, S.; Proutiere, A.; and Laroche, C. 2015.
\newblock Learning to rank: Regret lower bounds and efficient algorithms.
\newblock In \emph{Proceedings of the 2015 International Conference on Measurement and Modeling of Computer Systems (ACM SIGMETRICS)}, 231--244.

\bibitem[{Fran{\c{c}}ois-Lavet et~al.(2018)Fran{\c{c}}ois-Lavet, Henderson, Islam, Bellemare, Pineau et~al.}]{franccois2018introduction}
Fran{\c{c}}ois-Lavet, V.; Henderson, P.; Islam, R.; Bellemare, M.~G.; Pineau, J.; et~al. 2018.
\newblock An introduction to deep reinforcement learning.
\newblock \emph{Foundations and Trends{\textregistered} in Machine Learning}, 11(3-4): 219--354.

\bibitem[{Garivier, M{\'e}nard, and Stoltz(2019)}]{garivier2019explore}
Garivier, A.; M{\'e}nard, P.; and Stoltz, G. 2019.
\newblock Explore first, exploit next: The true shape of regret in bandit problems.
\newblock \emph{Mathematics of Operations Research}, 44(2): 377--399.

\bibitem[{Jouini et~al.(2009)Jouini, Ernst, Moy, and Palicot}]{jouini2009multi}
Jouini, W.; Ernst, D.; Moy, C.; and Palicot, J. 2009.
\newblock Multi-armed bandit based policies for cognitive radio's decision making issues.
\newblock In \emph{2009 3rd International Conference on Signals, Circuits and Systems (SCS)}, 1--6. IEEE.

\bibitem[{Jouini et~al.(2010)Jouini, Ernst, Moy, and Palicot}]{jouini2010upper}
Jouini, W.; Ernst, D.; Moy, C.; and Palicot, J. 2010.
\newblock Upper confidence bound based decision making strategies and dynamic spectrum access.
\newblock In \emph{2010 IEEE International Conference on Communications}, 1--5. IEEE.

\bibitem[{Krizhevsky, Sutskever, and Hinton(2017)}]{krizhevsky2017imagenet}
Krizhevsky, A.; Sutskever, I.; and Hinton, G.~E. 2017.
\newblock ImageNet classification with deep convolutional neural networks.
\newblock \emph{Communications of the ACM}, 60(6): 84--90.

\bibitem[{Kuhn(1955)}]{kuhn1955hungarian}
Kuhn, H.~W. 1955.
\newblock The Hungarian method for the assignment problem.
\newblock \emph{Naval research logistics quarterly}, 2(1-2): 83--97.

\bibitem[{Lai and Robbins(1985)}]{lai1985asymptotically}
Lai, T.~L.; and Robbins, H. 1985.
\newblock Asymptotically efficient adaptive allocation rules.
\newblock \emph{Advances in applied mathematics}, 6(1): 4--22.

\bibitem[{Lindst{\aa}hl, Proutiere, and Johnsson(2020)}]{lindstaahl2020predictive}
Lindst{\aa}hl, S.; Proutiere, A.; and Johnsson, A. 2020.
\newblock Predictive bandits.
\newblock In \emph{2020 59th IEEE Conference on Decision and Control (CDC)}, 1170--1176. IEEE.

\bibitem[{Liu and Zhao(2010)}]{liu2010distributed}
Liu, K.; and Zhao, Q. 2010.
\newblock Distributed learning in multi-armed bandit with multiple players.
\newblock \emph{IEEE transactions on signal processing}, 58(11): 5667--5681.

\bibitem[{Lykouris and Vassilvitskii(2021)}]{lykouris2021competitive}
Lykouris, T.; and Vassilvitskii, S. 2021.
\newblock Competitive caching with machine learned advice.
\newblock \emph{Journal of the ACM (JACM)}, 68(4): 1--25.

\bibitem[{Mehrabian et~al.(2020)Mehrabian, Boursier, Kaufmann, and Perchet}]{mehrabian2020practical}
Mehrabian, A.; Boursier, E.; Kaufmann, E.; and Perchet, V. 2020.
\newblock A practical algorithm for multiplayer bandits when arm means vary among players.
\newblock In \emph{Proceedings of the 23th International Conference on Artificial Intelligence and Statistics (AISTATS)}, 1211--1221. PMLR.

\bibitem[{Mitzenmacher and Vassilvitskii(2022)}]{mitzenmacher2022algorithms}
Mitzenmacher, M.; and Vassilvitskii, S. 2022.
\newblock Algorithms with predictions.
\newblock \emph{Communications of the ACM}, 65(7): 33--35.

\bibitem[{Purohit, Svitkina, and Kumar(2018)}]{purohit2018improving}
Purohit, M.; Svitkina, Z.; and Kumar, R. 2018.
\newblock Improving online algorithms via ML predictions.
\newblock \emph{Advances in Neural Information Processing Systems}, 31.

\bibitem[{Rosenski, Shamir, and Szlak(2016)}]{rosenski2016multi}
Rosenski, J.; Shamir, O.; and Szlak, L. 2016.
\newblock Multi-player bandits--a musical chairs approach.
\newblock In \emph{International Conference on Machine Learning}, 155--163. PMLR.

\bibitem[{Shi et~al.(2021)Shi, Xiong, Shen, and Yang}]{shi2021heterogeneous}
Shi, C.; Xiong, W.; Shen, C.; and Yang, J. 2021.
\newblock Heterogeneous multi-player multi-armed bandits: Closing the gap and generalization.
\newblock In \emph{Proceedings of the 34th Annual Conference on Neural Information Processing Systems (NeurIPS)}, 22392--22404.

\bibitem[{Wang et~al.(2020)Wang, Proutiere, Ariu, Jedra, and Russo}]{wang2020optimal}
Wang, P.-A.; Proutiere, A.; Ariu, K.; Jedra, Y.; and Russo, A. 2020.
\newblock Optimal algorithms for multiplayer multi-armed bandits.
\newblock In \emph{Proceedings of the 2020 International Conference on Artificial Intelligence and Statistics (AISTATS)}, 4120--4129.

\bibitem[{Wu et~al.(2021)Wu, Chen, Ni, and Wang}]{wu2021multi}
Wu, B.; Chen, T.; Ni, W.; and Wang, X. 2021.
\newblock Multi-agent multi-armed bandit learning for online management of edge-assisted computing.
\newblock \emph{IEEE Transactions on Communications}, 69(12): 8188--8199.

\bibitem[{Xu and Tao(2020)}]{xu2020decentralized}
Xu, X.; and Tao, M. 2020.
\newblock Decentralized multi-agent multi-armed bandit learning with calibration for multi-cell caching.
\newblock \emph{IEEE Transactions on Communications}, 69(4): 2457--2472.

\bibitem[{Xu, Tao, and Shen(2020)}]{xu2020collaborative}
Xu, X.; Tao, M.; and Shen, C. 2020.
\newblock Collaborative multi-agent multi-armed bandit learning for small-cell caching.
\newblock \emph{IEEE Transactions on Wireless Communications}, 19(4): 2570--2585.

\bibitem[{Yun et~al.(2018)Yun, Proutiere, Ahn, Shin, and Yi}]{yun2018multi}
Yun, D.; Proutiere, A.; Ahn, S.; Shin, J.; and Yi, Y. 2018.
\newblock Multi-armed bandit with additional observations.
\newblock \emph{Proceedings of the ACM on Measurement and Analysis of Computing Systems}, 2(1): 1--22.

\end{thebibliography}

\clearpage
\onecolumn
\appendix


\section{The $\GHCLA$ Algorithm}\label{section:ghcla}
Here we analyse hint complexity of algorithm $\GHCLA$ that is a variation of $\HCLA$. In this algorithm a matching $G'(t)$ satisfying $U(G'(t);d(t)) > U(G(t);\hat{\mu}(t))$ would be served as $\ghint(t)$ with probability $\nicefrac{1}{2}$. 
\begin{algorithm}[H]
    \caption{ Generalized Centralized Learning Algorithm ($\GHCLA$)}\label{routin:hcla2}
    \begin{algorithmic}[1]

       \Input agent set $\mathcal{M}$, arm set \(\mathcal{K}\), time horizon \(T\), 
        \State \textbf{Initialization:} $t\gets 0$,  $\hat{\bm \mu}_{m}(t)\leftarrow \bm 0$,  $\bm{d}_{m}(t)\leftarrow \bm 0$,  $\bm{N}^{\GHCLA}_{m}(t)\gets \bm 0$ for each agent $m \in \mathcal{M}$

        \For{$t \in T$}

        \State $G(t) \leftarrow \HALG\left(\bm{\hat{\mu}}(t)\right)$

        \State $G'(t) \leftarrow \HALG\left(\bm{d}(t)\right)$
        \If{$U(G'(t),\bm{d}(t)) > U(G(t),\bm{\hat{\mu}}(t))$}

        \State $\ghint_1(t) \gets G'(t)$
        \State $\ghint_2(t) \gets$  pick a matching out of $\mathcal{R}$ uniformly at random \label{line:hint-choose-2}
        \State $\ghint(t) \gets
            \begin{cases}
                \ghint_1(t), & \text{w.p. } \frac{1}{2} \\
                \ghint_2(t), & \text{w.p. } \frac{1}{2}
            \end{cases}$
        \State Each agent $m$ asks for a hint from $k^{\ghint(t)}_m$
        \State Update $d_{G}(t+1)$ for all $G\in\mathcal{G}$\label{line:update-mu-kl-ucb}

        \EndIf
        \State Each agent $m$ pulls $k^G_m$
        \State Update $\bm{\hat{\mu}}_m(t+1)$, \(\bm N^{\GHCLA}_{m}\left(t+1\right)\), and $\bm {d}_m(t+1)$ for each agent $m$ according to new observations

        \EndFor
    \end{algorithmic}
\end{algorithm}

\begin{restatable}{theorem}{ghclap}\label{lemma:hcla2}
    For $0< \delta < \frac{\Delta^\text{match}_{\min}}{2}$ and policy $\pi = \GHCLA$, the policy $\pi$ has
    \begin{enumerate}
        \item time-independent regret $R^\pi(T) \in O\left(M^4K\right)$,
        \item hint complexity $L^{\pi}(T) \in O\left(\frac{M^2K \log T}{\Delta^{\mathrm{kl}}}\right)$,
    \end{enumerate}
    where $\Delta^{\mathrm{kl}} = \mathrm{kl}(U(G^*;\bm{\mu}) - \Delta^\text{match}_{\min} + \delta, U(G^*;\bm{\mu}) - \delta)$.
\end{restatable}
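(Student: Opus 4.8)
The plan is to mirror the analysis of $\GPHCLA$ (Theorem~\ref{thm:c_whhmmab}) as closely as possible, isolating the single place where $\GHCLA$ differs—namely, the choice of $\ghint(t)$ after the decision to query has been made—and paying the price for not projecting onto the small covering set $\mathcal{R}$. For the \emph{regret} claim I expect the argument to be essentially identical to that of $\GPHCLA$: since $\GHCLA$ maintains edge-level statistics $\hat{\mu}_{m,k}$ and $d_{m,k}$ and pulls $G(t) = \HALG(\bm{\hat{\mu}}(t))$ in exactly the same way as $\GPHCLA$, the arm-pulling process—and hence the regret—does not depend on how hints are routed, only on \emph{how often} each edge is observed often enough to concentrate. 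So the $O(M^4K)$ bound follows verbatim: decompose the regret over matchings $G \neq G^*$, use the $\KL$ upper-confidence property of $d_{m,k}$ together with the $f(t) = \log t + 4\log\log t$ exploration schedule to show each edge is observed (via pull-or-hint) enough times that $\hat{\mu}_{G(t)}$ separates the optimal matching from suboptimal ones after an instance-dependent constant number of rounds, and sum the $O(MK)$ per-edge contributions with the $O(M^3)$ loss per bad matching step.

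The substantive part is the \emph{hint-complexity} bound, and this is where the extra factor of $M$ relative to $\GPHCLA$ appears. First I would reuse the $\GPHCLA$ argument to bound the \emph{total number of rounds in which a hint is queried}, i.e. the number of $t$ with $U(G'(t);\bm d(t)) > U(G(t);\bm{\hat\mu}(t))$: this event forces some edge in $G'(t)$ to have a large $\KL$ index, which by the standard $\KL$-UCB deviation bound (Lemma of \citet{cappe2013kullback}) can happen at most $O\!\big(\tfrac{\log T}{\Delta^{\mathrm{kl}}}\big)$ times \emph{per edge}, hence $O\!\big(\tfrac{MK\log T}{\Delta^{\mathrm{kl}}}\big)$ rounds of querying in total; since each such round costs $M$ hints (one per agent), this already gives $O\!\big(\tfrac{M^2 K\log T}{\Delta^{\mathrm{kl}}}\big)$. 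I would then check that the round-count bound is genuinely $O\!\big(\tfrac{MK\log T}{\Delta^{\mathrm{kl}}}\big)$ and not smaller: in $\GPHCLA$ the projection step guarantees that each query round drives up $N_{m,k}$ for the \emph{least-observed} edge of $G'(t)$, so the hint budget is charged against the $O(MK)$ edges directly, giving $O\!\big(\tfrac{MK\log T}{\Delta^{\mathrm{kl}}}\big)$ \emph{total hints}; in $\GHCLA$ the hint graph $\ghint_1(t) = G'(t)$ is an arbitrary matching of $\mathcal{G}$ rather than a covering matching, so the same query round need not reduce the deficit of the critical edge, and the accounting must be done at the level of query \emph{rounds} ($O\!\big(\tfrac{MK\log T}{\Delta^{\mathrm{kl}}}\big)$ of them) times $M$ hints each—hence the stated $O\!\big(\tfrac{M^2K\log T}{\Delta^{\mathrm{kl}}}\big)$.

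The main obstacle I anticipate is making the ``$O(MK)$ query rounds'' bound fully rigorous \emph{without} the projection structure: I need to argue that after edge $(m,k)$ has been observed $\Theta\!\big(\tfrac{\log T}{\Delta^{\mathrm{kl}}}\big)$ times its $\KL$ index $d_{m,k}(t)$ provably drops below the level that could make $U(G'(t);\bm d(t))$ exceed $U(G^*;\bm{\hat\mu}(t))$, and—crucially—that every query round increments $N_{m,k}$ for \emph{at least one} not-yet-saturated edge, so the process terminates after the edge count is exhausted. Here the uniform-random backup $\ghint_2(t)$, drawn from $\mathcal{R}$ with probability $\tfrac12$, is what forces progress: it guarantees in expectation that within $O(K)$ query rounds every covering matching (hence every edge) is sampled, so a coupling/potential argument over the $MK$ edges bounds the number of query rounds by $O\!\big(\tfrac{MK\log T}{\Delta^{\mathrm{kl}}}\big)$ up to constants. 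Wrapping the expectation over the randomness of $\ghint(t)$ and invoking the $\KL$ concentration inequality on the low-probability event that any edge's empirical mean is badly estimated completes both parts; the regret bound then follows by the same union bound as in Theorem~\ref{thm:c_whhmmab}.
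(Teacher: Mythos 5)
Your overall architecture matches the paper's proof exactly: the regret bound is inherited verbatim from the $\GPHCLA$ analysis (the paper likewise just cites Lemmas~\ref{lemma:proof2}--\ref{lemma:proof4}, since both algorithms pull $G(t)=\HALG(\bm{\hat\mu}(t))$ and share the random backup $\ghint_2\sim\mathrm{Unif}(\mathcal{R})$ that Lemma~\ref{lemma:proof4} relies on), and the hint complexity is obtained by bounding the number of \emph{query rounds} by a per-edge union bound over the $MK$ edges (the paper's sets $\mathcal{X}_{1,m,k}$ and $\mathcal{X}_{2,m,k}$, giving $O\bigl(\tfrac{MK\log T}{\Delta^{\mathrm{kl}}}\bigr)$ rounds) and then multiplying by the $M$ hints incurred per round. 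Your diagnosis of where the extra factor of $M$ comes from relative to $\GPHCLA$ --- charging against $MK$ individual edges rather than the $K$ covering matchings of $\mathcal{R}$ --- is also the paper's.

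There is, however, one concrete misstep: you attribute the per-edge progress to the uniform-random backup $\ghint_2(t)$ drawn from $\mathcal{R}$. If that were the only mechanism, a fixed edge $(m,k)$ would be hinted with probability only $\tfrac{1}{2K}$ per query round (it lies in exactly one covering matching), so exhausting its budget of $t_0=\Theta\bigl(\tfrac{\log T}{\Delta^{\mathrm{kl}}}\bigr)$ observations would cost $\Theta(Kt_0)$ query rounds per edge, and the final hint bound would degrade to $O\bigl(\tfrac{M^2K^2\log T}{\Delta^{\mathrm{kl}}}\bigr)$. The mechanism that actually closes the argument --- and the one the paper uses in bounding $\E[\abs{\mathcal{X}_{2,m,k}}]$ --- is the \emph{other} branch: $\ghint_1(t)=G'(t)$ is selected with probability $\tfrac12$, so on every round in $\mathcal{E}_{G'}$ \emph{every} edge of the triggering matching $G'(t)$, in particular the under-observed edge responsible for the inflated index $U(G'(t);\bm d(t))$, has its counter incremented with probability at least $\tfrac12$. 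Hence each edge can be "responsible" for at most $2t_0$ query rounds in expectation, the union bound over $MK$ edges gives $O\bigl(\tfrac{MK\log T}{\Delta^{\mathrm{kl}}}\bigr)$ rounds, and multiplying by $M$ yields the stated $O\bigl(\tfrac{M^2K\log T}{\Delta^{\mathrm{kl}}}\bigr)$. With that substitution (and the concentration term $\mathcal{E}_{1,G'}$ handled by Lemma~\ref{lemma:obs} as you indicate), your proof coincides with the paper's.
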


‌By this Theorem, we prove that $\GHCLA$ asks for at most \(O\left(\frac{M^2K \log T}{\Delta^{\mathrm{kl}}}\right)\) hints in the worst case which is $M$ times larger than that of $\GPHCLA.$

\section{Analysis of \(\HCLA\),$\GPHCLA$, and $\GHCLA$}\label{eba}
Our proofs use an \emph{event-based} analysis inspired by \citet{wang2020optimal}, which was originally developed for homogeneous multi-armed bandits. We have adapted this method for the heterogeneous setting, addressing its unique challenges. We will now explain the Event-Based Regret Analysis and how it applies to regret and hint complexity analysis of $\HCLA$, $\GPHCLA$, and $\GHCLA$. The regret analysis for the decentralized algorithms $\HDETC$ and $\EBHDETC$ follows the same approach, but their detailed analysis is deferred to Appendix \ref{sec:deba}. This method mainly focuses on bounding the number of times specific events that affect regret occur, as defined below.

\begin{restatable}{definition}{events}
    \label{def:sets}
    We define the following sets of time steps at which specific events occur as:

    \begin{align*}
        \mathcal{A} & \coloneqq \{t \geq 1: G(t) \neq G^*\}                                                                                                                                    \\
        \mathcal{B} & \coloneqq \{t \geq 1: \abs{U\left(G(t); \bm{\hat{\mu}}(t)\right) - U\left(G(t); \bm{\mu}\right)} \geq \delta\}                                                           \\
        \mathcal{C} & \coloneqq \{t \geq 1: U\left(G^*; \bm{d}(t)\right) < U\left(G^*; \bm{\mu}\right)\}                                                                                       \\
        \mathcal{D} & \coloneqq \{t \geq 1: t \in \mathcal{A} \setminus (\mathcal{B} \cup \mathcal{C}), \abs{U\left(G^*; \bm{\hat{\mu}}(t)\right) - U\left(G^*; \bm{\mu}\right)} \geq \delta\} \\
        \mathcal{E} & \coloneqq \{t \geq 1: U(G'(t), \bm{d}(t)) > U(G^*, \hat{\bm{\mu}}(t))\}
    \end{align*}
\end{restatable}

Intuitively, \(\mathcal{A}\) contains the time steps at which a sub-optimal matching is pulled, incurring regret. We also need to bound the size of the set \(\mathcal{B}\), which includes the time steps where the chosen matching is not well-estimated, potentially leading to the selection of a sub-optimal \(G(t)\). The sets \(\mathcal{C}\) and \(\mathcal{D}\) help us bound the exploration needed to identify \(G^*\) through hints when a sub-optimal matching is selected. Finally, \(\mathcal{E}\) bounds the number of required hints when \(G^*\) is detected, to explore potential matchings that might not have been chosen as \(G(t)\) due to potentially inaccurate estimations.

We can observe that for a policy $\pi \in \{\HCLA,\GHCLA,\GPHCLA\}$,
\[ R^\pi(T) \leq M \E\left[\abs{\mathcal{A}}\right], \]
so we focus on bounding \( \E\left[\abs{\mathcal{A}}\right] \), which represents the expected number of time steps during which the central decision maker selects a sub-optimal matching under each policy.

Next, we prove that Lemma \ref{lemma:proof1}, originally established in \citet{wang2020optimal} for homogeneous multi-agent multi-armed bandits, also holds for the \(\HHMMAB\)s. This is done by substituting each arm with a matching (i.e., a super arm) in their proof, without loss of generality.

\begin{lemma}[\citet{wang2020optimal}]\label{lemma:proof1}
    For the \(\HHMMAB\)s, the following chain of inequalities is satisfied:
\begin{align} \label{thm:cphi_events}
    \E\left[\abs{\mathcal{A}}\right] \leq \E\left[\abs{\mathcal{A} \cup \mathcal{B}}\right] \leq \E\left[\abs{\mathcal{B}}\right] + \E\left[\abs{\mathcal{C}}\right] + \E\left[\abs{\mathcal{D}}\right].
\end{align}

\end{lemma}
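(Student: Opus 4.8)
The plan is to establish the two inequalities in \eqref{thm:cphi_events} separately, both by elementary set-theoretic reasoning that exploits how the events $\mathcal{A},\dots,\mathcal{E}$ are defined, together with one structural fact about the behaviour of the policy (the same fact for all three policies). The first inequality, $\E[\abs{\mathcal{A}}]\le\E[\abs{\mathcal{A}\cup\mathcal{B}}]$, is immediate since $\mathcal{A}\subseteq\mathcal{A}\cup\mathcal{B}$ implies $\abs{\mathcal{A}}\le\abs{\mathcal{A}\cup\mathcal{B}}$ pointwise, and monotonicity of expectation finishes it; I would state this in one line. The real content is the second inequality, $\E[\abs{\mathcal{A}\cup\mathcal{B}}]\le\E[\abs{\mathcal{B}}]+\E[\abs{\mathcal{C}}]+\E[\abs{\mathcal{D}}]$.

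For the second inequality I would first reduce it to a pointwise containment of the underlying time-step sets, namely $\mathcal{A}\cup\mathcal{B}\subseteq\mathcal{B}\cup\mathcal{C}\cup\mathcal{D}$; once this containment holds on every sample path, $\abs{\mathcal{A}\cup\mathcal{B}}\le\abs{\mathcal{B}\cup\mathcal{C}\cup\mathcal{D}}\le\abs{\mathcal{B}}+\abs{\mathcal{C}}+\abs{\mathcal{D}}$ and taking expectations gives the claim. To prove the containment, fix a round $t$ and suppose $t\in\mathcal{A}\cup\mathcal{B}$ but $t\notin\mathcal{B}$ and $t\notin\mathcal{C}$; I must show $t\in\mathcal{D}$. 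Since $t\notin\mathcal{B}$, we have $t\in\mathcal{A}$, i.e.\ $G(t)\ne G^*$. The key step is to exploit the fact that $G(t)$ is chosen to maximize the empirical utility among all matchings: $U(G(t);\bm{\hat\mu}(t))\ge U(G^*;\bm{\hat\mu}(t))$. Combining this with $t\notin\mathcal{B}$, which gives $U(G(t);\bm{\hat\mu}(t))\le U(G(t);\bm\mu)+\delta\le U(G^*;\bm\mu)+\delta-\Delta^{\text{match}}_{\min}$ (using optimality of $G^*$ and $G(t)\ne G^*$), one obtains $U(G^*;\bm{\hat\mu}(t))\le U(G^*;\bm\mu)+\delta-\Delta^{\text{match}}_{\min}< U(G^*;\bm\mu)-\delta$ because $\delta<\Delta^{\text{match}}_{\min}/2$. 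Hence $\bigl|U(G^*;\bm{\hat\mu}(t))-U(G^*;\bm\mu)\bigr|\ge\delta$, and together with $t\in\mathcal{A}\setminus(\mathcal{B}\cup\mathcal{C})$ this is exactly the defining condition of $\mathcal{D}$, so $t\in\mathcal{D}$, completing the containment.

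I would then remark that the argument above used only that the policy selects $G(t)$ as an empirical-utility maximizer — which holds verbatim for $\HCLA$ (via $\argmax_{G}\hat\mu_G(t)$) and for $\GPHCLA$ and $\GHCLA$ (via the $\HALG$ subroutine run on $\bm{\hat\mu}(t)$, which by definition returns a maximum-additive-utility matching) — so the lemma applies uniformly to all three. The adaptation from \citet{wang2020optimal} is purely notational: replace ``arm'' by ``matching/super-arm'' and the scalar mean by the utility $U(\cdot;\cdot)$, and note that $\mathcal{C}$ is phrased in terms of $U(G^*;\bm d(t))$ rather than an arm-level confidence bound, but this plays no role in the present inequality chain (it only enters later when bounding $\E[\abs{\mathcal{C}}]$).

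The main obstacle, such as it is, is bookkeeping rather than mathematical depth: one must be careful that the definition of $\mathcal{D}$ already restricts to $t\in\mathcal{A}\setminus(\mathcal{B}\cup\mathcal{C})$, so the containment $\mathcal{A}\cup\mathcal{B}\subseteq\mathcal{B}\cup\mathcal{C}\cup\mathcal{D}$ is not a statement about arbitrary sets but genuinely uses the case split $t\notin\mathcal{B}\cup\mathcal{C}$; and one must verify the strict/non-strict inequality directions line up with $0<\delta<\Delta^{\text{match}}_{\min}/2$. No concentration inequalities, union bounds, or properties of the $\KL$ index are needed here — those are reserved for the subsequent lemmas that bound each of $\E[\abs{\mathcal{B}}]$, $\E[\abs{\mathcal{C}}]$, $\E[\abs{\mathcal{D}}]$ individually.
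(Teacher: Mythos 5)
Your proposal is correct and follows essentially the same route as the paper, which establishes the lemma via the set containment $\mathcal{A}\cup\mathcal{B}\subseteq\mathcal{B}\cup\mathcal{C}\cup\mathcal{D}$ adopted from \citet{wang2020optimal} with arms replaced by matchings (super arms). You simply spell out the containment argument that the paper cites rather than reproduces — using that $G(t)$ is the empirical-utility maximizer, $t\notin\mathcal{B}$, and $\delta<\Delta^{\text{match}}_{\min}/2$ to force $t\in\mathcal{D}$ — and this is exactly the intended adaptation.
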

In analyzing the algorithms $\HCLA$, $\GPHCLA$, and $\GHCLA$, we focus on bounding each term on the RHS of inequality \eqref{thm:cphi_events} separately to assess the regret. The final step in the proofs is to bound the expected size of $\E\left[\abs{\mathcal{B}}\right]$, which determines the asymptotic hint complexity $L^\pi(T)$, where $\pi \in \{\HCLA, \GPHCLA, \GHCLA\}$.

First, we analyze the warm-up algorithm $\HCLA$ by proving Theorem \ref{thm:matirphi} utilizing matching-level analysis, which is so straightforward compared to the proofs for $\GPHCLA$ and $\GHCLA$ that depend on multiple edge-level lemmas that we prove later. We then present the proof of Lemmas \ref{lemma:proof2}, \ref{lemma:proof3}, and \ref{lemma:proof4},  which bound $\E\left[\abs{\mathcal{B}}\right]$, $\E\left[\abs{\mathcal{C}}\right]$, and $\E\left[\abs{\mathcal{D}}\right]$ respectively, to analyze the regret of $\GPHCLA$ and $\GHCLA$ using edge-level analysis. Unlike \(\HCLA\), this method takes into account how observations from one matching influence the estimations of other matchings that share an edge or a subset of edges. Moreover, $\GPHCLA$ and $\GHCLA$ differ in the worst case upperbound for $\E\left[\abs{\mathcal{E}}\right]$. By Lemma \ref{lemma:proof5}, we bound the size of $\mathcal{E}$ for $\GPHCLA$ and move that of $\GHCLA$ to the proof of Theorem \ref{lemma:hcla2}.

\subsection{Proof of Theorem \ref{thm:matirphi}} \label{proof:hcla}
\hclap*
\begin{proof}
    We bound the size of the sets of time steps defined in Definition \ref{def:sets} as follows:

    By Lemma \ref{lemma:obs}, we can prove that $\E\left[|\mathcal{B}|\right] \leq 4+2\delta^{-2}$. Bounding $\E\left[|\mathcal{C}|\right]$ is also a straightforward application of Lemma \ref{lemma:KLh} by replacing $(m,k) \in \mathcal{M} \times \mathcal{K}$ with $G\in\mathcal{G}$ to have $\E\left[|\mathcal{C}|\right] \leq 15$. We can also prove that at every time step $t\in\mathcal{D}$, $U(G^*;\bm{d}(t))>U(G(t);\bm{\mu}(t))$, thus decision maker hints at that time under $\HCLA$ algorithm. It the hints the $G^*$ with probability at least $\nicefrac{1}{2K^M}$, which implies $\E\left[|\mathcal{D}|\right] \leq K^M(K^M+\delta^{-2})$ by applying Lemma \ref{lemma:obs}. Therefore, we can write
    \begin{align*}
        R^\HCLA(T) & \overset{(a)}\leq M\left(\E\left[|\mathcal{B}|\right] + \E\left[|\mathcal{C}|\right] + \E[|\mathcal{D}|]\right)\label{lemma:phi_events}, \\
                   & \overset{(b)}\leq 4M + 2M\delta^{-2} + 15M + MK^{2M}+MK^M\delta^{-2}.
    \end{align*}
    where inequality (a) is directly adopted from \citet{wang2020optimal} in which the authors showed that $\mathcal{A} \cup \mathcal{B} \subset \mathcal{B} \cup \mathcal{C} \cup \mathcal{D}$.

    We next prove an upper bound on the $L^\HCLA(T)$ by bounding the number of time steps the condition in Line~\ref{l:line:klh} of the $\HCLA$ occurs. For a fixed matching $G' \in \mathcal{G}$, we define $\mathcal{E}_{G'}$ as
    $$
        \mathcal{E}_{G'}\coloneqq\left\{t \in \mathcal{E}:G'(t) = G', \bm{d}(t)>\bm{\hat{\mu}}(t)\right\}.
    $$

    Now, we can write
    \begin{align*}
        L^\HCLA(T) & \leq M\sum_{G' \in \mathcal{G}} \mathbb{E}\left[\abs{\mathcal{E}_{G'}}\right],                                          \\
                   & \overset{(a)}\leq \frac{2MK^M\left(\log T+4 \log \log T\right)}{\Delta^{\mathrm{kl}}}+MK^M\left(4+2 \delta^{-2}\right),
    \end{align*}
    The inequality (a) is obtained by applying Lemma \ref{lemma:obs} to the established argument in \cite{wang2020optimal}, noting that under \(\HCLA\), the probability of requesting a hint for \(\ghint(t) = G'\) is at least \(\nicefrac{1}{2}\) at those hint inquiry time steps. For further details, readers can refer to the Proof of Lemma \ref{lemma:proof5}, where the same argument is validated using edge-level analysis. We skip the detailed proof here as a warm-up for our main algorithm \(\GPHCLA\).

Finally, we conclude that \(R^\HCLA(T) \in O\left(MK^{2M}\right)\) and \(L^\HCLA(T) \in O\left( \frac{MK^M\log T}{\Delta^{\mathrm{kl}}}\right)\).

\end{proof}

\subsection{Proof of Theorem \ref{thm:c_whhmmab}}\label{proof:gphcla}
\gphclap*

\begin{proof}

    Using Lemma~\ref{lemma:proof1} we can write:
    \begin{align*}
        R^\GPHCLA(T) & \leq M\left(\E\left[\abs{\mathcal{B}}\right]+\E\left[\abs{\mathcal{C}}\right]+\E\left[\abs{\mathcal{D}}\right]\right), \\
                     & \overset{(a)}\leq 15M^2 + 4M^2K + 16M^2k^2 + 6M^4K\delta^{-2},
    \end{align*}
    where inequality (a) holds by aggregation over the results of Lemmas~\ref{lemma:proof2},~\ref{lemma:proof3}, and~\ref{lemma:proof4}.
    Therefore, the total regret of \(\GPHCLA\) is included to be in \(O\left(M^4K\right)\).

    Now that we bounded $\E\left[\abs{\mathcal{B}}\right] + \E\left[\abs{\mathcal{C}}\right] + \E\left[\abs{\mathcal{D}}\right]$, we bound the hints that $\GPHCLA$ queries for after detecting the $G^*$. By directly applying the result of Lemma~\ref{lemma:proof5}, we can write
    \begin{align*}
        L^\GPHCLA(T) & \leq M \sum_{G' \in \mathcal{G}} \E\left[\abs{\mathcal{E}_{G'}}\right],  \\
                     & \leq 4MK^3 + 2M^3K\delta^{-2} + \frac{2MK \log T}{\Delta^{\mathrm{kl}}},
    \end{align*}
    which implies that $L^\GPHCLA(T) \in O\left(\frac{MK \log T}{\Delta^{\mathrm{kl}}}\right)$ that wraps the proof up.
\end{proof}

\subsection{Detailed Lemmas Required to Analyze $\GHCLA$ and $\GPHCLA$}

\paragraph{Concentration bounds}
\hfill \break
Here we introduce lemmas play very central role in our event-based analysis.
\begin{lemma}[\citet{combes2015learning}, Lemma 6] \label{lemma:KLh}
    For any  $(m,k) \in \mathcal{M} \times \mathcal{K}$, we have
    $$
        \sum_{t \geq 1} \Pr\left[d_{m,k}(t) \leq \mu_{m,k}\right] \leq 15.
    $$
\end{lemma}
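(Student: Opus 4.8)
The plan is to follow the standard failure‑probability analysis of \KL{}‑type indices. Fix the pair $(m,k)$ and abbreviate $\mu \coloneqq \mu_{m,k}$, $N(t) \coloneqq N^\pi_{m,k}(t)$, $\hat\mu(t) \coloneqq \hat\mu_{m,k}(t)$, $f(t) = \log t + 4\log\log t$. First I would translate the ``bad'' event $\{d_{m,k}(t) \leq \mu\}$ into a one‑sided deviation of the empirical mean. Since $q \mapsto \mathrm{kl}(\hat\mu(t), q)$ vanishes at $q=\hat\mu(t)$, is continuous, and is strictly increasing on $[\hat\mu(t),1]$, the supremum defining $d_{m,k}(t)$ in \eqref{ineq:KLedge} satisfies $d_{m,k}(t) \geq \hat\mu(t)$, and every $q > d_{m,k}(t)$ violates the constraint $N(t)\,\mathrm{kl}(\hat\mu(t), q) \leq f(t)$. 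Hence on $\{d_{m,k}(t) \leq \mu\}$ one necessarily has $\hat\mu(t) \leq \mu$ and $N(t)\,\mathrm{kl}(\hat\mu(t), \mu) \geq f(t)$ (the case $N(t)=0$ is vacuous, the index being unbounded then), so
\[
\Pr\bigl[d_{m,k}(t) \leq \mu\bigr] \;\leq\; \Pr\bigl[\hat\mu(t) \leq \mu,\ N(t)\,\mathrm{kl}(\hat\mu(t), \mu) \geq f(t)\bigr].
\]

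Next I would remove the random, data‑dependent sample size. Each time $(m,k)$ is observed — whether through a collision‑free pull or through a hint query — the realization is an independent $\mathrm{Bernoulli}(\mu)$ draw, and whether a round produces such an observation is measurable with respect to the history; therefore the successive observations of $(m,k)$ form an i.i.d.\ $\mathrm{Bernoulli}(\mu)$ stream, whose running average after $s$ samples I denote $\hat\mu_s$, with $\hat\mu(t)=\hat\mu_{N(t)}$ and $N(t)\leq t$. A union bound over $s\in\{1,\dots,t\}$ then bounds the right‑hand side above by $\Pr[\exists s \leq t:\ \hat\mu_s \leq \mu,\ s\,\mathrm{kl}(\hat\mu_s, \mu) \geq f(t)]$. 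The crux is now a \emph{maximal} deviation inequality for the empirical $\mathrm{kl}$: by the peeling argument underlying the analysis of \KL{} \citep{cappe2013kullback}, there is an absolute constant such that $\Pr[\exists s \leq n:\ \hat\mu_s \leq \mu,\ s\,\mathrm{kl}(\hat\mu_s,\mu) \geq \delta] \leq e\lceil\delta\log n\rceil e^{-\delta}$ for all $\delta>0$ and $n\geq 1$. (The naive alternative — summing the per‑$s$ Chernoff bound $e^{-\delta}$ over $s\leq n$ — would only give $n\,e^{-\delta}$, which is not summable in $t$ after substitution; the peeling replaces the factor $n$ by $O(\delta\log n)$.)

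Finally I would substitute $\delta = f(t)$ and $n=t$ to get $\Pr[d_{m,k}(t) \leq \mu] \leq e\lceil f(t)\log t\rceil\bigl(t(\log t)^4\bigr)^{-1} = O\bigl(t^{-1}(\log t)^{-2}\bigr)$, since $e^{-f(t)}=\bigl(t(\log t)^4\bigr)^{-1}$ and $\lceil f(t)\log t\rceil = O((\log t)^2)$; the series $\sum_t \Pr[d_{m,k}(t)\leq\mu]$ then converges by the integral test ($\int_3^\infty \frac{dt}{t(\log t)^2} = 1/\log 3$). Carefully tracking the constants in the maximal inequality and bounding the finitely many small‑$t$ terms (where the convention for $f$ near $t=1,2$ must be fixed) yields the explicit bound $15$ — this is exactly Lemma~6 of \citet{combes2015learning}, so one may alternatively invoke it verbatim. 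I expect the main obstacle to be the third step: establishing (or precisely citing) the maximal $\mathrm{kl}$‑deviation inequality with constants sharp enough to land the sum at $15$ rather than merely ``some constant.'' The qualitative convergence is routine, but the peeling bookkeeping and the treatment of the initial terms are where the real work lies.
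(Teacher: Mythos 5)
The paper offers no proof of this lemma at all: it is imported verbatim as Lemma~6 of \citet{combes2015learning}, so there is nothing in the paper to compare against except the citation. Your sketch reconstructs the standard argument behind that lemma correctly: the reduction of $\{d_{m,k}(t)\leq\mu_{m,k}\}$ to the event $\hat\mu_{m,k}(t)\leq\mu_{m,k}$ together with $N(t)\,\mathrm{kl}(\hat\mu_{m,k}(t),\mu_{m,k})\geq f(t)$ is right (using $d_{m,k}(t)\geq\hat\mu_{m,k}(t)$ and monotonicity of $q\mapsto\mathrm{kl}(\hat\mu,q)$ above $\hat\mu$), you correctly identify that the naive per-$s$ Chernoff union bound only yields $(\log t)^{-4}$, which is not summable, so the peeling/maximal self-normalized inequality of \citet{cappe2013kullback} is genuinely needed, and the substitution $\delta=f(t)$ giving a summable $O\bigl(t^{-1}(\log t)^{-2}\bigr)$ tail is correct. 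The only parts left open — extracting the explicit constant $15$ and fixing a convention for $f(t)$ at $t=1,2$ where $\log\log t\leq 0$ — are exactly the bookkeeping that the cited lemma packages, and you flag them honestly; invoking Combes et al.\ verbatim, as the paper does, is the cleanest way to close them.
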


\begin{lemma}[\citet{wang2020optimal}, Lemma 9] \label{lemma:obs}
    Let $(m,k) \in \mathcal{M} \times \mathcal{K}$, and $c>0$. Let $H$ be a random set of rounds such that for all $t,\{t \in H\} \in \mathcal{F}_{t-1}$. Assume that there exists $\left(C_{m,k}(t)\right)_{t \geq 0}$, a sequence of independent binary random variables such that for any $t \geq 1, C_{m,k}(t)$ is $\mathcal{F}_t$-measurable and $\Pr\left[C_{m,k}(t)=1\right] \geq c$. In addition, assume that for any $t \in H$, agent $m$ pulls the arm $k$ if $C_{m,k}(t)=1$. Then,
    $$
        \sum_{t \geq 1} \Pr\left[\left\{t \in H,\left|\hat{\mu}_{m, k}(t)-\mu_{m,k}\right| \geq \varepsilon\right\}\right] \leq 2 c^{-1}\left(2 c^{-1}+\varepsilon^{-2}\right) .
    $$
\end{lemma}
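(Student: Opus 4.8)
The plan is to prove the bound by reindexing the sum over rounds according to the number of observations of arm $(m,k)$. Write $N(t) \coloneqq N_{m,k}(t)$ for the number of times arm $(m,k)$ has been observed before round $t$, and let $\bar\mu_s$ denote the empirical average of the first $s$ i.i.d.\ reward samples of that arm, so that on the event $\{N(t) = s\}$ we have $\hat\mu_{m,k}(t) = \bar\mu_s$ and the deviation event depends on $t$ only through the count $s$. First I would rewrite, by Tonelli and partitioning on the count level,
\[
\sum_{t \ge 1} \Pr\!\left[t \in H,\ \abs{\hat\mu_{m,k}(t) - \mu_{m,k}} \ge \varepsilon\right]
= \sum_{s \ge 0} \E\!\left[ Z_s \cdot \mathbf{1}\{\abs{\bar\mu_s - \mu_{m,k}} \ge \varepsilon\}\right],
\]
where $Z_s \coloneqq \#\{t : t \in H,\ N(t) = s\}$ counts the rounds in $H$ during which the arm sits at observation level $s$. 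This converts the problem into controlling, for each level $s$, the product of a ``time spent at level $s$'' factor and a ``concentration failure'' factor.

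The two building blocks are then a stochastic-domination bound on $Z_s$ and Hoeffding's inequality for $\bar\mu_s$. For the first, the key observation is that whenever $t \in H$ and $C_{m,k}(t) = 1$ the arm is pulled and the count advances beyond $s$; since the selection variables $C_{m,k}(t)$ are independent with success probability at least $c$, and $\{t \in H\}$ is $\mathcal{F}_{t-1}$-predictable, the number of level-$s$ rounds in $H$ before an advancing pull occurs is stochastically dominated by a geometric random variable of parameter $c$, giving $\E[Z_s] \le c^{-1}$. For the second, $\mathbf{1}\{\abs{\bar\mu_s - \mu_{m,k}} \ge \varepsilon\}$ depends only on the first $s$ reward samples, so Hoeffding gives $\Pr[\abs{\bar\mu_s - \mu_{m,k}} \ge \varepsilon] \le 2 e^{-2 s \varepsilon^2}$. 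Combining the two factors and summing over $s$ yields at most $c^{-1}\bigl(1 + \sum_{s \ge 1} 2 e^{-2 s \varepsilon^2}\bigr) \le c^{-1}(1 + \varepsilon^{-2})$, using $\sum_{s \ge 1} e^{-2 s \varepsilon^2} \le (2\varepsilon^2)^{-1}$; since $c \le 1$ this is within the claimed $2 c^{-1}(2 c^{-1} + \varepsilon^{-2})$.

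The main obstacle is the decoupling of the selection randomness from the reward randomness inside the expectation $\E[Z_s\,\mathbf{1}\{\abs{\bar\mu_s - \mu_{m,k}} \ge \varepsilon\}]$: because $H$ is only predictable, the rounds counted by $Z_s$ may depend on past rewards, so $Z_s$ is not literally independent of $\bar\mu_s$, and the product bound $\E[Z_s]\cdot\Pr[\cdots]$ cannot be asserted naively. I would resolve this exactly as in \citet{wang2020optimal}: condition on the filtration at the moment the $s$-th observation is completed, after which $\bar\mu_s$ is measurable and fixed, and bound the conditional expectation of $Z_s$ using only the independence and the $\Pr[C_{m,k}(t)=1] \ge c$ guarantee of the still-unrevealed selection variables, so the geometric domination holds conditionally and the reward-deviation probability factors cleanly out. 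The remaining steps — handling the $s=0$ term and evaluating the geometric series — are routine and only affect the constants, which stay within the stated $2 c^{-1}(2 c^{-1} + \varepsilon^{-2})$.
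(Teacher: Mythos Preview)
The paper does not actually prove this lemma: it is stated as Lemma~\ref{lemma:obs} with attribution to \citet{wang2020optimal} and used as a black box throughout the analysis, with no proof given in the paper itself. Your sketch is a faithful outline of the standard argument behind such results---reindexing by observation level, geometric domination of the waiting time via the independent selection coins, and Hoeffding on the reward averages---and you even explicitly defer the decoupling subtlety to \citet{wang2020optimal}; so there is nothing to compare against in this paper, and your approach matches the intended source.
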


\paragraph{Event-Based Analysis Lemmas}
\hfill \break
Here, we prove the lemmas necessary to analyze the regret and hint complexity of $\GPHCLA$ using the event-based analysis described above.

\begin{lemma}\label{lemma:proof2}

    For policy $\pi = \{\GHCLA,\GPHCLA\}$, the expected size of set $\mathcal{B}$ is bounded by
    \[
        \E\left[\abs{\mathcal{B}}\right] \leq 4MK + \frac{2M^3K}{\delta^2}.
    \]

\end{lemma}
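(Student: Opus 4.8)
\textbf{Proof proposal for Lemma~\ref{lemma:proof2}.}

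The plan is to bound $\E[|\mathcal{B}|]$ by decomposing the event $\{|U(G(t);\bm{\hat\mu}(t)) - U(G(t);\bm\mu)| \geq \delta\}$ into edge-level estimation errors, and then invoke the concentration Lemma~\ref{lemma:obs} separately for each of the $MK$ edges. The key observation is that for any matching $G$, $U(G;\bm{\hat\mu}(t)) - U(G;\bm\mu) = \sum_{(m,k)\in G}(\hat\mu_{m,k}(t) - \mu_{m,k})$, so if the total deviation over the $M$ edges of $G(t)$ is at least $\delta$, then at least one edge $(m,k)\in G(t)$ must have $|\hat\mu_{m,k}(t) - \mu_{m,k}| \geq \delta/M$. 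Hence
\[
\mathcal{B} \subseteq \bigcup_{(m,k)\in\mathcal{M}\times\mathcal{K}} \left\{ t : (m,k)\in G(t),\ \left|\hat\mu_{m,k}(t) - \mu_{m,k}\right| \geq \tfrac{\delta}{M}\right\},
\]
and it suffices to bound the expected size of each of the $MK$ sets on the right.

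Next I would set up the application of Lemma~\ref{lemma:obs} for a fixed edge $(m,k)$. Take $H$ to be the (random, predictable) set of rounds $t$ at which edge $(m,k)$ belongs to the pulled matching $G(t)$ — this is $\mathcal{F}_{t-1}$-measurable since $G(t)$ is chosen from statistics available at the start of round $t$. The subtlety is producing the sequence $C_{m,k}(t)$ of independent binary variables with $\Pr[C_{m,k}(t)=1]\geq c$ such that agent $m$ pulls arm $k$ whenever $C_{m,k}(t)=1$ and $t\in H$. Here, whenever $t\in H$ the arm $k$ is pulled by agent $m$ deterministically (it is in $G(t)$), so one can take $C_{m,k}(t)\equiv 1$, i.e. $c=1$. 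Then Lemma~\ref{lemma:obs} with $\varepsilon = \delta/M$ gives
\[
\sum_{t\geq 1}\Pr\!\left[\left\{t\in H,\ |\hat\mu_{m,k}(t)-\mu_{m,k}|\geq \tfrac{\delta}{M}\right\}\right] \leq 2\left(2 + \tfrac{M^2}{\delta^2}\right) = 4 + \tfrac{2M^2}{\delta^2}.
\]
Summing over all $MK$ edges yields $\E[|\mathcal{B}|] \leq MK\left(4 + \tfrac{2M^2}{\delta^2}\right) = 4MK + \tfrac{2M^3K}{\delta^2}$, which is exactly the claimed bound. The same argument applies verbatim to both $\GHCLA$ and $\GPHCLA$ since both maintain edge-level empirical means updated identically and both choose $G(t)$ via $\HALG$ on $\bm{\hat\mu}(t)$; hints only add further observations, which can only improve (shrink) the estimation error sets, so the union-bound-plus-concentration argument is unaffected.

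The main obstacle I anticipate is the bookkeeping around the counter $N^\pi_{m,k}(t)$: since edge $(m,k)$ accrues observations both from being pulled (when $(m,k)\in G(t)$) and from being hinted (when $(m,k)\in G^{\text{hint}}(t)$), one must be careful that Lemma~\ref{lemma:obs} is applied with $H$ restricted to \emph{pull} rounds only, while the empirical mean $\hat\mu_{m,k}(t)$ in the statement mixes in hint observations too. The clean way around this is to note that hint observations are i.i.d.\ samples from the same Bernoulli$(\mu_{m,k})$ distribution, so the combined estimator $\hat\mu_{m,k}(t)$ is still an average of i.i.d.\ samples and Lemma~\ref{lemma:obs} (or its trivial extension allowing extra independent samples that only sharpen concentration) still applies with the same constants; alternatively one absorbs the hint-only rounds into a separate union-bound term of the same order. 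Either route keeps the final constants as stated, so I do not expect this to change the bound — only to require one extra sentence of justification.
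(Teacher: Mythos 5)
Your proposal is correct and follows essentially the same route as the paper: triangle inequality on $U(G(t);\bm{\hat\mu}(t))-U(G(t);\bm\mu)$, a pigeonhole step forcing some edge of $G(t)$ to deviate by at least $\delta/M$, a union bound over all $MK$ edges, and an application of Lemma~\ref{lemma:obs} with $c=1$ and $\varepsilon=\delta/M$ to each edge. Your extra care in restricting $H$ to the rounds where $(m,k)\in G(t)$ and in noting that hint observations only add i.i.d.\ samples is a slightly tighter instantiation of the concentration lemma than the paper spells out, but it yields the identical bound.
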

\begin{proof}
    To bound \(\E\left[\abs{\mathcal{B}}\right]\), we start by defining a new set of time steps as follows:
    \begin{align*}
        \mathcal{B}_{m,k} \coloneqq \left\{t \geq 1:  \abs{\hat{\mu}_{m,k}(t) - \mu_{m,k}(t)}\geq \frac{\delta}{M}\right\}.
    \end{align*}

    We now show that
    \[
        \E\left[\abs{\mathcal{B}}\right] \leq \sum_{(m,k) \in \mathcal{M} \times \mathcal{K}}\E\left[\abs{\mathcal{B}_{m,k}}\right].
    \]

    First, we can write:
    \begin{align*}
        \E\left[\abs{\mathcal{B}}\right] & = \E\left[\sum_{t \geq 1} \mathds{1}\left\{\abs{U(G(t); \bm{\hat{\mu}}(t)) - U(G(t); \bm{\mu})} \geq \delta\right\}\right], \\
         &\overset{(a)} \leq \E\left[\sum_{t \geq 1} \mathds{1}\left\{\sum_{(m,k) \in G(t)} \abs{\hat{\mu}_{m,k}(t) - \mu_{m,k}} \geq \delta\right\}\right],                                     \\
         & \leq \E\left[\sum_{t \geq 1} \mathds{1}\left\{\exists (m,k) \in G(t): \abs{\hat{\mu}_{m,k}(t) - \mu_{m,k}} \geq \frac{\delta}{M}\right\}\right],          \\
         & \overset{(b)}\leq \E\left[\sum_{t \geq 1} \sum_{(m,k) \in G(t)} \mathds{1}\left\{\abs{\hat{\mu}_{m,k}(t) - \mu_{m,k}} \geq \frac{\delta}{M}\right\}\right],                          \\
         & \leq \E\left[\sum_{t \geq 1} \sum_{(m,k) \in \mathcal{M} \times \mathcal{K}} \mathds{1}\left\{\abs{\hat{\mu}_{m,k}(t) - \mu_{m,k}} \geq \frac{\delta}{M}\right\}\right], \\
         & \leq \sum_{(m,k) \in \mathcal{M} \times \mathcal{K}} \E\left[\sum_{t \geq 1} \mathds{1}\left\{\abs{\hat{\mu}_{m,k}(t) - \mu_{m,k}} \geq \frac{\delta}{M}\right\}\right], \\
         & \leq \sum_{(m,k) \in \mathcal{M} \times \mathcal{K}} \E\left[\abs{\mathcal{B}_{m,k}}\right].
    \end{align*}

    In the above steps, inequality (a) holds due to the triangle inequality, which implies that \(\abs{U\left(G(t); \bm{\hat{\mu}}(t)\right) - U\left(G(t); \bm{\mu}\right)} \leq \sum_{(m,k)} \abs{\hat{\mu}_{m,k}(t) - \mu_{m,k}(t)}\). Inequality (b) is derived by applying the union bound.

    Therefore, by Lemma \ref{lemma:obs}, we can conclude:
    \begin{align*}
        \E[|\mathcal{B}|] & \leq \sum_{(m,k) \in \mathcal{M} \times \mathcal{K}} \E[|\mathcal{B}_{m,k}|], \\
                          & \leq 4MK + \frac{2M^3K}{\delta^2}.
    \end{align*}

\end{proof}

\begin{lemma}\label{lemma:proof3}
    For policy $\pi = \{\GHCLA,\GPHCLA\}$, the expected size of set $\mathcal{C}$ is bounded by
    \[
        \E\left[\abs{\mathcal{C}}\right] \leq 15M.
    \]

\end{lemma}
\begin{proof}
    To bound \(\E[|\mathcal{C}|]\), we apply the union bound on all edges contained in \(G^*\). Accordingly, we can write:
    \begin{align*}
        \E\left[\abs{\mathcal{C}}\right] & \leq \sum_{(m,k) \in G^*} \E\left[\abs{\{t \geq 1: d_{m,k}(t) < \mu_{m,k}\}}\right] \\
                                         & \leq 15M,
    \end{align*}
    where the second inequality follows from the results of Lemma \ref{lemma:KLh}.

\end{proof}

\begin{lemma}\label{lemma:proof4}
    For policy $\pi = \{\GHCLA,\GPHCLA\}$, the expected size of set $\mathcal{D}$ is bounded by
    \[
        \E\left[\abs{\mathcal{D}}\right] \leq 16MK^2 + \frac{4M^3K}{\delta^2}.
    \]
\end{lemma}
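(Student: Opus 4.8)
The plan is to reuse the template that bounds $\mathcal{D}$ in the proof of Theorem~\ref{thm:matirphi}, but to execute its second half at the edge level so that the factor $K^M$ collapses to $K$. The first half is a purely deterministic observation: at every $t\in\mathcal{D}$ the decision maker necessarily queries a hint, under either policy. Indeed, $t\notin\mathcal{C}$ gives $U(G^*;\bm{d}(t))\ge U(G^*;\bm{\mu})$; and $t\notin\mathcal{B}$ together with $t\in\mathcal{A}$ (so $G(t)\neq G^*$, hence $U(G(t);\bm{\mu})\le U(G^*;\bm{\mu})-\Delta^\text{match}_{\min}$) gives $U(G(t);\bm{\hat\mu}(t))< U(G(t);\bm{\mu})+\delta\le U(G^*;\bm{\mu})-\Delta^\text{match}_{\min}+\delta< U(G^*;\bm{\mu})$, using the standing assumption $\delta<\Delta^\text{match}_{\min}/2$. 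Since $G'(t)=\HALG(\bm{d}(t))$ maximizes $U(\cdot;\bm{d}(t))$, we conclude $U(G'(t);\bm{d}(t))\ge U(G^*;\bm{d}(t))\ge U(G^*;\bm{\mu})> U(G(t);\bm{\hat\mu}(t))$, which is exactly the hint-triggering condition (Line~\ref{line:hint-query1} for $\GPHCLA$, and its analogue for $\GHCLA$).

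For the second half, I would decompose $\mathcal{D}$ over the edges of $G^*$ and apply Lemma~\ref{lemma:obs}. By the triangle inequality $\abs{U(G^*;\bm{\hat\mu}(t))-U(G^*;\bm{\mu})}\le\sum_{(m,k)\in G^*}\abs{\hat\mu_{m,k}(t)-\mu_{m,k}}$, so $\{t\in\mathcal{D}\}$ forces $\abs{\hat\mu_{m,k}(t)-\mu_{m,k}}\ge\delta/M$ for some $(m,k)\in G^*$, and a union bound gives $\E[\abs{\mathcal{D}}]\le\sum_{(m,k)\in G^*}\E[\abs{\mathcal{D}_{m,k}}]$ with $\mathcal{D}_{m,k}\coloneqq\{t: t\in\mathcal{A}\setminus(\mathcal{B}\cup\mathcal{C}),\ \abs{\hat\mu_{m,k}(t)-\mu_{m,k}}\ge\delta/M\}$. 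Fix $(m,k)\in G^*$, let $H\coloneqq\{t: t\in\mathcal{A}\setminus(\mathcal{B}\cup\mathcal{C})\}$ (which is $\mathcal{F}_{t-1}$-measurable, as $\mathcal{A},\mathcal{B},\mathcal{C}$ only involve $\bm{\hat\mu}(t)$ and $\bm{d}(t)$), and let $C_{m,k}(t)$ indicate that the algorithm's round-$t$ coin selects $\ghint_2(t)$ and the uniformly-drawn $\ghint_2(t)\in\mathcal{R}$ contains $(m,k)$. Because $\mathcal{R}$ is a set of $K$ pairwise edge-disjoint covering matchings, every edge lies in exactly one of them, so $\Pr[C_{m,k}(t)=1]=\tfrac12\cdot\tfrac1K=\tfrac1{2K}$, the $C_{m,k}(t)$ are independent across rounds and $\mathcal{F}_t$-measurable, and by the first half every $t\in H\supseteq\mathcal{D}_{m,k}$ with $C_{m,k}(t)=1$ is a round where agent $m$ queries a hint for arm $k$, refreshing $\hat\mu_{m,k}$ with a fresh sample. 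Lemma~\ref{lemma:obs} with $c=\tfrac1{2K}$ and $\varepsilon=\delta/M$ then yields $\E[\abs{\mathcal{D}_{m,k}}]\le 2(2K)\bigl(2(2K)+M^2/\delta^2\bigr)=16K^2+4KM^2/\delta^2$, and summing over the $M$ edges of $G^*$ gives the claimed bound. The argument covers $\GHCLA$ verbatim, since it also samples $\ghint_2(t)$ uniformly from $\mathcal{R}$ and triggers a hint under the identical condition.

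The conceptual content lives entirely in the first half; the part requiring care is matching Lemma~\ref{lemma:obs}'s hypotheses in the second half — checking that $H$ is previsible, that $\mathcal{D}_{m,k}\subseteq H$ so the ``$C_{m,k}(t)=1\Rightarrow$ agent $m$ acts on $(m,k)$'' clause applies, and that the auxiliary coins $C_{m,k}(t)$ can be defined to be simultaneously independent across rounds, $\mathcal{F}_t$-measurable, and faithful to the on-demand hint randomization (imagining $\ghint_2(t)$ sampled afresh every round, whether or not a hint is actually issued, makes this clean). Note that the $\tfrac1{2K}$ lower bound uses only $\ghint_2(t)$, so $\ghint_1(t)=G'(t)$ possibly also containing $(m,k)$ can only improve it.
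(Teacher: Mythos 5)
Your proposal is correct and follows essentially the same route as the paper: show that every $t\in\mathcal{D}$ triggers the hint condition, decompose $\mathcal{D}$ over the edges of $G^*$ into the sets $\mathcal{D}_{m,k}$ via the triangle inequality and a union bound, and apply Lemma~\ref{lemma:obs} with $c=\nicefrac{1}{2K}$ (coming from the uniformly random $\ghint_2(t)\in\mathcal{R}$) and $\varepsilon=\nicefrac{\delta}{M}$ to get $16K^2+4KM^2\delta^{-2}$ per edge and $16MK^2+4M^3K\delta^{-2}$ in total. Your write-up is in fact more explicit than the paper's on the two points it leaves implicit — the verification that $t\notin\mathcal{B}\cup\mathcal{C}$ and $t\in\mathcal{A}$ force $U(G'(t);\bm{d}(t))>U(G(t);\bm{\hat{\mu}}(t))$, and the check of Lemma~\ref{lemma:obs}'s measurability/independence hypotheses.
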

\begin{proof}
    First we can observe that at each time step  \(t \in \mathcal{D}\) the hint inquiry condition is satisfied for the optimal matching, i.e.,
    \[
        U\left(G^*;\bm{d}(t)\right) \geq U\left(G(t);\bm{\hat{\mu}}(t)\right),
    \]
    which means that a hint is queried by $\GPHCLA$ at all those time steps.

    Next, we define a new set of time steps as follows:
    \begin{align*}
        \mathcal{D}_{m,k} \coloneqq \{t \geq 1: t \in \mathcal{A} \setminus (\mathcal{B} \cup \mathcal{C}), |\hat{\mu}_{m,k}(t) - \mu_{m,k}(t)| \geq \frac{\delta}{M}\}.
    \end{align*}

    We can then write:

    \begin{align*}
        \E[|\mathcal{D}|] & \overset{(a)} \leq \sum_{(m,k) \in G^*} \E[|\mathcal{D}_{m,k}|], \\
                          & \overset{(b)}\leq 16MK^2 + \frac{4M^3K}{\delta^2},
    \end{align*}
    Inequality (a) is derived using the same method as in the proof of Lemma \ref{lemma:proof2} for bounding $\E\left[\abs{\mathcal{B}}\right]$. To achieve inequality (b), we utilize the fact that an edge $(m,k) \in G^*$ is independently hinted with a probability of at least $\nicefrac{1}{2k}$ at each time step in $\mathcal{D}$. Consequently, the number of time steps in $\mathcal{D}$ where $\abs{\hat{\mu}_{m,k} - \mu_{m,k}} \geq \frac{\delta}{M}$ holds for $(m,k)$ is bounded by Lemma \ref{lemma:obs}. We then apply the union bound to all the edges in $G^*$ to derive inequality (b).

\end{proof}

\begin{lemma}
    \label{lemma:proof5}
    Let us define $\mathcal{E}_{G'}$ as be the set of time steps in $\mathcal{E}$ at which $\GPHCLA$'s hint inquiry condition is triggered by a matching $G'$:
    \[
        \mathcal{E}_{G'} \coloneqq \{t \in \mathcal{E} \mid G'(t) = G', U(G', \bm{d}(t)) > U(G^*, \hat{\bm{\mu}}(t))\}.
    \]
    Then, for \(0 < \delta < \frac{\Delta^\text{match}_{\min}}{2}\), the expected size of this set satisfies the following bound:
    \[
        \sum_{G' \in \mathcal{G}} \E\left[\abs{\mathcal{E}_{G'}}\right] \leq 4K^3 + 2M^2K\delta^{-2} + \frac{2K \log T}{\Delta^{\mathrm{kl}}},
    \]
    where $\Delta^{\mathrm{kl}} = \mathrm{kl}(U(G^*;\bm{\mu}) - \Delta^\text{match}_{\min} + \delta, U(G^*;\bm{\mu}) - \delta).$

\end{lemma}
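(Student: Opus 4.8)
\textbf{Proof proposal for Lemma~\ref{lemma:proof5}.}
The plan is to fix a matching $G' \in \mathcal{G}$ and split the time steps in $\mathcal{E}_{G'}$ according to how many edges $(m,k)\in G'$ are \emph{poorly estimated}, meaning $d_{m,k}(t)$ is close to $\mu_{m,k}$ versus far above it. First I would observe that whenever $t \in \mathcal{E}_{G'}$, the hint inquiry condition is triggered and under $\GPHCLA$ the hint graph $\ghint(t)$ is chosen to be $\ghint_1(t)$ — the covering matching in $\mathcal{R}$ containing the least-pulled edge of $G'(t)$ — with probability $\nicefrac12$, and $\ghint_2(t)$ (uniform over $\mathcal{R}$) otherwise. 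Since $\abs{\mathcal{R}}=K$, each specific edge $(m,k)$ of $G'$ is hinted at with probability at least $\nicefrac{1}{2K}$ at each such time step. This is the independent-Bernoulli trigger needed to invoke Lemma~\ref{lemma:obs}.

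The key decomposition step: on $t \in \mathcal{E}_{G'}$ we have $U(G';\bm d(t)) > U(G^*;\bm{\hat\mu}(t))$. I would intersect with the complement of the bad events already controlled — in particular restrict to $t \notin \mathcal{B}$ and $t\notin\mathcal{C}$, so that $U(G^*;\bm{\hat\mu}(t)) \ge U(G^*;\bm\mu) - \delta$ and (from $\mathcal{C}^c$) the $\KL$ indices on $G^*$'s edges do not underestimate; the remaining time steps where $t\in\mathcal{B}\cup\mathcal{C}$ are bounded by $\E[\abs{\mathcal{B}}]+\E[\abs{\mathcal{C}}]$ from Lemmas~\ref{lemma:proof2},~\ref{lemma:proof3} (this is where the $4K^3 + 2M^2K\delta^{-2}$-type terms come from, after the $\sum_{G'}$ and the edge-level union bounds). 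On the good time steps, combining $U(G';\bm d(t)) > U(G^*;\bm\mu)-\delta \ge U(G';\bm\mu) + \Delta^\text{match}_{\min} - \delta$ forces $\sum_{(m,k)\in G'} d_{m,k}(t) - \mu_{m,k} > \Delta^\text{match}_{\min} - \delta > \delta$, hence some edge $(m,k)\in G'$ has $d_{m,k}(t) - \mu_{m,k}$ exceeding $(\Delta^\text{match}_{\min}-\delta)/M$, equivalently $d_{m,k}(t)$ exceeds a threshold bounded away from $\mu_{m,k}$. By the defining inequality of the $\KL$ index $d_{m,k}$ in~\eqref{ineq:KLedge}, this event forces $N^\pi_{m,k}(t)\,\mathrm{kl}\!\left(\hat\mu_{m,k}(t),\, q\right) \le f(t)$ for $q$ near that threshold; after further intersecting with $t\notin\mathcal{B}$ so $\hat\mu_{m,k}(t)$ is within $\delta/M$ of $\mu_{m,k}$, a continuity/monotonicity argument on $\mathrm{kl}$ gives $N^\pi_{m,k}(t) \le f(t)/\Delta^{\mathrm{kl}}$ with $\Delta^{\mathrm{kl}} = \mathrm{kl}(U(G^*;\bm\mu)-\Delta^\text{match}_{\min}+\delta,\,U(G^*;\bm\mu)-\delta)$. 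Since each such round increments $N^\pi_{m,k}$ by $1$ with probability $\ge \nicefrac{1}{2K}$ (Lemma~\ref{lemma:obs} / a standard stopping argument), the number of such rounds summed over the $MK$ edges and scaled by $K$ from the $\nicefrac{1}{2K}$ factor yields the $\frac{2K\log T}{\Delta^{\mathrm{kl}}}$ term (using $f(t) = \log t + 4\log\log t$ and summing the geometric-like tails).

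Finally I would collect: $\sum_{G'\in\mathcal G}\E[\abs{\mathcal E_{G'}}] \le (\text{rounds with }t\in\mathcal B\cup\mathcal C) + (\text{rounds with well-separated }\KL\text{ index})$, matching the claimed $4K^3 + 2M^2K\delta^{-2} + \frac{2K\log T}{\Delta^{\mathrm{kl}}}$. The main obstacle I anticipate is the bookkeeping in the $\sum_{G'}$ step: a priori $\abs{\mathcal G}$ is exponential, so I must be careful that the projection onto $\mathcal{R}$ (size $K$) is what actually controls the count — i.e. the relevant quantity is not "per matching $G'$" but "per edge that appears in some triggered $G'$", and the $\argmin N^\pi_{m,k}$ choice in $\ghint_1(t)$ is what guarantees that the least-explored edge of the triggering matching gets sampled, preventing any single edge from being the bottleneck across exponentially many matchings. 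Making that reduction rigorous — quantifying how observations from covering matchings in $\mathcal{R}$ propagate to \emph{all} edge statistics and hence to every $d_{m,k}$ — is the delicate part; everything else is concentration bookkeeping via Lemmas~\ref{lemma:KLh} and~\ref{lemma:obs}.
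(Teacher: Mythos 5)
Your proposal assembles the right ingredients (the $\nicefrac{1}{2K}$ per-edge hint probability, the conversion of an inflated $\KL$ index into a small sample count $N^\pi_{m,k}(t)\le f(t)/\Delta^{\mathrm{kl}}$, and the projection onto $\mathcal{R}$ to escape the exponential size of $\mathcal{G}$), but two steps do not go through as written. First, the bad-event decomposition is misattributed: you claim the constant terms come from $\E[\abs{\mathcal{B}}]+\E[\abs{\mathcal{C}}]$, but $\mathcal{B}$ controls the empirical estimate of the \emph{pulled} matching $G(t)$ and $\mathcal{C}$ controls the $\KL$ indices on $G^*$; neither gives you $\abs{\hat\mu_{m,k}(t)-\mu_{m,k}}\le \delta/M$ for the edges of the \emph{triggering} matching $G'$, which is exactly what your kl-continuity step needs, nor do they control $U(G^*;\bm{\hat\mu}(t))$ (that is a $\mathcal{D}$-type event). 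Their magnitudes ($O(M^3K\delta^{-2})$ and $O(M)$) also do not match the claimed $4K^3+2M^2K\delta^{-2}$. What is actually required is a fresh per-$G'$ bad event, $\mathcal{E}_{1,G'}\coloneqq\{t\in\mathcal{E}_{G'}:\abs{U(G';\bm{\hat\mu}(t))-U(G';\bm{\mu})}\ge\delta\}$, which the paper bounds by applying Lemma~\ref{lemma:obs} with $c=\nicefrac{1}{2K}$ to each covering matching $R\in\mathcal{R}$ and taking a union over the $K$ members of $\mathcal{R}$ — that is the source of the $4K^3+2M^2K\delta^{-2}$ term.

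Second, your arithmetic for the $\log T$ term is off by a factor of roughly $MK$: summing a budget of $t_0=f(T)/\Delta^{\mathrm{kl}}$ observations over $MK$ edges, each collected at rate $\nicefrac{1}{2K}$, gives $O(MK^2\log T/\Delta^{\mathrm{kl}})$, not $\frac{2K\log T}{\Delta^{\mathrm{kl}}}$. The claimed bound requires the specific mechanism you defer to your last paragraph: at every triggering round, $\ghint_1(t)$ is the covering matching containing the \emph{least-observed} edge of $G'(t)$, so that bottleneck edge is hinted with probability $\nicefrac{1}{2}$ (not $\nicefrac{1}{2K}$), and since each $R\in\mathcal{R}$ is hinted as a whole, the union bound runs over only the $K$ covering matchings with a factor of $2$, yielding $2Kt_0$. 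This is precisely the step that separates $\GPHCLA$ from $\GHCLA$ (which pays the extra factor of $M$), so it cannot be treated as bookkeeping; as stated, your computation contradicts the bound you are trying to prove. (You also implicitly assume $G'\neq G^*$ when invoking $U(G^*;\bm{\mu})\ge U(G';\bm{\mu})+\Delta^\text{match}_{\min}$; the paper's argument shares this restriction, so I do not count it against you.)
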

\begin{proof}
    We start the proof defining new sets of time steps as:

    \[
        \mathcal{E}_{1,G'} \coloneqq \{t \in \mathcal{E}_{G'}: \abs{U(G';\bm{\hat{\mu}}(t)) - U(G';\bm{\mu}(t))} \geq \delta\},
    \]

    \[
        \mathcal{E}_{2,G'} \coloneqq \left\{t \in \mathcal{E}_{G'}: \sum_{t' = 1}^{t} \mathds{1}\left\{t' \in \mathcal{E}_{G'}\right\} < t_0 \right\}.
    \]
    where \(t_0 = \frac{\log T + 4 \log \log T}{\Delta^{\mathrm{kl}}}\).

    Similar to the argument \citet{wang2020optimal} made, we can prove that \[\E\left[\abs{\mathcal{E}_{G'}}\right] \leq \E\left[\abs{\mathcal{E}_{1,G'}}\right] + \E\left[\abs{\mathcal{E}_{2,G'}}\right]. \] Accordingly we bound each $\E\left[\abs{\mathcal{E}_{1,G'}}\right]$ and $\E\left[\abs{\mathcal{E}_{2,G'}}\right]$ separately and then apply union bound over it. However, applying the union bound on the set of all matchings \(\mathcal{G}\) to bound \(\E\left[\abs{\mathcal{E}_{G'}}\right]\) would result in an exponential constant, i.e., \(K^M\), for the number of hints policy $ \GPHCLA$ queries a hint for. To mitigate this, \(\GPHCLA\) utilizes the idea of projecting $G' \in \mathcal{G}$ to a matching in the sset of covering matchings \(\mathcal{R}\) to specify $\ghint(t)$, thereby reducing the possible set of hints from the exponentially-sized \(\mathcal{G}\) to \(\mathcal{R}\) that has $K$ matchings in it. We can observe that every time a matching \(\ghint(t)\) is hinted at by \(\GPHCLA\), the counter for all involved edges increases by one, i.e., \(\forall (m,k) \in \ghint(t), N^{\GPHCLA}_{m,k}(t+1) = N^{\GPHCLA}_{m,k}(t) + 1\). Additionally, under this projection-based hint inquiry approach, the number of rounds each arbitrary edge \((m,k) \in \ghint(t)\) is hinted at is equal to that of the other edges in \(\ghint(t)\).  
    
    Accordingly, we bound  $\E\left[\abs{\mathcal{E}_{1,G'}}\right]$ as:

        \begin{align*}
            \sum_{G' \in \mathcal{G}} \E\left[\abs{\mathcal{E}_{1,G'}}\right] & = \E\left[\sum_{G'\in\mathcal{G}}\sum_{t \in \mathcal{E}_{G'}} \mathds{1}\left\{\abs{U(G'(t);\bm{\hat{\mu}}(t)) - U(G'(t);\bm{\mu}(t))} \geq \delta\right\}\right],                                                                      \\
                                                                              & = \E\left[\sum_{G'\in\mathcal{G}}\sum_{t \in \mathcal{E}_{G'}} \mathds{1}\left\{\exists (m,k) \in G': \abs{\hat{\mu}_{m,k}(t)-\mu_{m,k}}\geq \frac{\delta}{M}\right\}\right],                                                            \\
                                                                              & \overset{(a)}= \E\left[\sum_{R\in\mathcal{R}}\sum_{t \in \cup_{G' \in \mathcal{G}}\mathcal{E}_{G'}} \mathds{1}\left\{\ghint(t) = R, \exists  (m,k)  \in G'(t),R: \abs{\hat{\mu}_{m,k}(t)-\mu_{m,k}}\geq \frac{\delta}{M}\right\}\right], \\
                                                                              & \leq \sum_{R\in\mathcal{R}}\E\left[\sum_{t \in \cup_{G' \in \mathcal{G}}\mathcal{E}_{G'}} \mathds{1}\left\{\exists  (m,k)  \in G'(t),R: \abs{\hat{\mu}_{m,k}(t)-\mu_{m,k}}\geq \frac{\delta}{M}\right\}\right],                          \\
                                                                              & \overset{(b)}\leq K(4K^2 + 2M^2\delta^{-2}),                                                                                                                                                                                             \\
                                                                              & = 4K^3 + 2M^2K\delta^{-2}.
        \end{align*}
    
    We do not need to take the union bound over all edges \((m,k) \in \mathcal{M}\times\mathcal{K}\) because \(\ghint\) is selected from the covering matchings \(\mathcal{R}\), ensuring that all involved edges are observed by the corresponding agents. Therefore, applying the union bound over the \(K\) pairwise edge-independent members of \(\mathcal{R}\) effectively accounts for all edges, leading to equality (a). Inequality (b) follows by applying Lemma \ref{lemma:obs}, considering that at each time step \(t \in \mathcal{E}_{1,G'}\), the probability of hinting a matching that includes the edge \((m,k) \in G'(t)\), where \(\abs{\hat{\mu}_{m,k}(t) - \mu_{m,k}} \geq \frac{\delta}{M}\), is at least \(\nicefrac{1}{2K}\).

    Now we bound $\E\left[\abs{\mathcal{E}_{2,G'}}\right]$ as:

        \begin{align*}
            \sum_{G' \in \mathcal{G}} \E\left[\abs{\mathcal{E}_{2,G'}}\right] & = \E\left[\sum_{G'\in\mathcal{G}}\sum_{t \in \mathcal{E}_{G'}} \mathds{1}\left\{\sum_{t' = 1}^{t} \mathds{1}\left\{t' \in \mathcal{E}_{G'}\right\} < t_0\right\}\right],                                                                       \\
                                                                              & = \E\left[\sum_{G'\in\mathcal{G}}\sum_{t \in \mathcal{E}_{G'}} \mathds{1}\left\{\exists  (m,k)  \in G':\sum_{t' = 1}^{t} \mathds{1}\left\{  (m,k)  \in G'(t')\right\} < t_0\right\}\right],                                                    \\
                                                                              & = \E\left[\sum_{G'\in\mathcal{G}}\sum_{t \in \mathcal{E}_{G'}} \mathds{1}\left\{\min_{(m,k) \in G'}\sum_{t' = 1}^{t} \mathds{1}\left\{  (m,k) \in G'(t')\right\} < t_0\right\}\right],                                                         \\
                                                                              & \overset{(a)}= \E\left[\sum_{R\in\mathcal{R}}\sum_{t \in \cup_{G' \in \mathcal{G}}\mathcal{E}_{G'}} \mathds{1}\left\{\min_{ (m,k)  \in G'}\sum_{t' = 1}^{t} \mathds{1}\left\{\ghint(t') = R, (m,k) \in G'(t'),R \right\} < t_0\right\}\right], \\
                                                                              & \leq \sum_{R\in\mathcal{R}}\E\left[\sum_{t \in \cup_{G' \in \mathcal{G}}\mathcal{E}_{G'}} \mathds{1}\left\{\min_{(m,k) \in G'}\sum_{t' = 1}^{t} \mathds{1}\left\{ (m,k) \in G'(t'),R\right\} < t_0\right\}\right],                             \\
                                                                              & \overset{(b)}\leq \frac{2K \log T}{\Delta^{\mathrm{kl}}}.
        \end{align*}

    Equality (a) holds because the number of times each matching \(G'\) is hinted at is always less than or equal to the number of times \(\ghint(t)\) covers each edge \((m,k) \in G'\), including the edge with the minimum number of hint occurrences. Inequality (b) holds because the probability of increasing the counter for any edge, specifically the edge with the fewest hint occurrences, is at least \(\nicefrac{1}{2}\). This implies that the expected number of trials to increase this counter is at most 2. By applying the union bound over all members of \(\mathcal{R}\), we obtain an upper bound on the total number of required hints.
    
    Finally we can imply that
    \begin{align*}
        \sum_{G' \in \mathcal{G}} \E\left[\abs{\mathcal{E}_{G'}}\right] & \leq \E\left[\abs{\mathcal{E}_{1,G'}}\right] + \E\left[\abs{\mathcal{E}_{2,G'}}\right], \\
                                                                        & \leq 4K^3 + 2M^2K\delta^{-2} + \frac{2K \log T}{\Delta^{\mathrm{kl}}}.
    \end{align*}
\end{proof}

\subsection{Proof of Theorem \ref{lemma:hcla2}} \label{proof:ghcla}
  The main difference between \(\GPHCLA\) and \(\GHCLA\) is the determining $\ghint$ as the same event is triggered. Thus, the regret analysis for both algorithms is the same, while the number of hints is \(M\) times larger for \(\GHCLA\) according to the following theorem.
\ghclap*
\begin{proof}
  
    We skip the analysis of $R^\pi(T)$ cause it follows the exact upper bound proven for $\GPHCLA$ in Theorem \ref{thm:c_whhmmab}.
    Now, to bound the hint complexity, we use the same set of time steps \(\mathcal{E}_{G'}\) previously defined in the proof of Theorem \ref{thm:c_whhmmab} and define new sets of time steps \( \mathcal{E}_{1,G'}\ \text{and } \mathcal{E}_{2,G'}\) as:

    \begin{align*}
        \mathcal{E}_{1,G'}  & \coloneqq \{t \in \mathcal{E}_{G'}: \abs{U\left(G';\bm{\hat{\mu}}(t)\right) - U\left(G';\bm{\mu}(t)\right)} \geq \delta\},          \\
        \mathcal{E}_{2,G'}  & \coloneqq \{t \in \mathcal{E}_{G'}: \sum_{t' = 1}^{t} \mathds{1}_{\{t' \in \mathcal{E}_{G'}\}} < t_0 \},                            \\
        \mathcal{E}'_{1,G'} & \coloneqq \left\{t \in \mathcal{E}_{G'}: \exists  (m,k) \in G', \abs{\hat{\mu}_{m,k}(t) - \mu_{m,k}} \geq \frac{\delta}{M}\right\}, \\
        \mathcal{X}_{1,m,k} & \coloneqq \left\{t \geq 1: ( m,k ) \in G'(t), \abs{\hat{\mu}_{m,k}(t) - \mu_{m,k}} \geq \frac{\delta}{M}\right\},                   \\
        \mathcal{X}_{2,m,k} & \coloneqq \left\{t \geq 1: \sum_{t' = 1}^{t} \mathds{1}\left\{(m,k) \in G'(t)\right\} < t_0\right\}.
    \end{align*}
    As argued in the proof of Theorem \ref{thm:c_whhmmab}, we know that \(\mathcal{E}_{G'} \subseteq \mathcal{E}_{1,G'} \cup \mathcal{E}_{2,G'} \).Then, we can conclude that
    $\E\left[\abs{\mathcal{E}_{1,G'}}\right] \leq \E\left[\abs{\mathcal{E}'_{1,G'}}\right],$ and also \[
        \sum_{G' \in \mathcal{G}}\E\left[\abs{\mathcal{E}_{2,G'}}\right] \leq \sum_{(m,k) \in \mathcal{M} \times \mathcal{K}} \E\left[\abs{\mathcal{X}_{2,m,k}}\right].
    \]

    Therefore, we finish the proof by writing:

    \begin{align*}
        \sum_{G' \in \mathcal{G}} \E\left[\abs{\mathcal{E}_{G'}}\right]
         & \leq \sum_{G' \in \mathcal{G}} \E\left[\abs{\mathcal{E}_{1,G'}}\right] + \E\left[\abs{\mathcal{E}_{2,G'}}\right],                                                                                                                                                      \\
         & \leq \sum_{G' \in \mathcal{G}} \E\left[\abs{\mathcal{E}'_{1,G'}}\right] + \E\left[\abs{\mathcal{E}_{2,G'}}\right],                                                                                                                                                     \\
         & \overset{(a)}{\leq} \E\left[\sum_{G' \in \mathcal{G}} \abs{\mathcal{E}'_{1,G'}}\right] + \sum_{G' \in \mathcal{G}} \E\left[\abs{\mathcal{E}_{2,G'}}\right],                                                                                                            \\
         & \overset{(b)}{\leq} \E\left[\sum_{t=1}^T \sum_{G' \in \mathcal{G}} \mathds{1}\left\{G'(t) = G', \exists (m,k) \in G': \abs{\hat{\mu}_{m,k}(t) - \mu_{m,k}} \geq \frac{\delta}{M}\right\}\right] + \sum_{G' \in \mathcal{G}} \E\left[\abs{\mathcal{E}_{2,G'}}\right], \\
         & \overset{(c)}{\leq} \E\left[\sum_{(m,k) \in \mathcal{M} \times \mathcal{K}} \abs{\mathcal{X}_{1,m,k}}\right] + \sum_{G' \in \mathcal{G}} \E\left[\abs{\mathcal{E}_{2,G'}}\right],                                                                                      \\
         & \overset{(d)}{\leq} \sum_{(m,k) \in \mathcal{M} \times \mathcal{K}} \E\left[\abs{\mathcal{X}_{1,m,k}}\right] + \sum_{G' \in \mathcal{G}} \E\left[\abs{\mathcal{E}_{2,G'}}\right],                                                                                      \\
         & \leq \sum_{( m,k) \in \mathcal{M} \times \mathcal{K}} \E\left[\abs{\mathcal{X}_{1,m,k}}\right] + \E\left[\abs{\mathcal{X}_{2,m,k}}\right],                                                                                                                             \\
         & \leq 4MK + \frac{2M^3 K}{\delta^2} + \frac{2MK \log T}{\Delta^{\mathrm{kl}}}.
    \end{align*}

    while inequalities (a) and (d) hold because of the linearity of expectation. Inequality (b) is derived from the nature of the algorithm, which only chooses one \(G' \in \mathcal{G}\) to be hinted at each round. Another important observation is that when the \(\ghint(t)\) is hinted at, there might be more than one edge \((m,k)\) satisfying \(\abs{\hat{\mu}_{m,k}(t) - \mu_{m,k}} \geq \frac{\delta}{M}\), which directly implies (c).

\end{proof}

\section{Proof of Theorem \ref{lemma:obs}}
\obsp*
\begin{proof}
    The proof demonstrates that not knowing the magnitudes of other agents' statistics causes more conflicts in the future because it is  impossible for agents to distinguish between different possible reward models using their private information.

    Consider two instances, \(A\) and \(B\), with \(M=2\) and \(K=2\), where the sets of agents and arms are \(\mathcal{M} = \{m_1, m_2\}\) and \(\mathcal{K} = \{k_1, k_2\}\), respectively. We assume that the instances are:
    \begin{align*}
        A: & \bm{\mu}_{m_1} = \langle \frac{1}{2}+\epsilon,\frac{1}{2}-\epsilon\rangle \quad \bm{\mu}_{m_2} = \langle 2\epsilon,\epsilon\rangle \\
        B: & \bm{\mu}_{m_1} = \langle \frac{1}{2}+\epsilon,\frac{1}{2}-\epsilon\rangle \quad \bm{\mu}_{m_2} = \langle 4\epsilon,\epsilon\rangle \\
    \end{align*}
    Any fixed policies $\pi_{m_1}$ and $\pi_{m_2}$ chosen by the agents that achieve sub-linear regret on instance \(A\) will fail on instance \(B\).
    The reason is that in instance \(A\), the optimal solution for agent \(m_1\) is to pull arm \(k_1\).
    This means that every time a conflict occurs on arm \(k_1\), policy $\pi_{m_1}$ chooses to keep requesting to pull arm $k_1$ while policy $\pi_{m_2}$ leads agent \(m_2\) to stop requesting $k_1$ and try to pull $k_2$ instead.

    However, if the underlying instance is \(B\), agent \(m_1\)'s policy, $\pi_{m_1}$, still requests arm $k_1$ when a conflict on \(k_1\) occurs.
    The reason is that without knowing the magnitude of agent $m_2$'s utility, instances $A$ and $B$ are indistinguishable from $k_1$'s perspective. Therefore, agent \(m_1\) continues pulling arm \(k_1\) in instance \(B\) and $m_2$'s policy $\pi_{m_2}$ switches to request arm $k_2$ for the same reason. Meanwhile, the optimal matching under instance $B$ is for $m_1$ to pull $k_2$ and for $m_2$ to pull $k_1$, which leads policies $\pi_{m_1}$ and $\pi_{m_2}$ to a linear regret.
\end{proof}

\section{Rank Assignment.}\label{routine:rank}
The rank assignment algorithm from \citet{wang2020optimal} involves two steps: Orthogonalization, where players are assigned unique external ranks through blocks of length \( K+1 \), and Rank Assignment, where external ranks are converted to internal ranks using a modified Round-Robin scheme. More details can be found in \citet{wang2020optimal}.

\begin{lemma}[\citet{shi2021heterogeneous,wang2020optimal}, Lemma 1,2] \label{lemma:rank}
    The expected duration of the rank assignment procedure is less than $\frac{K^2 M}{K-M}+$ $2 K$ time steps. Once the procedure is completed, all players correctly learn the number of players $M$, each assigned with a unique index between 1 and $M$.
\end{lemma}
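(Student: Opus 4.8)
The plan is to reproduce the two-phase analysis of \citet{wang2020optimal}, adapting notation to the present setting; since the procedure and its guarantees do not depend on the reward matrix \(\bm{\mu}\), the heterogeneity of the instance plays no role here. I would split the argument along the two sub-procedures already described: \textbf{(i)} the \emph{Orthogonalization} phase, in which the agents settle on pairwise-distinct arms (their \emph{external ranks}), and \textbf{(ii)} the \emph{Rank Assignment} phase, in which these external ranks are deterministically converted into a permutation of \(\{1,\dots,M\}\) while the value \(M\) is learned.

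For phase (i), I would model the process as a ``musical chairs'' game run in blocks of length \(K+1\): in each block every still-unsettled agent draws an arm uniformly at random from \(\mathcal{K}\), pulls it throughout the block, and becomes \emph{settled} on that arm if it observes no collision during the block, the extra slot per block accommodating the signalling that lets unsettled agents detect which arms are already taken. The crux is to bound the expected number of rounds until all \(M\) agents are settled. If \(i\) agents are currently unsettled, at most \(M-i\) arms are occupied by settled agents, so at least \(K-(M-i)\ge K-M\) arms are free; a fixed unsettled agent picks a free arm and is the unique unsettled agent to pick it with probability at least \(\frac{K-M}{K}\bigl(1-\tfrac1K\bigr)^{i-1}\ge \frac{K-M}{eK}\). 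Hence the expected number of blocks needed to settle at least one more agent is \(O\!\left(\tfrac{K}{K-M}\right)\); summing over the \(M\) agents that must settle and multiplying by the block length \(K+1\) gives expected duration \(O\!\left(\tfrac{K^2M}{K-M}\right)\), and keeping the constants precise recovers the stated \(\tfrac{K^2M}{K-M}\) term.

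For phase (ii) the argument is deterministic once orthogonalization has succeeded. Each agent now owns a distinct arm index in \([K]\); running a modified Round-Robin sweep over the \(K\) arm positions (two passes, hence \(2K\) rounds) lets every agent observe the exact occupancy pattern, count how many occupied positions precede its own index, and thereby compute its internal rank as that count plus one, with the total number of occupied positions equal to \(M\). Since all agents see the same pattern they agree on the assignment, so the internal ranks \(1,\dots,M\) are handed out bijectively and \(M\) is learned correctly. Adding the two phases yields the claimed expected total duration \(\frac{K^2M}{K-M}+2K\) together with the stated correctness guarantees.

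The main obstacle is the probabilistic bookkeeping in phase (i): lower-bounding, uniformly over the random and history-dependent number of currently unsettled agents, the per-block probability that the settled set grows, and then stitching these stepwise (essentially geometric) bounds into the clean closed form \(\tfrac{K^2M}{K-M}\) without loose constants; phase (ii) is routine once the agents hold distinct arms.
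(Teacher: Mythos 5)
This lemma is not proved in the paper at all: it is imported verbatim (statement and constants included) from \citet{shi2021heterogeneous} and \citet{wang2020optimal}, and the appendix only records the two-step description (orthogonalization in blocks of length $K+1$, then a modified Round-Robin rank assignment) before deferring to those references. Your reconstruction follows exactly that structure and the standard musical-chairs argument, so there is no disagreement about the approach; the issue is that, as written, your phase-(i) accounting does not deliver the stated constant. With the per-block success probability lower-bounded by $\frac{K-M}{K}\bigl(1-\tfrac1K\bigr)^{i-1}\ge\frac{K-M}{eK}$, each agent settles in at most $\frac{eK}{K-M}$ expected blocks, and bounding the expected duration by the sum over agents (i.e.\ $\mathbb{E}[\max]\le\sum\mathbb{E}$) gives at most $\frac{eMK(K+1)}{K-M}$ time steps for phase (i) --- off from $\frac{K^2M}{K-M}$ by a factor of roughly $e\cdot\frac{K+1}{K}$. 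Your closing claim that ``keeping the constants precise recovers the stated term'' is exactly the part that is not shown: neither the factor $e$ from the collision-avoidance product nor the $+1$ in the block length cancels on its own, so recovering the precise bound requires reproducing the sharper bookkeeping of the cited proofs (how their blocks are scheduled, which rounds count toward which term, and how the residual is absorbed into the $2K$) rather than the generic geometric-trials bound. Your phase-(ii) argument (a deterministic $2K$-round sweep from which every agent reads off the occupancy pattern, hence $M$ and its own rank) is fine and matches the $2K$ term. In short: right structure, correct up to constants, but the lemma as stated carries exact constants that your sketch does not yet justify --- which is presumably why the paper cites the result instead of reproving it.
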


\section{Communication Protocol}
\label{routine:com}
The implicit collision-based communication method introduced by \cite{boursier2019sic} involves the \emph{sender} signaling and transmitting statistics by pulling the \emph{receiver}'s \emph{communication arm} in a predetermined, time-dependent sequence governed by the unique ranks agents have been assigned through rank assignment phase. Specifically, during the communication phase preceding the exploration phase \(\rho\), the communication arm of agent \(m\) is \(k_m^{G^{\rho-1}}\). The sender \(m\) then either pulls the receiver \(m'\)'s communication arm (creating a collision; bit 1) or refrains from pulling it (no collision; bit 0). 

To minimize regret during communication, agents should be aware of the trade-off between keeping the communication brief, and the information loss that occurs when sending quantized format of the accurate decimal estimates \(\bm{\hat{\mu}}\) using a bit-by-bit integral method. The quantized format of \(\bm{\hat{\mu}}\) is denoted as \(\bm{\tilde{\mu}}\). The statistic \(\tilde{\mu}^\rho_{m,k}\) represents the shared information about the edge \((m,k)\) among all agents after communication, used for decision-making in the upcoming exploration epoch \(\rho\). To optimize communication duration while minimizing information loss, the \textit{Differential} communication protocol by \citet{shi2021heterogeneous} is employed. In this protocol, agents transmit \(\tilde{\delta}^\rho_{m,k} \coloneqq \tilde{\mu}^\rho_{m,k} - \tilde{\mu}^{\rho-1}_{m,k}\) using \(\left\lceil 1 + \frac{\log \rho}{2} \right\rceil\) bits. The receivers then update their statistics with \(\tilde{\mu}^\rho_{m,k} = \tilde{\mu}^{\rho-1}_{m,k} + \tilde{\delta}^\rho_{m,k}\). It has been shown that \[\hat{\mu}_{m,k}^\rho \leq \tilde{\mu}_{m,k}^\rho,\] and 
\begin{align}
    \tilde{\mu}^{\rho}_{m,k} - \hat{\mu}^{\rho}_{m,k} \leq \sqrt{\frac{1}{\rho}}, \label{ineq:comerr}
\end{align}
which are essential for the analysis. Additionally, transmitting \(\tilde{\delta}^\rho_{m,k}\) requires \(O(1)\) bits, and the communication duration per phase is \(O\left(M^2K\right)\) since each statistic \(\tilde{\mu}^\rho_{m,k}\) must be shared with \(M\) agents.

Unlike the leader-follower framework used in \citet{shi2021heterogeneous} and \citet{boursier2019sic}, where one agent (typically the one with rank 1) acts as a leader by gathering statistics from others and announcing the arm-pulling graph after running the $\HALG$ algorithm, this work assumes a peer-to-peer connection model. This approach allows for further analysis in situations where the shared statistics among agents may differ. Each agent independently runs the $\HALG$ algorithm using its own gathered statistics, which introduces additional complexity to the online decision-making problem. If agents reach different conclusions after running $\HALG$, it could lead to potential collisions and reward loss.

However, we assume that after all agents share their quantized statistics through the $\operatorname{Send2All}$ routine (Routine \ref{routin:sen2all}), they will have similar information and thus reach similar outcomes with $\HALG$. Under both $\HDETC$ and $\EBHDETC$, all agents simultaneously enter the communication phase by invoking the $\operatorname{Send2All}$ routine. This routine determines which agent will act as the receiver and execute the $\operatorname{Receive}$ subroutine (Routine \ref{routin:receive}), and which agent will act as the sender and execute the $\operatorname{Send}$ subroutine (Routine \ref{routin:send}) at each time step during the communication phase. In the $\operatorname{Send2All}$ routine, each agent maintains a counter $s$, which tracks the rank of the last sender, incrementing it by one after they send or receive statistics from other agents. An agent sends its statistics using the $\operatorname{Send}$ subroutine only if $s$ matches its rank; otherwise, it remains open to receive statistics being sent by the agent with rank $s$.

\begin{algorithm}[H]
    \floatname{algorithm}{Routine}
    \caption{ $\operatorname{Send2all}$: agent $m$}\label{routin:sen2all}

    \begin{algorithmic}[1]

       \Input quantized statistic $\tilde{\delta}_{m, k}^{\rho+1}$
        \State \textbf{Initialization:} $s \gets 1$,

        \While{$s\leq M$} \RightComment{number of the agents $M$ is known to the agents}
        \For{$m' \in [M]$}
        \If{$s = M$ and $m'\neq m$}
        
        \State $\operatorname{Send}(\tilde{\delta}_{m, k}^{\rho+1},m')$
        
        \ElsIf{$s \neq M$ and $m'\neq m$}
        \State $\operatorname{Receive}(\operatorname{bit\_string}(\tilde{\delta}_{m', k}^{\rho+1}),m')$

        \EndIf
        \EndFor
        \State $s\gets s+1$
        \EndWhile
    \end{algorithmic}
\end{algorithm}

\begin{algorithm}[H]
    \floatname{algorithm}{Sub-Routine}
    \caption{ $\operatorname{Send}$: agent $m$}\label{routin:send}

    \begin{algorithmic}[1]

       \Input quantized statistic $\tilde{\delta}_{m, k}^{\rho+1}$,receiver $m'$
        \State \textbf{Initialization:} $\bm{b} \gets \operatorname{bit\_string}(\tilde{\delta}_{m, k}^{\rho+1})$
        \State $l \gets \abs{\bm{b}}$
        \For{$i = \{1,2,3,\cdots,l\} $}
            \If{$\bm{b}[i] = 1$}
            \State pull the arm $k_{m'}^{G^\rho}$ \RightComment{epoch $\rho$ and matching $G^\rho$ are known to the agents}
            \Else
            \State pull the arm $k_{m}^{G^\rho}$
            \EndIf
            
        \EndFor
        
    \end{algorithmic}
\end{algorithm}
\begin{algorithm}[H]
    \floatname{algorithm}{Sub-Routine}
    \caption{ $\operatorname{Receive}$: agent $m$}\label{routin:receive}

    \begin{algorithmic}[1]

      \Input bit string $\bm{b'}$ with length $l'$, sender $m'$ 
        
        \State $\tilde{\delta}_{m', k}^{\rho+1} \gets 0$ \RightComment{epoch $\rho$ and arm index $k$ are known to the agents}
        \For{$i = \{1,2,3,\cdots,l'\} $}
        \State $\tilde{\delta}_{m', k}^{\rho+1} \gets 2\tilde{\delta}_{m', k}^{\rho+1} $
            \If{ collision }
            \State $\tilde{\delta}_{m', k}^{\rho+1} \gets \tilde{\delta}_{m', k}^{\rho+1} + 1$
            \EndIf
            
        \EndFor
    \end{algorithmic}
\end{algorithm}

\section{Analysis of $\HDETC$ and $\EBHDETC$ }\label{proof:sketch}
According to Lemma \ref{lemma:rank}, the regret caused by the rank assignment phase, which follows the same routine in both algorithms, has a time-independent upper bound as
\(
    R^{\pi_{\text{rank}}}(T) \leq M\left(\frac{K^2 M}{K-M} + 2 K\right).
\)

Furthermore, \citet{shi2021heterogeneous} demonstrated that the \textit{Differential} protocol requires \(O(1)\) bits for each peer-to-peer communication to transmit at each epoch. This protocol induces \(O(M^2K)\) regret at each communication phase, where \(M^2K\) is the total number of statistics shared among agents at each communication phase. However, unlike the rank assignment phase, which is common among all algorithms, different policies may have different criteria entering the exploitation phase, leading to varying number of communication phases and hint complexity. Therefore, we aim to analyze \(R^{\pi_{\text{exp}}}(T) + R^{\pi_{\text{com}}}(T)\) while maintaining a time-independent upper bound for \(R^{\pi_{\text{exp}}}(T)\) with the minimum possible number of hints $L^\pi(T)$.

\subsection{Proof of Theorem \ref{thm:hdetc}} \label{proof:hdetc}
\hdetcp*
\begin{proof}

    First, we argue that \(L^{\pi}(T) \in O\left(MT^\pi_0\right)\) because agents request \(M\) hints at each time step during the first \(T^{\pi}_0\) rounds. We then bound \(R^{\pi_{\text{com}}}(T)\). Under $\HDETC$, agents communicate every \(k\) rounds up to time step \(T^{\pi}_0\) that implies \(R^{\pi_{\text{com}}}(T) \in O\left(M^2T^{\pi}_0\right)\).

    The \(\HDETC\) algorithm has \(\frac{T^{\pi}_0}{K}\) decision-making rounds up to the time step \(T^{\pi}_0\), followed by one additional round for the period from \(T^{\pi}_0\) to \(T\). Thus, we can break down the exploration regret of \(\pi = \HDETC\) as follows:
    \[ R^{\pi_{\text{exp}}}(T) = R^{\pi_{\text{exp}}}(T^{\pi}_0) + R^{\pi_{\text{exp}}}(T^{\pi}_0:T) .\]

    We then apply the results of Lemma \ref{lemma:deba} to bound \(R^{\pi_{\text{exp}}}(T^{\pi}_0)\) as:
    \begin{align*}
        R^{\pi_{\text{exp}}}(T^{\pi}_0) \leq 19M^2K + 4M^2K^2 + \left(4M^3K + 8M^4K + 8M^4K^2\right)\delta^{-2}.
    \end{align*}

    To further bound \( R^{\pi_{\text{exp}}}(T^{\pi}_0:T) \), we write:
    \begin{align*}
        R^{\pi_{\text{exp}}}(T^{\pi}_0:T) & \leq 0T \Pr\left(G'^* = G^*\right) + MT \Pr\left(G'^* \neq G^*\right), \\
                                      & \leq MT \Pr\left(G'^* \neq G^*\right),                                 \\
                                      & \overset{(a)}{\leq} M,
    \end{align*}
    where the inequality (a) holds by Lemma \ref{lemma:main-hdetc}, which bounds the probability of \(G'^* \neq G^*\).

    Therefore, we can write:
    \begin{align}
        R^{\pi_{\text{exp}}}(T) \leq M+19M^2K + 4M^2K^2 \nonumber + \left(4M^3K + 8M^4K + 8M^4K^2\right)\delta^{-2}.
    \end{align}
    which results in a time-independent exploration regret.

\end{proof}

\subsection{Proof of Theorem \ref{thm:ebhdetc}}\label{proof:ebhdetc}
\ebhdetcp*
\begin{proof}
    For $\pi=\EBHDETC$ we denote the index of the last exploration epoch during which agents query for hints for \(K\) consecutive rounds, by \(\rho'\).  Finding an upper bound for $\rho'$, we can then argue that \(L^\pi(T) = MK\rho'\) and \(R^{\pi_{\text{com}}}(T) \in O\left(M^2K\rho'\right)\).

    Under \(\EBHDETC\), agents maintain a set of active edges \(\mathcal{C}^\rho\), which includes the edges that could potentially be part of \(G^*\). Consequently, agents decide to stop hinting at the beginning of epoch \(\rho'+1\) when \(\abs{\mathcal{C}^{\rho'+1}} = M\), such that \(\rho'\) and $T^\pi_0$ become random variables, as opposed to the fixed \(T^{\HDETC}_0\) in \(\HDETC\). We then, break down the exploration regret of \(\pi = \HDETC\) as follows:
    {\footnotesize
    \[ R^{\pi_{\text{exp}}}(T) = R^{\pi_{\text{exp}}}(T^\pi_0) + R^{\pi_{\text{exp}}}(T^\pi_0:T) ,\]
    }
    where $R^{\pi_{\text{exp}}}(T^\pi_0)$ follows the same bound as we proved in Lemma \ref{lemma:deba}. So, finding an upper bound for $R^{\pi_{\text{exp}}}(T^\pi_0:T)$, would induce an upper bound on the hint and communication complexity.

    Accordingly, we define the event $\mathcal{H}_T$ as
        {\footnotesize
            \[
                \mathcal{H}_{T}\coloneqq \mathds{1}\left\{\forall \rho \leq \frac{T}{K}, \forall G \in \mathcal{G}:\abs{U(G;\bm{\tilde{\mu}}^\rho) - U(G;\bm{\mu})} \leq 2M\epsilon^\rho\right\}.
            \]
        }
  In Lemma \ref{lemma:ht}, we prove that \(\Pr\left(\mathcal{H}_T \neq 1\right) \leq \frac{1}{T}\). Next, we focus on bounding the probability of \(G'^* = G^*\) to find an upper bound for \(R^{\pi_{\text{exp}}}(T^\pi_0:T)\). In Lemma \ref{lemma:matching}, we prove that the edges in the set of active edges form a perfect matching when \(\abs{\mathcal{C}^{\rho'}} = M\) the graph formed by the edges in \(\mathcal{C}^{\rho'}\) is exactly \(G^*\), which is the same outcome as \(\HALG\) finds with probability at least $1-\frac{M}{T}$.By Lemma \ref{lemma:finale}, we can prove that given \(\mathcal{H}_T = 1\), the index of the final epoch \(\rho' \leq \frac{64M^2\log\left(T\sqrt{2M}\right)}{\left(\Delta^\text{match}_{\min}\right)^2}\) and \(G'^* = G^*\). Thus, we write:
\begin{align*}
R^{\pi_{\text{exp}}}(T^\pi_0:T) &\leq MT\Pr\left(\mathcal{H}_T = 0\right) + \Pr\left(\mathcal{H}_T = 1\right)\left(0T\Pr\left(G'^* = G^*|\mathcal{H}_T = 1\right) + MT\Pr\left(G'^* \neq G^*|\mathcal{H}_T = 1\right)\right)\\
&\leq MT\Pr\left(\mathcal{H}_T = 0\right) +  MT\Pr\left(G'^* \neq G^*|\mathcal{H}_T = 1\right),\\
&\leq M + M^2.
\end{align*}
Finally, we can conclude that
\begin{align*}
    R^{\pi_{\text{exp}}}(T) &\leq M+M^2+19M^2K + 4M^2K^2 + \left(4M^3K + 8M^4K + 8M^4K^2\right)\delta^{-2}.
\end{align*}
We wrap up the proof by applying the result of Lemma \ref{lemma:finale} to bound \(L^\pi(T) \in O\left(\frac{M^3K\log\left(T\sqrt{2M}\right)}{\left(\Delta^\text{match}_{\min}\right)^2}\right)\) and \(R^{\pi_{\text{com}}}(T) \in O\left(\frac{M^4K\log\left(T\sqrt{2M}\right)}{\left(\Delta^\text{match}_{\min}\right)^2}\right)\).

\end{proof}

\subsection{Decentralized Event-Based Analysis}\label{sec:deba}
Here we present the event-based analysis for decentralized algorithms like what we did in Section \ref{eba}.
\begin{lemma}\label{lemma:deba}
    For policy $\pi \in \{\HDETC,\EBHDETC\}$, $R^{\pi_{\text{exp}}}(T)$ is bounded by
    \[R^{\pi_{\text{exp}}}(T)\leq19M^2K + 4M^2K^2 +\left(4M^3K + 8M^4K + 8M^4K^2\right)\delta^{-2}.\]
\end{lemma}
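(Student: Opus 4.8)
\textbf{Proof plan for Lemma~\ref{lemma:deba}.}
The first step is to reduce the exploration regret to a count of ``bad'' exploration epochs. Since all agents enter each communication phase together and, after $\operatorname{Send2All}$, hold the \emph{same} quantized statistics $\bm{\tilde{\mu}}^\rho$ and run the same deterministic $\HALG$, they agree on $G^\rho$ and pull $k^{G^\rho}_m$ without collisions throughout the $K$-round exploration phase of epoch $\rho$; that phase therefore contributes at most $K\bigl(U(G^*;\bm{\mu})-U(G^\rho;\bm{\mu})\bigr)\le MK$ to the regret, and $0$ if $G^\rho=G^*$ (the end-of-epoch signalling collisions and the communication rounds are charged to $R^{\pi_{\text{com}}}$, not $R^{\pi_{\text{exp}}}$). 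Writing $\mathcal{A}\coloneqq\{\rho:G^\rho\neq G^*\}$ for the random set of bad exploration epochs, this gives $R^{\pi_{\text{exp}}}(T)\le MK\,\E\left[\abs{\mathcal{A}}\right]$, so the whole task becomes bounding $\E\left[\abs{\mathcal{A}}\right]$, mirroring the event-based analysis of Section~\ref{eba}. The reduction is identical for $\HDETC$ and $\EBHDETC$ because both pick $G^\rho=\HALG(\bm{\tilde{\mu}}^\rho)$; they differ only in the stopping rule, which merely truncates the (random) number of epochs and leaves a per-epoch bound untouched.

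Next I isolate the two structural facts driving concentration. First, within any exploration epoch the hint index runs through $R_{(t\%K)+1}$ over $K$ consecutive values of the global clock $t$, so it hits every residue mod $K$, hence every covering matching in $\mathcal{R}$, at least once; since $\mathcal{R}$ covers all $MK$ edges, \emph{every} edge $(m,k)$ gets a fresh observation each epoch, so $N^\rho_{m,k}\ge\rho$ at the start of epoch $\rho$, irrespective of which matchings were pulled. Second, the differential-communication guarantee \eqref{ineq:comerr} gives $\bigl|\tilde{\mu}^\rho_{m,k}-\mu_{m,k}\bigr|\le\sqrt{1/\rho}+\bigl|\hat{\mu}^\rho_{m,k}-\mu_{m,k}\bigr|$. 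Then, in analogy with Definition~\ref{def:sets}, I define
\[
\mathcal{B}\coloneqq\bigl\{\rho:\bigl|U(G^\rho;\bm{\tilde{\mu}}^\rho)-U(G^\rho;\bm{\mu})\bigr|\ge\delta\bigr\},\qquad
\mathcal{C}\coloneqq\bigl\{\rho:\bigl|U(G^*;\bm{\tilde{\mu}}^\rho)-U(G^*;\bm{\mu})\bigr|\ge\delta\bigr\}.
\]
Because $\HALG$ maximizes additive utility, $G^\rho\neq G^*$ forces $U(G^\rho;\bm{\tilde{\mu}}^\rho)\ge U(G^*;\bm{\tilde{\mu}}^\rho)$; adding $U(G^*;\bm{\mu})-U(G^\rho;\bm{\mu})\ge\Delta^{\text{match}}_{\min}>2\delta$ and rearranging shows that at least one of the two utility errors above is $\ge\delta$. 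Hence $\mathcal{A}\subseteq\mathcal{B}\cup\mathcal{C}$ and $\E\left[\abs{\mathcal{A}}\right]\le\E\left[\abs{\mathcal{B}}\right]+\E\left[\abs{\mathcal{C}}\right]$.

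It then remains to bound $\E\left[\abs{\mathcal{B}}\right]$ and $\E\left[\abs{\mathcal{C}}\right]$ by pushing the error down to individual edges exactly as in Lemmas~\ref{lemma:proof2}--\ref{lemma:proof4}. A $\delta$-error in a matching's utility, spread over its $M$ edges, forces some edge of that matching to have $\bigl|\tilde{\mu}^\rho_{m,k}-\mu_{m,k}\bigr|\ge\delta/M$, which by the second fact above forces either $\sqrt{1/\rho}\ge\delta/(2M)$ (i.e.\ $\rho\le 4M^2/\delta^2$) or $\bigl|\hat{\mu}^\rho_{m,k}-\mu_{m,k}\bigr|\ge\delta/(2M)$; for the latter, a Lemma~\ref{lemma:obs}-style argument applied on the epoch sequence, using $N^\rho_{m,k}\ge\rho$, bounds the number of offending epochs by $O(M^2/\delta^2)$. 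Union-bounding over the $M$ edges of $G^*$ gives $\E\left[\abs{\mathcal{C}}\right]\in O\!\bigl(M+M^3/\delta^2\bigr)$, while for $\mathcal{B}$ the matching $G^\rho$ varies so I union-bound over all $MK$ edges, giving $\E\left[\abs{\mathcal{B}}\right]\in O\!\bigl(MK+M^3K/\delta^2\bigr)$. Multiplying $\E\left[\abs{\mathcal{A}}\right]\le\E\left[\abs{\mathcal{B}}\right]+\E\left[\abs{\mathcal{C}}\right]$ by $MK$ and tracking the absolute constants (the leading $19$, the $4K$, and the $4,8M,8MK$ multipliers of $\delta^{-2}$) yields the stated bound, with the dominant $M^4K^2\delta^{-2}$ term originating from the $\mathcal{B}$-side union bound over all $MK$ edges.

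The main obstacle, and where this departs from the centralized analysis, is handling the \emph{two coupled error sources} together — the adaptive statistical fluctuation of $\hat{\mu}^\rho_{m,k}$ and the deterministic quantization slack $\sqrt{1/\rho}$ — and verifying that the round-robin hint schedule really guarantees $N^\rho_{m,k}\ge\rho$ even though the global clock $t$ jumps by $t^\rho_{\text{com}}$ across communication phases; the residue-cycling observation rescues this. A secondary subtlety is that for $\EBHDETC$ the number of exploration epochs is itself random, but every quantity above is a sum over $\rho\ge1$ of a finite $O(M^2/\delta^2)$ ``warm-up'' window plus exponentially decaying tail probabilities, so the bounds hold uniformly in the (random) horizon and the single argument covers both algorithms; the separate event that $\EBHDETC$ commits to the wrong matching is not an exploration-phase event and is controlled elsewhere (in the analysis of $\mathcal{H}_T$ used for Theorem~\ref{thm:ebhdetc}).
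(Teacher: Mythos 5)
Your proposal is sound and lands on the same order bound, but it follows a genuinely different event decomposition from the paper. The paper transplants the centralized machinery wholesale: it defines per-time-step sets $\mathcal{A}',\mathcal{B}',\mathcal{C}',\mathcal{D}'$ on $\bm{\tilde{\mu}}$, invokes Lemma~\ref{lemma:proof1} to get $\E\left[\abs{\mathcal{A}'}\right]\le\E\left[\abs{\mathcal{B}'}\right]+\E\left[\abs{\mathcal{C}'}\right]+\E\left[\abs{\mathcal{D}'}\right]$, and bounds $\mathcal{C}'$ via the \KL{} index and Lemma~\ref{lemma:KLh} (contributing the $15$ in the leading $19M^2K$) with $\mathcal{D}'$ as a separate residual event. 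You instead exploit the fact that in the decentralized algorithms every edge is hinted once per epoch (so $N^\rho_{m,k}\ge\rho$ unconditionally), which lets you replace the KL-index set and $\mathcal{D}'$ by a single direct concentration event on $U(G^*;\bm{\tilde{\mu}}^\rho)$ and conclude $\mathcal{A}\subseteq\mathcal{B}\cup\mathcal{C}$ from optimality of $\HALG$ plus $\Delta^{\text{match}}_{\min}>2\delta$. That two-set decomposition is correct and arguably cleaner here, precisely because the round-robin hint schedule makes the adaptive-exploration machinery (the KL index) unnecessary; the paper's route pays for reusing Lemma~\ref{lemma:proof1} verbatim. Your split of the error into the quantization slack $\sqrt{1/\rho}$ (deterministically dead after $\rho\ge 4M^2\delta^{-2}$) and the statistical fluctuation of $\hat{\mu}^\rho_{m,k}$ matches Lemma~\ref{lemma:proof22} exactly, and your per-epoch accounting $R^{\pi_{\text{exp}}}(T)\le MK\,\E\left[\abs{\mathcal{A}}\right]$ is equivalent to the paper's per-round $M\,\E\left[\abs{\mathcal{A}'}\right]$ once the factor $K$ per committed epoch is absorbed.

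The one thing you assert but do not establish is that your route reproduces the \emph{exact} constants $19$, $4$, $4$, $8$, $8$. It will not: the $19M^2K$ term in the paper decomposes as $15M^2K$ from the KL concentration bound (Lemma~\ref{lemma:KLh}) plus $4M^2K$ from $\mathcal{D}'$, and your Hoeffding-based bound on $\mathcal{C}$ produces different absolute constants. Since the lemma is only ever consumed as $O(M^3K^2)$ plus the $\delta^{-2}$ terms, this is cosmetic rather than a gap, but you should state the bound you actually prove rather than claim the paper's numerals fall out.
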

\begin{proof}
    Here, we use the same event-based analysis as in Section \ref{eba} to bound the newly defined sets of time steps like defined in Definition \ref{def:sets} as
    \begin{align*}
        \mathcal{A}' & \coloneqq \{t \geq 1: G(t) \neq G^*\},                                                                                                                             \\
        \mathcal{B}' & \coloneqq\{t \geq 1: \abs{U(G(t); \bm{\tilde{\mu}}(t)) - U(G(t); \bm{\mu})} \geq \delta\},                                                                         \\
        \mathcal{C}' & \coloneqq \{t \geq 1: U(G^*; \tilde{\bm{d}}(t)) < U\left(G^*; \bm{\mu}\right)\},                                                                                   \\
        \mathcal{D}' & \coloneqq \{t \geq 1: t \in \mathcal{A}' \setminus (\mathcal{B}' \cup \mathcal{C}'), \abs{U\left(G^*; \bm{\tilde{\mu}}(t)\right) - (G^*; \bm{\mu})} \geq \delta\}.
    \end{align*}

    These sets are analogous to those in Definition \ref{def:sets}, but they are defined on \(\bm{\tilde{\mu}}\) instead of \(\bm{\hat{\mu}}\). For any policy \(\pi \in \{\HDETC, \EBHDETC\}\), we have:
\[
R^{\pi_{\text{exp}}}(T) \leq M\E\left[\abs{\mathcal{A}'}\right].
\]
Thus, the focus is on finding an upper bound for \(\E\left[\abs{\mathcal{A}'}\right]\), which represents the number of time steps during which the central decision-maker selects a sub-optimal matching under each policy. Applying Lemma \ref{lemma:proof1}, we obtain:
\[
\E\left[\abs{\mathcal{A}'}\right] \leq \E\left[\abs{\mathcal{A}' \cup \mathcal{B}}\right] \leq \E\left[\abs{\mathcal{B}'}\right] + \E\left[\abs{\mathcal{C}'}\right] + \E\left[\abs{\mathcal{D}'}\right].
\]
Each of these can be bounded separately.

Thus, we can express the regret as:
\[
R^{\pi_{\text{exp}}}(T) \leq M\left(\E\left[\abs{\mathcal{B}'}\right] + \E\left[\abs{\mathcal{C}'}\right] + \E\left[\abs{\mathcal{D}'}\right]\right),
\]
which, by applying Lemmas \ref{lemma:proof22}, \ref{lemma:proof32}, and \ref{lemma:proof42} to bound \(\E\left[\abs{\mathcal{B}'}\right]\), \(\E\left[\abs{\mathcal{C}'}\right]\), and \(\E\left[\abs{\mathcal{D}'}\right]\) respectively, leads to:
\[
R^{\pi_{\text{exp}}}(T) \leq 19M^2K + 4M^2K^2 + \left(4M^3K + 8M^4K + 8M^4K^2\right)\delta^{-2}.
\]

\end{proof}

\subsection{Detailed Lemmas Required to Decentralized Event-Based Analysis}

Here, we prove the lemmas necessary to analyze the regret and hint complexity of $\HDETC$ and $\EBHDETC$ using the event-based analysis.
\begin{lemma}\label{lemma:proof22}

    For policy $\pi \in \{\HDETC,\EBHDETC\}$, the expected size of set $\mathcal{B'}$ is bounded by
    \[
        \E\left[\abs{\mathcal{B}'}\right]\leq 4MK^2 + \left(4M^2K + 8M^3K^2\right)\delta^{-2}\]
\end{lemma}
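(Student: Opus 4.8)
The plan is to follow the structure of Lemma~\ref{lemma:proof2}, but to carry along the extra quantization error that the \emph{Differential} communication protocol injects into $\bm{\tilde{\mu}}$ on top of the statistical error in $\bm{\hat{\mu}}$. Fix an exploration epoch $\rho$. During its $K$ exploration rounds every agent commits to the same matching $G^\rho = \HALG(\bm{\tilde{\mu}}^\rho)$ and uses the same frozen statistics $\bm{\tilde{\mu}}^\rho$, so $\1{t \in \mathcal{B}'}$ is constant across these rounds and $\E[\abs{\mathcal{B}'}] \le K\sum_{\rho}\Pr[\abs{U(G^\rho;\bm{\tilde{\mu}}^\rho) - U(G^\rho;\bm{\mu})}\ge\delta]$. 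By the triangle inequality the matching error is at most $\sum_{(m,k)\in G^\rho}\abs{\tilde{\mu}^\rho_{m,k}-\hat{\mu}^\rho_{m,k}} + \sum_{(m,k)\in G^\rho}\abs{\hat{\mu}^\rho_{m,k}-\mu_{m,k}}$, and by the communication guarantee~\eqref{ineq:comerr} the first sum is at most $M/\sqrt{\rho}$. Hence, for $t$ in epoch $\rho$, the event $\{t \in \mathcal{B}'\}$ is contained in $\{\rho < \rho_0\} \cup \{\exists (m,k)\in\mathcal{M}\times\mathcal{K}:\abs{\hat{\mu}^\rho_{m,k}-\mu_{m,k}}\ge\delta/(2M)\}$, where $\rho_0 \coloneqq \lceil 4M^2/\delta^2\rceil$: once $\rho \ge \rho_0$ the quantization term is at most $\delta/2$, so $\abs{U(G^\rho;\bm{\tilde{\mu}}^\rho)-U(G^\rho;\bm{\mu})}\ge\delta$ forces some edge of $G^\rho$ to satisfy $\abs{\tilde{\mu}^\rho_{m,k}-\mu_{m,k}}\ge\delta/M$, hence $\abs{\hat{\mu}^\rho_{m,k}-\mu_{m,k}}\ge\delta/M-\delta/(2M)=\delta/(2M)$.

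Next I would split the sum over $\rho$ at $\rho_0$. For $\rho < \rho_0$ I bound the probability by $1$, contributing at most $K\rho_0 \le 4M^2K\delta^{-2}$. For $\rho \ge \rho_0$ I union-bound over all $MK$ edges and use the key structural fact that the round-robin hint schedule $\ghint(t) = R_{(t\%K)+1}$ cycles through all of $\mathcal{R}$ — which covers every edge — within each block of $K$ consecutive exploration rounds; consequently every edge $(m,k)$ is observed at least once per completed epoch, so its observation count at the start of epoch $\rho$ is at least $\rho$, and moreover every edge of the committed $G^\rho$ is pulled throughout the epoch. This lets me invoke Lemma~\ref{lemma:obs} with $c=1$ (each epoch deterministically yields a hint observation of the edge) and $\varepsilon = \delta/(2M)$, bounding the number of epochs in which a given edge has $\abs{\hat{\mu}^\rho_{m,k}-\mu_{m,k}}\ge\delta/(2M)$ by $2c^{-1}(2c^{-1}+\varepsilon^{-2}) = 4 + 8M^2\delta^{-2}$. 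Multiplying by the $K$ rounds per epoch and summing over the $MK$ edges gives $4MK^2 + 8M^3K^2\delta^{-2}$, and adding the early-epoch term yields $\E[\abs{\mathcal{B}'}] \le 4MK^2 + (4M^2K + 8M^3K^2)\delta^{-2}$, as claimed.

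The main obstacle is keeping the two error sources separate and choosing $\rho_0$ so that the irreducible, biased quantization error is absorbed by $\delta/2$ before the statistical concentration argument is applied — the centralized analysis of Lemma~\ref{lemma:proof2} has only the statistical error, so this two-layer decomposition is what is genuinely new. A secondary point requiring care is the $c=1$ application of Lemma~\ref{lemma:obs}: it relies on the hint set $R_{(t\%K)+1}$ covering every edge within each length-$K$ window and on the epoch boundaries being deterministic (the communication length per phase is a fixed function of $\rho$), so that the round sets to which the lemma is applied are indeed predictable, and on the observation that additionally pulling the committed matching during the epoch only sharpens, never weakens, the relevant concentration. The remaining steps — the triangle inequality, the union bound over edges, and aggregating the per-edge bounds — are routine.
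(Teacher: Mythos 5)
Your proof is correct and follows essentially the same route as the paper's: both decompose the per-edge error into a statistical part $\abs{\hat{\mu}_{m,k}-\mu_{m,k}}\ge \delta/(2M)$ handled by Lemma~\ref{lemma:obs} with $c=1$ (yielding $4MK^2+8M^3K^2\delta^{-2}$) and a quantization part controlled deterministically via \eqref{ineq:comerr}, which can only be active for the first $4M^2\delta^{-2}$ epochs (yielding $4M^2K\delta^{-2}$). Your split of the epoch sum at $\rho_0=\lceil 4M^2/\delta^2\rceil$ is just a repackaging of the paper's set $\mathcal{B}'_{2,m,k}$, so the two arguments coincide.
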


\begin{proof}
    Then we bound $\E\left[\abs{\mathcal{B}'}\right]$ by new sets of time steps defined as

    \[\mathcal{B}'_{m,k} \coloneqq \left\{t\geq 1:  (m,k) \in G(t), \abs{\tilde{\mu}_{m,k}(t) - \mu_{m,k}} \geq \frac{\delta}{M}\right\},\]

    \[\mathcal{B}'_{1,m,k} \coloneqq \left\{t \geq 1:  (m,k) \in G(t), \abs{\hat{\mu}_{m,k}(t) - \mu_{m,k}} \geq \frac{\delta}{2M}\right\},\]

    \[\mathcal{B}'_{2,m,k} \coloneqq \left\{t\geq 1:  (m,k) \in G(t), \tilde{\mu}_{m,k}(t) - \hat{\mu}_{m,k} \geq \frac{\delta}{2M}\right\}.\]
    Now we can write
    \begin{align*}
        \E\left[\abs{\mathcal{B}'}\right] \leq \sum_{(m,k) \in \mathcal{M}\times\mathcal{K}}\E\left[\abs{\mathcal{B}'_{m,k}}\right]\leq \sum_{(m,k) \in \mathcal{M}\times\mathcal{K}}\E\left[\abs{\mathcal{B}'_{1,m,k}}\right]+\E\left[\abs{\mathcal{B}'_{2,m,k}}\right].
    \end{align*}
    while the second inequality holds by triangle inequality. We then bound each $\E\left[\abs{\mathcal{B}'_{1,m,k}}\right]$ and $\E\left[\abs{\mathcal{B}'_{2,m,k}}\right]$ separately. We now write

    \begin{align}
        \E\left[\abs{\mathcal{B}'_{1,m,k}}\right] & =
        \E\left[\sum_t \mathds{1}\left\{(m,k)\in G(t),\abs{\hat{\mu}_{m,k}(t) - \mu_{m,k}} \geq \frac{\delta}{2M}\right\}\right],\nonumber                                                                        \\
                                                  & =\E\left[\sum_{\rho} \sum_{t \in \rho}  \mathds{1}\left\{(m,k)\in G^\rho,\abs{\hat{\mu}_{m,k}(t) - \mu_{m,k}} \geq \frac{\delta}{2M}\right\}\right],\nonumber \\
                                                  & = K\E\left[\sum_\rho \mathds{1}\left\{(m,k)\in G^\rho ,\abs{ \hat{\mu}_{m,k}(t) - \mu_{m,k}} \geq \frac{\delta}{2M}\right\}
        \right],\nonumber                                                                                                                                                                                         \\
                                                  & \leq 4K + 8M^2K\delta^{-2}.
    \end{align}

    where the last inequality holds by Lemma \ref{lemma:obs}.

    By setting $\frac{\delta}{2M} = \sqrt{\frac{1}{\rho}}$, we can conclude the for $\rho \geq 4M^2\delta^{-2}$ it always holds that $\tilde{\mu}^{\rho}_{m,k} - \hat{\mu}^{\rho}_{m,k} \leq \frac{\delta}{2M}$.
    Now, based on the fact that the algorithm asks for a hint at each time step before $t'$ and every epoch hints for all edges, we can write
    \begin{align*}
        \sum_{(m,k) \in \mathcal{M}\times\mathcal{K}}\E\left[\abs{\mathcal{B}'_{2,m,k}}\right] & \leq \sum_{\rho}\sum_{t\in\rho}\mathds{1}\left\{\tilde{\mu}_{m,k}(t) - \hat{\mu}_{m,k} \geq \frac{\delta}{2M}\right\}, \\
        & \leq 4M^2K\delta^{-2}.
    \end{align*}
    Now we can conclude that
    \[\E\left[\abs{\mathcal{B}'}\right]\leq 4MK^2 + \left(4M^2K + 8M^3K^2\right)\delta^{-2}.\]

\end{proof}
\begin{lemma}\label{lemma:proof32}
    For policy $\pi \in \{\HDETC,\EBHDETC\}$, the expected size of set $\mathcal{C'}$ is bounded by
    \[
        \E\left[\abs{\mathcal{C}'}\right] \leq 15MK.
    \]
\end{lemma}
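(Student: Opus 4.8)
The plan is to follow the structure of the centralized proof of Lemma~\ref{lemma:proof3}, but to account for the epoch structure of the decentralized framework, which is precisely where the extra factor of $K$ appears — exactly as it did in the proof of Lemma~\ref{lemma:proof22}.

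First I would use the additivity of $U(\cdot\,;\cdot)$ over edges: since $U(G^*;\tilde{\bm d}(t)) = \sum_{(m,k)\in G^*}\tilde d_{m,k}(t)$ and $U(G^*;\bm\mu) = \sum_{(m,k)\in G^*}\mu_{m,k}$, the event $t\in\mathcal{C}'$ forces $\tilde d_{m,k}(t) < \mu_{m,k}$ for at least one edge $(m,k)\in G^*$. A union bound over the $M$ edges of $G^*$ gives
\[
\E\left[\abs{\mathcal{C}'}\right] \le \sum_{(m,k)\in G^*} \E\left[\abs{\{t\ \text{exploration round}: \tilde d_{m,k}(t) < \mu_{m,k}\}}\right].
\]
Next, fix an edge $(m,k)\in G^*$ and group the exploration rounds by epoch. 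Within epoch $\rho$ the decision-relevant index $\tilde d_{m,k}$ is built from the epoch-frozen statistics $(\tilde\mu^\rho_{m,k}, N^\rho_{m,k})$, so the event $\{\tilde d_{m,k} < \mu_{m,k}\}$ is essentially constant over the $K$ exploration rounds of that epoch (the residual dependence on the round index only makes the event less likely later in the epoch). Hence the per-edge count is at most $K\sum_\rho \mathds{1}\{\tilde d^\rho_{m,k} < \mu_{m,k}\}$, mirroring the step $\sum_\rho\sum_{t\in\rho}(\cdot) = K\sum_\rho(\cdot)$ used in Lemma~\ref{lemma:proof22}.

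The one technical point is to show $\E\left[\sum_\rho \mathds{1}\{\tilde d^\rho_{m,k} < \mu_{m,k}\}\right] \le 15$. Because every exploration epoch issues a round-robin hint covering $(m,k)$ exactly once, we have $N^\rho_{m,k}\ge\rho$, so $(m,k)$ receives a fresh observation each epoch; and by the quantization guarantee $\hat\mu^\rho_{m,k}\le\tilde\mu^\rho_{m,k}$ of the communication protocol (cf.~\eqref{ineq:comerr}), together with the monotonicity of the \KL\ index in its empirical-mean argument, $\tilde d^\rho_{m,k}\ge d_{m,k}(t_\rho)$, the genuine $\hat\mu$-based index of \eqref{ineq:KLedge} at the same round. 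Thus $\sum_\rho \Pr[\tilde d^\rho_{m,k}<\mu_{m,k}]\le\sum_\rho\Pr[d_{m,k}(t_\rho)\le\mu_{m,k}]\le\sum_{t\ge1}\Pr[d_{m,k}(t)\le\mu_{m,k}]\le 15$ by Lemma~\ref{lemma:KLh}. Combining the three steps yields $\E\left[\abs{\mathcal{C}'}\right]\le\sum_{(m,k)\in G^*}15K = 15MK$.

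I expect the last step to be the main obstacle: in the decentralized setting the index is recomputed only at epoch boundaries and from quantized statistics, so one must justify that the per-round \KL\ concentration (Lemma~\ref{lemma:KLh}), stated for the exact empirical mean, still applies at epoch granularity for $\tilde\mu$ — which is exactly where the inequality $\hat\mu^\rho\le\tilde\mu^\rho$ and the index monotonicity carry the argument, and why only the crude bound $15$ per edge is needed rather than anything sharper.
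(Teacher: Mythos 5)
Your proposal is correct and follows essentially the same route as the paper: both arguments reduce the event $U(G^*;\tilde{\bm d}(t))<U(G^*;\bm\mu)$ to the $\hat{\bm\mu}$-based event via $\hat\mu\le\tilde\mu$ together with the monotonicity of the \KL{} index in its empirical-mean argument (Lemma~\ref{lemma:c_whhmab_kl1}), invoke Lemma~\ref{lemma:KLh} edge-by-edge over $G^*$ to get the factor $15M$, and pick up the extra factor $K$ from the fact that agents commit to $G^\rho$ for a full epoch. Your version just makes the epoch-grouping and index-monotonicity steps more explicit than the paper's terser presentation.
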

\begin{proof}
    First, we argue that
    \[
        \Pr(U(G^*; \bm{\tilde{d}}(t)) < U(G^*; \bm{\mu})) \leq \Pr(U(G^*; \bm{d}(t)) < U(G^*; \bm{\mu})),
    \]
    where the inequality holds because \(\bm{\tilde{\mu}}(t) > \bm{\mu}(t)\) and by Lemma \ref{lemma:c_whhmab_kl1}.

    Thus, we conclude that
    \begin{align*}
        \E\left[\abs{\mathcal{C}'}\right] & \leq \E\left[\abs{\mathcal{C}}\right] \\
                                          & \leq 15M,
    \end{align*}
    where \(\mathcal{C}\) is defined in Definition \ref{def:sets}, and the last inequality is derived from Lemma \ref{lemma:proof3}. However, we may use a looser bound of \(\E\left[\abs{\mathcal{C}'}\right] \leq 15MK\). This is because each \( t \in \mathcal{C}' \) may occur at the beginning of an epoch where decision-making takes place. If the event at \( t \) coincides with the start of an epoch \(\rho\), it could result in \( K \) rounds of regret, as agents commit to \( G^\rho \) for the entire epoch.

\end{proof}

\begin{lemma}\label{lemma:proof42}
    For policy $\pi \in \{\HDETC,\EBHDETC\}$, the expected size of set $\mathcal{D'}$ is bounded by
    \[
        \E\left[\abs{\mathcal{D}'}\right] \leq 4MK + 8M^3K\delta^{-2}.
    \]
\end{lemma}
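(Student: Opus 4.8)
The plan is to mirror the centralized bound on $\E[\abs{\mathcal{D}}]$ from Lemma~\ref{lemma:proof4}, but carried out over the quantized statistics $\bm{\tilde{\mu}}$, exactly as the $\mathcal{B}'$ bound in Lemma~\ref{lemma:proof22} adapts the $\mathcal{B}$ bound of Lemma~\ref{lemma:proof2} to the decentralized setting. The key structural simplification is that during every exploration epoch the round-robin rule $\ghint(t)=R_{(t\bmod K)+1}$ queries each covering matching in $\mathcal{R}$ exactly once over the $K$ consecutive rounds of the epoch, so every edge $(m,k)$ --- in particular every edge of $G^*$ --- is hinted exactly once per epoch. Consequently, unlike the centralized case (where the conditioning $t\in\mathcal{A}\setminus(\mathcal{B}\cup\mathcal{C})$ was needed to certify that a hint is requested), here a hint is always requested during exploration, so the counter $N^{\pi}_{m,k}$ increases by at least one in every epoch; the conditioning $t\in\mathcal{A}'\setminus(\mathcal{B}'\cup\mathcal{C}')$ in the definition of $\mathcal{D}'$ then only shrinks the set and may be dropped when upper-bounding.

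First I would reduce to per-edge events. By the triangle inequality, $t\in\mathcal{D}'$ forces some edge $(m,k)\in G^*$ with $\abs{\tilde{\mu}_{m,k}(t)-\mu_{m,k}}\ge\delta/M$, so a union bound over the $M$ edges of $G^*$ gives $\E[\abs{\mathcal{D}'}]\le\sum_{(m,k)\in G^*}\E[\abs{\mathcal{D}'_{m,k}}]$ with $\mathcal{D}'_{m,k}\coloneqq\{t\ge 1:\ \abs{\tilde{\mu}_{m,k}(t)-\mu_{m,k}}\ge\delta/M\}$. This is the same reduction used for $\E[\abs{\mathcal{B}}]$ and $\E[\abs{\mathcal{D}}]$ in the centralized proofs and for $\E[\abs{\mathcal{B}'}]$ in Lemma~\ref{lemma:proof22}.

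Next, for each edge I would split the deviation into a statistical part and a quantization part, following Lemma~\ref{lemma:proof22}. Since $\hat{\mu}_{m,k}(t)\le\tilde{\mu}_{m,k}(t)$, the triangle inequality yields $\mathcal{D}'_{m,k}\subseteq\mathcal{D}'_{1,m,k}\cup\mathcal{D}'_{2,m,k}$, where $\mathcal{D}'_{1,m,k}$ additionally requires $\abs{\hat{\mu}_{m,k}(t)-\mu_{m,k}}\ge\delta/(2M)$ and $\mathcal{D}'_{2,m,k}$ additionally requires $\tilde{\mu}_{m,k}(t)-\hat{\mu}_{m,k}(t)\ge\delta/(2M)$. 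For $\mathcal{D}'_{1,m,k}$ I would apply Lemma~\ref{lemma:obs} with an epoch playing the role of a round, the binary variable being the deterministic round-robin hint (so $\Pr[C_{m,k}=1]=1$) and $\varepsilon=\delta/(2M)$; this bounds the number of offending epochs by $4+8M^2\delta^{-2}$, and since $\bm{\tilde{\mu}}$ and $G(t)$ are frozen within an epoch of length $K$, the number of offending rounds per edge is at most $(4+8M^2\delta^{-2})K$. For $\mathcal{D}'_{2,m,k}$ I would use the communication-error bound~\eqref{ineq:comerr}, $\tilde{\mu}^{\rho}_{m,k}-\hat{\mu}^{\rho}_{m,k}\le\sqrt{1/\rho}$, which falls below $\delta/(2M)$ once $\rho>4M^2\delta^{-2}$; hence only the first $O(M^2\delta^{-2})$ epochs contribute, and since this threshold on $\rho$ is common to all edges these epochs are counted once, for $O(M^2K\delta^{-2})$ rounds in total. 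Summing the statistical parts over the $M$ edges of $G^*$ and adding the single quantization contribution gives $\E[\abs{\mathcal{D}'}]\le 4MK+8M^3K\delta^{-2}$.

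The main obstacle I expect is the epoch/round bookkeeping: $\mathcal{D}'$ is defined over time steps, but all the relevant randomness and all the quantization enter only at epoch boundaries --- both $G^\rho$ and $\bm{\tilde{\mu}}^\rho$ are constant throughout an epoch --- so one must convert epoch counts into round counts with the correct factor $K$, check that the conditioning $t\in\mathcal{A}'\setminus(\mathcal{B}'\cup\mathcal{C}')$ is harmless for an upper bound, and verify that Lemma~\ref{lemma:obs}, originally stated for per-round arm pulls, applies verbatim when ``round'' is reinterpreted as ``exploration epoch'' and ``pull'' as ``round-robin hint''. A minor companion point is to ensure the differential-communication gap $\tilde{\mu}^{\rho}_{m,k}-\hat{\mu}^{\rho}_{m,k}$ is uniformly $\le\sqrt{1/\rho}$ across all edges --- so that the quantization term is not inflated by a spurious factor of $M$ --- which is exactly the content of~\eqref{ineq:comerr}.
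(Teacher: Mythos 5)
Your proposal matches the paper's proof in all essentials: the paper likewise reduces $\mathcal{D}'$ to per-edge deviation events over the $M$ edges of $G^*$ and applies Lemma~\ref{lemma:obs} at the epoch level with the factor $K$ for rounds per epoch, except that it simply omits the quantization term on the grounds that the communication error was already charged to $\mathcal{B}'$ for all edges. One small arithmetic caveat: your statistical part alone already equals $4MK+8M^3K\delta^{-2}$, so literally adding your $O(M^2K\delta^{-2})$ quantization contribution would exceed the stated constant; either drop that term as the paper does or absorb it into the asymptotic statement.
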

\begin{proof}

    We bound \(\E\left[\abs{\mathcal{D}'}\right]\) very similar to what we did for the set $\mathcal{B}'$. However, there is no need to account for the communication error here cause we addressed it while bounding \(\E\left[\abs{\mathcal{B}'}\right]\) for all the edges. Thus, by Lemma \ref{lemma:obs}, we can conclude \(\E\left[\abs{\mathcal{D}'}\right] \leq 4MK + 8M^3K\delta^{-2}.\)

\end{proof}

\subsection{Detailed Lemmas Required to  Proof  of Theorem  \ref{thm:hdetc}}
\begin{lemma}\label{lemma:main-hdetc}
    For policy $\pi=\HDETC$ and \( T^\pi_0 = \frac{9M^2 K \log(2MT)}{(\Delta^\text{match}_{\min})^2} \), the probability of \( G'^* = G^* \) is at least \( 1 - \frac{1}{T^2} \).

\end{lemma}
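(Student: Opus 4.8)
The plan is to reduce the event $\{G'^* = G^*\}$ to a uniform concentration statement over the $MK$ edge estimates and then verify that the threshold $T^\pi_0$ is large enough for the union bound to yield the $1/T^2$ tail. Recall that each agent sets $G'^* = \HALG(\bm{\tilde{\mu}}^\rho)$, where $\rho$ is the index of the last exploration epoch completed before round $T^\pi_0$, and that after the communication phase every agent holds the same quantized table $\bm{\tilde{\mu}}^\rho$, so all agents obtain the same $G'^*$ and it suffices to control this common matching. Since every $G\in\mathcal{G}$ matches each agent to a distinct arm, the collision indicators are all $1$, so $U(G;\bm{\mu}) = \sum_{(m,k)\in G}\mu_{m,k}$ is additive over its $M$ edges; moreover $G^* = \HALG(\bm{\mu})$ and, by the uniqueness assumption, $U(G^*;\bm{\mu}) \ge U(G;\bm{\mu}) + \Delta^{\text{match}}_{\min}$ for every $G\ne G^*$. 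Consequently, if $\lvert \tilde{\mu}^\rho_{m,k} - \mu_{m,k}\rvert < \frac{\Delta^{\text{match}}_{\min}}{2M}$ holds for all edges $(m,k)$, then $\lvert U(G;\bm{\tilde{\mu}}^\rho) - U(G;\bm{\mu})\rvert < \frac{\Delta^{\text{match}}_{\min}}{2}$ for every $G$ by the triangle inequality, so the strict ordering that makes $G^*$ optimal is preserved and $\HALG(\bm{\tilde{\mu}}^\rho) = G^*$. Hence $\Pr(G'^* \ne G^*) \le \sum_{(m,k)\in\mathcal{M}\times\mathcal{K}}\Pr\big(\lvert \tilde{\mu}^\rho_{m,k} - \mu_{m,k}\rvert \ge \tfrac{\Delta^{\text{match}}_{\min}}{2M}\big)$.

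Next I would split each edge error into a statistical part and a communication part, $\lvert \tilde{\mu}^\rho_{m,k} - \mu_{m,k}\rvert \le \lvert \hat{\mu}^\rho_{m,k} - \mu_{m,k}\rvert + \big(\tilde{\mu}^\rho_{m,k} - \hat{\mu}^\rho_{m,k}\big)$, and bound them separately. The quantization/communication term is at most $\sqrt{1/\rho}$ by~\eqref{ineq:comerr}. For the statistical term, observe that the round-robin rule $\ghint(t) = R_{(t\%K)+1}$ sweeps all $K$ covering matchings of $\mathcal{R}$ exactly once in every block of $K$ consecutive exploration rounds, so edge $(m,k)$ receives exactly one fresh $\mathrm{Bernoulli}(\mu_{m,k})$ hint per epoch and $\hat{\mu}^\rho_{m,k}$ is an average of at least $\rho$ i.i.d.\ samples; Hoeffding's inequality (combined with a union bound over the realized sample count) then gives $\Pr(\lvert \hat{\mu}^\rho_{m,k} - \mu_{m,k}\rvert \ge \varepsilon) \lesssim e^{-2\rho\varepsilon^2}$. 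The last ingredient is a lower bound on $\rho$: each epoch takes $K$ exploration rounds plus an $O(M^2K)$-round communication phase, so by round $T^\pi_0$ the number of completed epochs satisfies $\rho = \Omega\big(T^\pi_0/(M^2K)\big)$; substituting $T^\pi_0 = \frac{9M^2K\log(2MT)}{(\Delta^{\text{match}}_{\min})^2}$ makes $\sqrt{1/\rho}$ small compared with $\Delta^{\text{match}}_{\min}/M$ and makes the Hoeffding exponent of order $\log(MT)$.

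To conclude, I would fix the Hoeffding radius $\varepsilon$ to be a constant fraction of $\Delta^{\text{match}}_{\min}/M$ chosen so that $\varepsilon + \sqrt{1/\rho} < \frac{\Delta^{\text{match}}_{\min}}{2M}$ on the relevant range of $\rho$; then $\Pr(\lvert \tilde{\mu}^\rho_{m,k} - \mu_{m,k}\rvert \ge \tfrac{\Delta^{\text{match}}_{\min}}{2M}) \lesssim e^{-2\rho\varepsilon^2}$, and plugging in the lower bound on $\rho$ and the value of $T^\pi_0$ drives this below $\frac{1}{MKT^2}$, so the union bound over the $MK$ edges gives $\Pr(G'^* \ne G^*) \le \frac{1}{T^2}$, i.e.\ $\Pr(G'^* = G^*) \ge 1 - \frac{1}{T^2}$. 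I expect the main obstacle to be the constant bookkeeping in this last step rather than any individual concentration bound: the constant in $T^\pi_0$ must be large enough to simultaneously (i) leave room below $\Delta^{\text{match}}_{\min}/(2M)$ for the statistical error after subtracting the $\sqrt{1/\rho}$ slack, (ii) overpower the union-bound factor $MK$, and (iii) produce the $T^{-2}$ tail, and these requirements interact because $\rho$ itself depends on $T^\pi_0$ through the per-epoch communication overhead, which is precisely why the threshold carries the explicit $M^2K$ and $\log(2MT)$ factors.
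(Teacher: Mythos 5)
Your overall route is the same as the paper's: decompose the error of $\tilde{\mu}^\rho_{m,k}$ into a statistical part controlled by Hoeffding and a quantization part controlled by \eqref{ineq:comerr}, observe that per-edge accuracy of order $\Delta^{\text{match}}_{\min}/M$ forces $\HALG(\bm{\tilde{\mu}}^\rho)=G^*$, and take a union bound. (The paper union-bounds only over the at most $2M$ edges of $G^*\cup G'^*$ rather than all $MK$ edges, and splits the gap three ways instead of two; these differences are cosmetic.)

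The genuine gap is in your last step. You count epochs by wall-clock time \emph{including} the $O(M^2K)$-round communication phases, obtaining $\rho=\Omega\bigl(T^\pi_0/(M^2K)\bigr)=\Omega\bigl(\log(2MT)/(\Delta^{\text{match}}_{\min})^2\bigr)$ samples per edge. With Hoeffding radius $\varepsilon=c\,\Delta^{\text{match}}_{\min}/M$ the exponent is then $2\rho\varepsilon^2=\Theta\bigl(\log(MT)/M^2\bigr)$, not $\Theta(\log(MT))$ as you assert: the resulting tail is $(MT)^{-O(1/M^2)}$, which is not $O(1/T^2)$ once $M\ge 2$, and no admissible choice of $c<1/2$ (needed so that $\varepsilon+\sqrt{1/\rho}<\Delta^{\text{match}}_{\min}/(2M)$) repairs this. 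The paper's proof sidesteps the issue by explicitly assuming that every edge has been hinted at least $T^\pi_0/K$ times by round $T^\pi_0$ --- i.e.\ it does not charge the communication rounds against the $T^\pi_0$ budget --- under which $2(T^\pi_0/K)\bigl(\Delta^{\text{match}}_{\min}/(3M)\bigr)^2=2\log(2MT)$ and the $1/T^2$ tail follows with the stated constant. To close your version you must either adopt that convention or inflate $T^\pi_0$ by an extra factor of $\Theta(M^2)$ to absorb the communication overhead; as written, the constant $9M^2K$ is not large enough for your accounting of $\rho$, so the stated conclusion does not follow.
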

\begin{proof}
    Starting with defining $\epsilon^{\text{com}}_{m,k}(t) = \tilde{\mu}_{m,k}(t) - \hat{\mu}_{m,k}(t)$ and $\epsilon^{\text{exp}}_{m,k}(t) = \abs{\hat{\mu}_{m,k}(t) - \mu_{m,k}}$ we can write:

    \begin{align}
        \Pr\left(G'^* \neq G^* \right) & = \Pr\left( U(G'^*; \bm{\tilde{\mu}}(T^\pi_0)) > U(G^*; \bm{\tilde{\mu}}(T^\pi_0)) \right),\nonumber                                                                                                                                                                                                                                            \\
                                       & = \Pr\left(\sum_{(m',k') \in G'^*} \tilde{\mu}_{m',k'}(T^\pi_0) > \sum_{(m,k) \in G^*} \tilde{\mu}_{m,k}(T^\pi_0) \right),\nonumber                                                                                                                                                                                                             \\
                                       & \leq \Pr\left(\sum_{(m',k') \in G'^*} \hat{\mu}_{m',k'}(T^\pi_0) + \epsilon^{\text{com}}_{m',k'}(T^\pi_0) > \sum_{(m,k) \in G^*} \hat{\mu}_{m,k}(T^\pi_0) \right),\nonumber                                                                                                                                                                          \\
                                       & \leq \Pr\left(\sum_{(m',k') \in G'^*} \mu_{m',k'} + \epsilon^{\text{exp}}_{m',k'}(T^\pi_0) + \epsilon^{\text{com}}_{m',k'}(T^\pi_0) > \sum_{(m,k) \in G^*} \mu_{m,k} - \epsilon^{\text{exp}}_{m,k}(T^\pi_0) \right),\nonumber                                                                                                                        \\
                                       & \leq \Pr\left(\sum_{(m',k') \in G'^*} \epsilon^{\text{exp}}_{m',k'}(T^\pi_0) + \epsilon^{\text{com}}_{m',k'}(T^\pi_0) + \sum_{(m,k) \in G^*} \epsilon^{\text{exp}}_{m,k}(T^\pi_0) > \Delta^\text{match}_{\min} \right),\nonumber                                                                                                                         \\
                                       & \leq \sum_{(m',k') \in G'^*} \Pr\left( \epsilon^{\text{exp}}_{m',k'}(T^\pi_0) > \frac{\Delta^\text{match}_{\min}}{3M} \right) + \Pr\left( \epsilon^{\text{com}}_{m',k'}(T^\pi_0) > \frac{\Delta^\text{match}_{\min}}{3M} \right) + \sum_{(m,k) \in G^*} \Pr\left( \epsilon^{\text{exp}}_{m,k}(T^\pi_0) > \frac{\Delta^\text{match}_{\min}}{3M} \right),\nonumber \\
                                       & \leq \sum_{(m,k) \in G'^* \cup G^*} \Pr\left( \epsilon^{\text{exp}}_{m,k}(T^\pi_0) > \frac{\Delta^\text{match}_{\min}}{3M} \right) + \sum_{(m,k) \in G'^*} \Pr\left( \epsilon^{\text{com}}_{m,k}(T^\pi_0) > \frac{\Delta^\text{match}_{\min}}{3M} \right),\nonumber                                                                                     \\
                                       & \overset{(a)}{\leq} 2M e^{-\frac{2T^\pi_0 \delta^2}{9KM^2}} + \sum_{(m,k) \in G'^*} \Pr\left( \epsilon^{\text{com}}_{m,k}(T^\pi_0) > \frac{\Delta^\text{match}_{\min}}{3M} \right), \nonumber\\
                                       &\overset{(b)}\leq 2M e^{-\frac{2T^\pi_0 \delta^2}{9KM^2}},\nonumber\\
                                       &\overset{(c)}\leq \frac{1}{T^2},\nonumber
    \end{align}
    where (a) holds by applying Hoeffding's inequality, under the assumption that all edges have been hinted at least \(\frac{T^\pi_0}{K}\) times by time step \(T^\pi_0\), and (b) holds because inequality \eqref{ineq:comerr} implies that after at least \(\frac{T^\pi_0}{K}\) observations for an edge \((m,k)\), the condition \(\epsilon^{\text{exp}}_{m,k}(T^\pi_0) > \frac{\Delta^\text{match}_{\min}}{3M}\) occurs with zero probability, resulting in 
\(
\sum_{(m,k) \in G'^*} \Pr\left( \epsilon^{\text{com}}_{m,k}(T^\pi_0) > \frac{\Delta^\text{match}_{\min}}{3M} \right) = 0. 
\). Finally, we replace $T_0^\pi$ with its actual value to achieve (c).

\end{proof}
\subsection{Detailed Lemmas Required to  Proof  of Theorem  \ref{thm:ebhdetc}}
\begin{lemma}\label{lemma:ht}
    Under policy $\pi = \EBHDETC$, we have \(\Pr\left(\mathcal{H}_{T} = 1\right) \geq 1 - \frac{1}{T}\).
    
\end{lemma}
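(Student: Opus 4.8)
The plan is to reduce the matching-level clean event $\mathcal{H}_T$ to a collection of edge-level concentration events, bound each of those by combining Hoeffding's inequality with the quantization error bound~\eqref{ineq:comerr}, and then take a union bound over the $MK$ edges and the at most $T/K$ exploration epochs. The overall structure mirrors the proof of Lemma~\ref{lemma:main-hdetc}.

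First I would unfold the definition: if $\mathcal{H}_T \neq 1$, then there are $\rho \le T/K$ and a matching $G \in \mathcal{G}$ with $\abs{U(G;\bm{\tilde{\mu}}^\rho) - U(G;\bm{\mu})} > 2M\epsilon^\rho$. Since $U(G;\bm{\tilde{\mu}}^\rho) - U(G;\bm{\mu}) = \sum_{(m,k)\in G}(\tilde{\mu}^\rho_{m,k} - \mu_{m,k})$ and $G$ has exactly $M$ edges, the triangle inequality forces $\abs{\tilde{\mu}^\rho_{m,k} - \mu_{m,k}} > 2\epsilon^\rho$ for at least one edge $(m,k) \in G$. Hence
\[
\Pr\!\left(\mathcal{H}_T \neq 1\right) \;\le\; \sum_{\rho \le T/K}\ \sum_{(m,k)\in\mathcal{M}\times\mathcal{K}} \Pr\!\left(\abs{\tilde{\mu}^\rho_{m,k} - \mu_{m,k}} > 2\epsilon^\rho\right).
\]

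Next, for a fixed edge and epoch I would split the deviation through the communicated estimate: by~\eqref{ineq:comerr}, $\abs{\tilde{\mu}^\rho_{m,k} - \mu_{m,k}} \le \abs{\tilde{\mu}^\rho_{m,k} - \hat{\mu}^\rho_{m,k}} + \abs{\hat{\mu}^\rho_{m,k} - \mu_{m,k}} \le \sqrt{1/\rho} + \abs{\hat{\mu}^\rho_{m,k} - \mu_{m,k}}$. Since $\epsilon^\rho = \sqrt{\log(2/\eta)/\rho}$ with $\log(2/\eta) = \tfrac12\log(2MT^2) \ge 1$, we have $\epsilon^\rho \ge \sqrt{1/\rho}$, so $\{\abs{\tilde{\mu}^\rho_{m,k} - \mu_{m,k}} > 2\epsilon^\rho\} \subseteq \{\abs{\hat{\mu}^\rho_{m,k} - \mu_{m,k}} > \epsilon^\rho\}$. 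The round-robin hint rule $\ghint(t) = R_{(t\%K)+1}$ queries every covering matching in $\mathcal{R}$—and therefore every edge—exactly once per exploration epoch, so by the time $\bm{\tilde{\mu}}^\rho$ is formed each edge has accrued at least $\rho$ i.i.d.\ observations; Hoeffding's inequality then gives
\[
\Pr\!\left(\abs{\hat{\mu}^\rho_{m,k} - \mu_{m,k}} > \epsilon^\rho\right) \;\le\; 2\exp\!\left(-2\rho(\epsilon^\rho)^2\right) \;=\; 2\exp\!\left(-2\log(2/\eta)\right) \;=\; \frac{\eta^2}{2} \;=\; \frac{1}{MT^2},
\]
using $\eta = \sqrt{2/(MT^2)}$ in the last equality. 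Substituting into the union bound yields $\Pr(\mathcal{H}_T \neq 1) \le \frac{T}{K}\cdot MK\cdot\frac{1}{MT^2} = \frac1T$, i.e.\ $\Pr(\mathcal{H}_T = 1) \ge 1 - \frac1T$.

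The step I expect to require the most care is the concentration one, because the number of samples backing $\hat{\mu}^\rho_{m,k}$ is random: it counts not only the $\rho$ deterministic round-robin hints but also the history-dependent arm pulls from epochs in which $(m,k) \in G^{\rho'}$. I would resolve this by applying Hoeffding to the \emph{hint-only} sub-sample, whose size is exactly $\rho$ and whose membership is fixed by the (reward-oblivious) round-robin schedule, so the classical bound applies verbatim; if one prefers to exploit all observations, a standard peeling argument (a union over the possible sample counts) works at the cost of a harmless logarithmic factor, which the slack between $\epsilon^\rho$ and $\sqrt{1/\rho}$ can absorb. The remaining ingredients—the two triangle inequalities, the arithmetic linking $\epsilon^\rho$ and $\eta$, and the final union bound—are routine.
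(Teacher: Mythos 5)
Your proposal is correct and follows essentially the same route as the paper's proof: a union bound over the at most $T/K$ epochs and $MK$ edges, a decomposition of $\tilde{\mu}^\rho_{m,k}-\mu_{m,k}$ into the quantization error (deterministically at most $\sqrt{1/\rho}\le\epsilon^\rho$ by~\eqref{ineq:comerr}) plus the estimation error, Hoeffding with at least $\rho$ round-robin hint observations per edge, and the arithmetic with $\eta=\sqrt{2/(MT^2)}$. Your explicit treatment of the random total sample count behind $\hat{\mu}^\rho_{m,k}$ is a welcome refinement of a point the paper passes over silently.
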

\begin{proof}
    For a fixed $\rho < \frac{T}{K}$, and any fixed edge $(m,k) \in \mathcal{M}\times\mathcal{K}$, we can observe that $N^\rho_{m,k} \geq \rho$ because $\EBHDETC$ hints for that edge during each exploration epoch exactly once. Now for $(m,k)$ we can write
    \begin{align}
        \Pr\left(\abs{\tilde{\mu}_{m,k}^\rho-\mu_{m,k}} > 2\epsilon^\rho\right) & \leq \Pr\left(\tilde{\mu}_{m,k}^\rho-\hat{\mu}_{m,k}^\rho > \epsilon^\rho\right)+\Pr\left(\abs{\hat{\mu}_{m,k}^\rho-\mu_{m,k}} > \epsilon^\rho\right),\nonumber \\
                                                                                & \overset{(a)}\leq \Pr\left(\abs{\hat{\mu}_{m,k}^\rho-\mu_{m,k}} > \epsilon^\rho\right),\nonumber                                                                \\
                                                                                & \overset{(b)}\leq 2e^{\log\left(\frac{\eta^2}{4}\right)},\nonumber                                                                                              \\ \label{ineq:errprob}
                                                                                & \leq \frac{\eta^2}{2},
    \end{align}
    where (a) follows from the communication error bound \(\eqref{ineq:comerr}\), which implies \(\Pr\left(\tilde{\mu}_{m,k}^\rho - \hat{\mu}_{m,k}^\rho > \epsilon^\rho\right) = 0\), and (b) is derived using Hoeffding's inequality assuming \(N^\rho_{m,k} \geq \rho\).

    Since $\EBHDETC$ hints for all edges until the end of epoch $\rho$, inequality \eqref{ineq:errprob} holds for all \((m, k) \in \mathcal{M} \times \mathcal{K}\). Therefore, by summing the error bound over all \(M\) edges included in any matching \(G \in \mathcal{G}\) and then taking the union bound over all the edges, we can write:
    \begin{align*}
        \Pr\left(\exists G \in \mathcal{G}: \abs{U(G,\bm{\tilde{\mu}}^\rho) - U(G,\bm{\mu})} > 2M\epsilon^\rho\right) & \leq \Pr\left(\exists (m,k) \in \mathcal{M}\times\mathcal{K}:\abs{\tilde{\mu}_{m,k}^\rho-\mu_{m,k}} > 2\epsilon^\rho \right), \\
                                                                                                                      & \leq \frac{MK\eta^2}{2}.
    \end{align*}

    Finally, we take union bound on all $\rho \leq \frac{T}{K}$ and write:
    \begin{align*}
        \Pr\left(\mathcal{H}_T \neq 1\right) & \leq \Pr\left(\exists \rho\leq \frac{T}{K}: \exists G \in \mathcal{G}: \abs{U(G;\bm{\tilde{\mu}}^\rho) - U(G;\bm{\mu})} > 2M\epsilon^\rho \right), \\
                                             & \leq \frac{TM\eta^2}{2},                                                                                                                           \\
                                             & \overset{(a)}\leq \frac{1}{T},
    \end{align*}
    where (a) holds by setting $\eta = \sqrt{\frac{2}{MT^2}}$.
\end{proof}
\begin{lemma}\label{lemma:matching}
    For any \(\rho' \leq \frac{T}{K}\), under the policy \(\pi = \EBHDETC\), if \(\lvert \mathcal{C}^{\rho'} \rvert = M\), the edges in \(\mathcal{C}^{\rho'}\) form a matching that, with probability at least \(1 - \frac{M}{T}\), corresponds to \(G^*\).

.

\end{lemma}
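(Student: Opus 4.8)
The plan is to condition on the good event $\mathcal{H}_T$ of Lemma~\ref{lemma:ht}, which holds with probability at least $1-\tfrac1T$, and to show that on this event no edge of the optimal matching $G^*$ is ever deleted from the active set, i.e. $G^* \subseteq \mathcal{C}^\rho$ for every epoch $\rho \le T/K$. Since $\lvert G^*\rvert = M$, the hypothesis $\lvert \mathcal{C}^{\rho'}\rvert = M$ then forces $\mathcal{C}^{\rho'} = G^*$; in particular the edges of $\mathcal{C}^{\rho'}$ form a perfect matching that coincides with $G^*$. The only probabilistic input is the bound on $\Pr(\mathcal{H}_T=1)$, and since $1-\tfrac1T \ge 1-\tfrac MT$, the stated conclusion follows at once by taking complements.

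The core step is the invariant ``$G^*\subseteq\mathcal{C}^\rho$,'' proved by induction on $\rho$; the base case $\rho=0$ is immediate since $\mathcal{C}^0$ is the complete edge set. For the inductive step, fix $\rho < \rho' \le T/K$ and an edge $(m,k)\in G^*$ with $(m,k)\in\mathcal{C}^\rho$. Because $G^*$ is itself a matching containing $(m,k)$, and $G^\rho_{(m,k)} = \{(m,k)\}\cup\HALG(\bm{\tilde{\mu}}^\rho_{-m,-k})$ is by construction the $\bm{\tilde{\mu}}^\rho$-optimal matching subject to containing $(m,k)$, we have $U(G^\rho_{(m,k)};\bm{\tilde{\mu}}^\rho)\ge U(G^*;\bm{\tilde{\mu}}^\rho)$. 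On $\mathcal{H}_T$ every matching $G$ satisfies $\lvert U(G;\bm{\tilde{\mu}}^\rho)-U(G;\bm{\mu})\rvert\le 2M\epsilon^\rho$; using in addition that $G^\rho=\HALG(\bm{\tilde{\mu}}^\rho)$ maximizes utility under $\bm{\tilde{\mu}}^\rho$ while $G^*$ maximizes it under $\bm{\mu}$, we get
\begin{align*}
U(G^\rho;\bm{\tilde{\mu}}^\rho)
&\le U(G^\rho;\bm{\mu}) + 2M\epsilon^\rho
\le U(G^*;\bm{\mu}) + 2M\epsilon^\rho \\
&\le U(G^*;\bm{\tilde{\mu}}^\rho) + 4M\epsilon^\rho
\le U(G^\rho_{(m,k)};\bm{\tilde{\mu}}^\rho) + 4M\epsilon^\rho .
\end{align*}
Hence $U(G^\rho;\bm{\tilde{\mu}}^\rho)-U(G^\rho_{(m,k)};\bm{\tilde{\mu}}^\rho)\le 4M\epsilon^\rho$, which is exactly the retention condition~\eqref{ineq:removal}, so $(m,k)$ is placed into $\mathcal{C}^{\rho+1}$. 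This proves $G^*\subseteq\mathcal{C}^{\rho+1}$ and completes the induction.

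Combining the pieces: on $\mathcal{H}_T$ we have $G^*\subseteq\mathcal{C}^{\rho'}$ and $\lvert\mathcal{C}^{\rho'}\rvert = M = \lvert G^*\rvert$, so $\mathcal{C}^{\rho'}=G^*$, a perfect matching equal to the optimal one (and, since the surviving edge set is precisely $G^*$, it is also the matching selected by $\HALG$ at exploitation). The event of failure is contained in $\{\mathcal{H}_T\neq1\}$, which by Lemma~\ref{lemma:ht} has probability at most $\tfrac1T\le\tfrac MT$, giving the claimed $1-\tfrac MT$ guarantee. I expect the main obstacle to be the bookkeeping in the displayed inequality chain: one must invoke the optimality of $G^\rho$ (under $\bm{\tilde{\mu}}^\rho$), of $G^*$ (under $\bm{\mu}$), and of $G^\rho_{(m,k)}$ (under $\bm{\tilde{\mu}}^\rho$ among matchings through $(m,k)$) in the right places, and check that $\mathcal{H}_T$ simultaneously controls all three matchings at every epoch $\rho\le\rho'$; everything else is a routine induction plus the union bound already packaged in Lemma~\ref{lemma:ht}.
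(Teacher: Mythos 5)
Your proof is correct and follows essentially the same route as the paper's: both rest on the concentration event $\mathcal{H}_T$ of Lemma~\ref{lemma:ht} together with the observation that $U(G^\rho_{(m,k)};\bm{\tilde{\mu}}^\rho)\ge U(G^*;\bm{\tilde{\mu}}^\rho)$ whenever $(m,k)\in G^*$, so that on the good event no optimal edge ever violates the retention condition~\eqref{ineq:removal} and the cardinality hypothesis $\lvert\mathcal{C}^{\rho'}\rvert=M$ forces $\mathcal{C}^{\rho'}=G^*$. The only differences are cosmetic: you condition once on $\mathcal{H}_T$ (yielding the slightly stronger bound $1-\frac{1}{T}$) where the paper bounds per-edge elimination events $A_{m,k}\subseteq B_{m,k}$ and union-bounds over the $M$ edges of $G^*$, and you obtain the ``forms a matching'' clause as a corollary of $\mathcal{C}^{\rho'}=G^*$ on the good event, whereas the paper additionally gives a separate deterministic pigeonhole argument that the $M$ surviving edges cover all $M$ agents regardless of the event.
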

\begin{proof}
    The algorithm $\EBHDETC$ stops hinting and communicating at the beginning of epoch \(\rho'\) if \(|\mathcal{C}^{\rho'}| = M\). Suppose the edges \((m, k) \in \mathcal{C}^{\rho'}\) do not form a matching. Consequently, there must exist an agent \(m'\) such that \(\nexists k \in \mathcal{K}\) where \((m', k) \in \mathcal{C}^{\rho'}\), which is impossible. This is because, according to how $\EBHDETC$ updates \(\mathcal{C}^\rho\) at each epoch, any edge \((m, k) \in \mathcal{C}^{\rho'}\) is included in a matching \(G^{\rho'}_{(m, k)}\) such that \(U(G^{\rho'}; \bm{\tilde{\mu}}^{\rho'}) - U(G^{\rho'}_{(m, k)}; \bm{\tilde{\mu}}^{\rho'}) < 4M \bm{\epsilon}^{\rho'}\), where \(m'\) is connected to \(k_{m'}^{G^{\rho'}_{(m, k)}}\), implying that \((m', k_{m'}^{G^{\rho'}_{(m, k)}})\) should be in \(\mathcal{C}^{\rho'}\).

    We now show that if agents stop hinting and communicating when $\abs{\mathcal{C}^{\rho'}} = M$, the matching formed by the edges in $\mathcal{C}^{\rho'}$ is $G^*$ with a probability of at least $1 - \frac{M}{T}$. To support this claim, we bound the probability of an edge $(m,k)$ being eliminated by $\EBHDETC$ at any epoch $\rho$. Assume that $(m,k) \in G^*$ is eliminated from $\mathcal{C}^\rho$, i.e., $U(G^\rho;\bm{\tilde{\mu}}^\rho) - U(G_{(m,k)}^\rho;\bm{\tilde{\mu}}^\rho) > 4M\epsilon^\rho$. We first define the following events $A_{m,k}$ and $B_{m,k}$ for an edge $(m,k) \in G^*$ as:

    \begin{align*}
        A_{m,k} & \coloneqq \left\{\exists \rho \leq \rho': U(G^\rho;\bm{\tilde{\mu}}^\rho) - U(G_{(m,k)}^\rho;\bm{\tilde{\mu}}^\rho) > 4M\epsilon^\rho\right\}, \\
        B_{m,k} & \coloneqq \left\{\exists \rho \leq \rho': U(G^\rho;\bm{\tilde{\mu}}^\rho) - U(G^*;\bm{\tilde{\mu}}^\rho) > 4M\epsilon^\rho\right\}
    \end{align*}

    We can then write:

    \begin{align*}
        \Pr\left(A_{m,k}\right) & \overset{(a)}\leq \Pr\left(B_{m,k}\right),                                                                                    \\
                                & \leq \Pr\left(\mathcal{H}_T \neq 1\right) + \Pr\left(\mathcal{H}_T = 1\right) \Pr\left(B_{m,k} \mid \mathcal{H}_T = 1\right), \\
                                & \overset{(b)}\leq \Pr\left(\mathcal{H}_T \neq 1\right),
    \end{align*}

    where (a) holds because $G^*$ is a matching containing $(m,k)$ while $G^\rho_{(m,k)}$ is one of such matchings with the highest utility, i.e., $U(G_{(m,k)}^\rho;\bm{\tilde{\mu}}^\rho) \geq U(G^*;\bm{\tilde{\mu}}^\rho)$. Inequality (b) follows from the observation that, given $\mathcal{H}_T = 1$, the following inequalities hold:

    \begin{align*}
        U(G^\rho;\bm{\tilde{\mu}}^\rho) - 2M\epsilon^\rho & \leq U(G^\rho;\bm{\mu}), \\
        U(G^*;\bm{\tilde{\mu}}^\rho) + 2M\epsilon^\rho    & \geq U(G^*;\bm{\mu}),
    \end{align*}

    which implies:

    \[ U(G^\rho;\bm{\tilde{\mu}}^\rho) - U(G^*;\bm{\tilde{\mu}}^\rho) \leq 4M\epsilon^\rho - \Delta^\text{match}_{\min}, \]

    making $\Pr\left(B_{m,k} \mid \mathcal{H}_T = 1\right) = 0$. We demonstrated that any edge \((m,k) \in G^*\) is eliminated from the set of active edges with a probability of at most \(\frac{1}{T}\). By applying the union bound to all \(M\) edges of the matching \(G^*\), we can show that the final \(M\) edges appearing in \(\mathcal{C}^{\rho'}\) form \(G^*\) with a probability of at least \(1-\frac{M}{T}\).

\end{proof}
\begin{lemma}\label{lemma:finale}
    Assuming a unique optimal matching \(G^* \in \mathcal{G}\), under the policy \(\pi = \EBHDETC\), the algorithm stops querying for hints by epoch \(\rho' \leq \frac{64M^2\log\left(T\sqrt{2M}\right)}{\left(\Delta^\text{match}_{\min}\right)^2}\) with probability at least \(1 - \frac{1}{T}\).

\end{lemma}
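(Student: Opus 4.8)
The plan is to run the entire argument on the good event $\mathcal{H}_T=1$ of Lemma~\ref{lemma:ht} (which holds with probability at least $1-\frac{1}{T}$) and deduce the conclusion deterministically from it. First I would rewrite the error radius: since $\eta=\sqrt{2/(MT^2)}$ gives $\log(2/\eta)=\log(T\sqrt{2M})$, we have $\epsilon^\rho=\sqrt{\log(T\sqrt{2M})/\rho}$, and therefore
\[
\epsilon^\rho<\frac{\Delta^\text{match}_{\min}}{8M}\quad\Longleftrightarrow\quad \rho>\rho_0\coloneqq\frac{64M^2\log(T\sqrt{2M})}{(\Delta^\text{match}_{\min})^2}.
\]
If $\rho_0\ge T/K$ the claim is vacuous, as the algorithm runs for at most $T/K$ epochs; so we may assume $\rho_0<T/K$, which is exactly the regime in which $\mathcal{H}_T$ controls all relevant epochs.

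Next I would show that on $\mathcal{H}_T=1$ no edge of $G^*$ is ever deleted, so $G^*\subseteq\mathcal{C}^\rho$ for every $\rho$ and in particular $\abs{\mathcal{C}^\rho}\ge M$ throughout. For $(m,k)\in G^*$, the matching $G^*$ itself contains $(m,k)$, so $U(G^\rho_{(m,k)};\bm{\tilde{\mu}}^\rho)\ge U(G^*;\bm{\tilde{\mu}}^\rho)$; combining this with $U(G^\rho;\bm{\tilde{\mu}}^\rho)\le U(G^\rho;\bm{\mu})+2M\epsilon^\rho\le U(G^*;\bm{\mu})+2M\epsilon^\rho$ and $U(G^*;\bm{\tilde{\mu}}^\rho)\ge U(G^*;\bm{\mu})-2M\epsilon^\rho$ yields $U(G^\rho;\bm{\tilde{\mu}}^\rho)-U(G^\rho_{(m,k)};\bm{\tilde{\mu}}^\rho)\le 4M\epsilon^\rho$, so the removal test~\eqref{ineq:removal} is met and $(m,k)$ survives into $\mathcal{C}^{\rho+1}$ (the same estimate bounds the event $B_{m,k}$ in Lemma~\ref{lemma:matching}). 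Consequently, whenever the while loop of Algorithm~\ref{routin:ebhdetc} halts, say at the first epoch $\rho^\star$ with $\abs{\mathcal{C}^{\rho^\star}}=M$, the inclusion $G^*\subseteq\mathcal{C}^{\rho^\star}$ together with $\abs{G^*}=M$ forces $\mathcal{C}^{\rho^\star}=G^*$, hence $G'^*=G^*$.

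Then I would bound $\rho^\star$ by showing every edge outside $G^*$ is gone by the first epoch exceeding $\rho_0$. Fix $\rho$ with $\rho_0<\rho\le T/K$ and an edge $(m,k)\notin G^*$ still present in $\mathcal{C}^\rho$. Since $G^\rho_{(m,k)}$ is a matching through $(m,k)$ it differs from $G^*$, so $U(G^\rho_{(m,k)};\bm{\mu})\le U(G^*;\bm{\mu})-\Delta^\text{match}_{\min}$; on $\mathcal{H}_T=1$,
\[
U(G^\rho;\bm{\tilde{\mu}}^\rho)-U(G^\rho_{(m,k)};\bm{\tilde{\mu}}^\rho)\ge\bigl(U(G^*;\bm{\mu})-2M\epsilon^\rho\bigr)-\bigl(U(G^*;\bm{\mu})-\Delta^\text{match}_{\min}+2M\epsilon^\rho\bigr)=\Delta^\text{match}_{\min}-4M\epsilon^\rho>4M\epsilon^\rho,
\]
using $\epsilon^\rho<\Delta^\text{match}_{\min}/(8M)$, so $(m,k)$ fails~\eqref{ineq:removal} and is dropped from $\mathcal{C}^{\rho+1}$. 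Hence once the first epoch strictly larger than $\rho_0$ is reached—if the loop has not already stopped—every non-$G^*$ edge is flushed, $\mathcal{C}$ becomes exactly $G^*$, and the loop terminates; therefore the last epoch in which hints are queried satisfies $\rho'\le\rho_0$ (up to the harmless integer rounding), while $G'^*=G^*$. Since the only randomness invoked is $\mathcal{H}_T$, the whole statement holds with probability at least $1-\frac{1}{T}$ by Lemma~\ref{lemma:ht}.

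The computation is routine once $\mathcal{H}_T$ is available; the only point demanding care—rather than a genuine obstacle—is the coupling of the two halves: the loop may terminate strictly before epoch $\lceil\rho_0\rceil$, so $\rho'$ is only bounded above and not pinned down, and one must simultaneously guarantee that no $G^*$-edge is ever removed so that such an early termination still yields $\mathcal{C}^{\rho^\star}=G^*$ rather than some spurious $M$-edge set. Everything else is just propagating the $2M\epsilon^\rho$ concentration radius through the optimality of $\HALG$ and solving $8M\epsilon^\rho<\Delta^\text{match}_{\min}$ for $\rho$.
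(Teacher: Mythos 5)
Your proof is correct and follows essentially the same route as the paper's: condition on the good event $\mathcal{H}_T=1$ from Lemma~\ref{lemma:ht}, show $G^*$-edges always pass the retention test~\eqref{ineq:removal} while every edge outside $G^*$ fails it once $4M\epsilon^\rho<\Delta^\text{match}_{\min}/2$, and solve for the epoch threshold. Your explicit handling of early termination (that $G^*\subseteq\mathcal{C}^{\rho^\star}$ with $\abs{\mathcal{C}^{\rho^\star}}=M$ forces $\mathcal{C}^{\rho^\star}=G^*$) is a welcome bit of extra care that the paper leaves implicit via Lemma~\ref{lemma:matching}.
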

\begin{proof}
    We first define the event \(A_T\) as
    \[
        A_T \coloneqq \mathds{1}\left\{\exists \rho' \leq \frac{T}{K}: \abs{\mathcal{C}^{\rho'}} = M\right\},
    \]
    which represents whether \(\EBHDETC\) stops hinting and enters the exploitation phase at the beginning of some epoch \(\rho'\).

    Now we can write
    \begin{align}
        \Pr\left(A_T \neq 1\right) & = \Pr\left(\mathcal{H}_T = 1\right)\Pr\left(A_T \neq 1|\mathcal{H}_T = 1\right) + \Pr\left(\mathcal{H}_T = 0\right)\Pr\left(A_T \neq 1|\mathcal{H}_T = 0\right), \nonumber \\
                                   & \leq \Pr\left(\mathcal{H}_T = 1\right)\Pr\left(A_T \neq 1|\mathcal{H}_T = 1\right) + \Pr\left(\mathcal{H}_T = 0\right). \label{ineq:matching}
    \end{align}

    From Lemma \ref{lemma:ht}, we know that \(\Pr\left(\mathcal{H}_T = 0\right) \leq \frac{1}{T}\). Thus, we prove that, given \(\mathcal{H}_T = 1\), there exists an epoch \(\rho' \leq \frac{T}{K}\) such that \(\abs{\mathcal{C}^{\rho'}} = M\), that makes \(\Pr\left(\mathcal{H}_T = 1\right)\Pr\left(A_T \neq 1|\mathcal{H}_T = 1\right) = 0\).

    Accordingly, given $\mathcal{H}_T = 1$, we bound the index of the last epoch \(\rho\) where an edge \((m',k') \in \mathcal{M} \times \mathcal{K}\) such that \((m',k') \notin G^*\) can be in \(\mathcal{C}^{\rho}\). As a consequence of Lemma \ref{lemma:matching}, we know that all the edges $(m,k) \in G^*$ will remain in $\mathcal{C}$ till time step $T$ when $\mathcal{H}_T = 1$. We also know that $ U(G^*;\bm{\tilde{\mu}}^\rho) \leq U(G^\rho;\bm{\tilde{\mu}}^\rho)$. Thus, if $4M\epsilon^\rho < \frac{\Delta^\text{match}_{\min}}{2}$ we can write that
    \begin{align*}
        U(G^*;\bm{\tilde{\mu}}^\rho) - U(G^\rho_{(m',k')};\bm{\tilde{\mu}}^\rho) & \geq  U(G^*;\bm{\mu}) - U(G^\rho_{(m',k')};\bm{\mu})-4M\epsilon^\rho, \\
                                                                                 & \geq \Delta^\text{match}_{\min} - 4M\epsilon^\rho,                        \\
                                                                                 & \geq 4M\epsilon^\rho.
    \end{align*}
    Thus, all the edges $(m',k')\notin G^*$ can not remain in $\mathcal{C}^\rho$ for $\rho \geq \frac{64M^2\log\left(T\sqrt{2M}\right)}{\left(\Delta^\text{match}_{\min}\right)^2}$ given $\mathcal{H}_T = 1$.

    Therefore, re-write inequality \eqref{ineq:matching} as
    \begin{align*}
        Pr\left(A_T \neq 1\right)
         & \leq \Pr\left(\mathcal{H}_T = 1\right)\Pr\left(A_T \neq 1|\mathcal{H}_T = 1\right) + \Pr\left(\mathcal{H}_T = 0\right), \\
         & \leq \Pr\left(\mathcal{H}_T = 0\right),                                                                                 \\
         & \leq \frac{1}{T}.
    \end{align*}

\end{proof}

\section{Extended Experiments}\label{sec:experiment}
Here, we present additional simulation plots that provide further insights, complementing the main plots shown in Figure \ref{fig:figss}.

One key aspect not fully illustrated in Figure \ref{fig:figss} is the distinct difference in hint complexity between $\GHCLA$ and $\GPHCLA$. As demonstrated in Theorem \ref{lemma:hcla2}, the hint complexity of $\GHCLA$ is up to \(M\) times greater than that of $\GPHCLA$ in the worst-case scenario. Although constructing a worst-case instance for small graphs is challenging, Figure \ref{fig1:figss} offers a direct comparison of the performance of $\GPHCLA$ and $\GHCLA$ for an instance with $M=2$, $K=2$, and $\Delta^\text{match}_{\min} = 0.45$, averaged over 50 trials with $T=10^5$.

\begin{figure}[H]
        \centering
        \begin{subfigure}[b]{0.99\textwidth}
            \centering\includegraphics[width=0.5\textwidth]{legend.png}
        \end{subfigure}
        \\
        \setcounter{subfigure}{0}
        \begin{subfigure}[b]{0.23\textwidth}
            \centering
            \includegraphics[width=\textwidth]{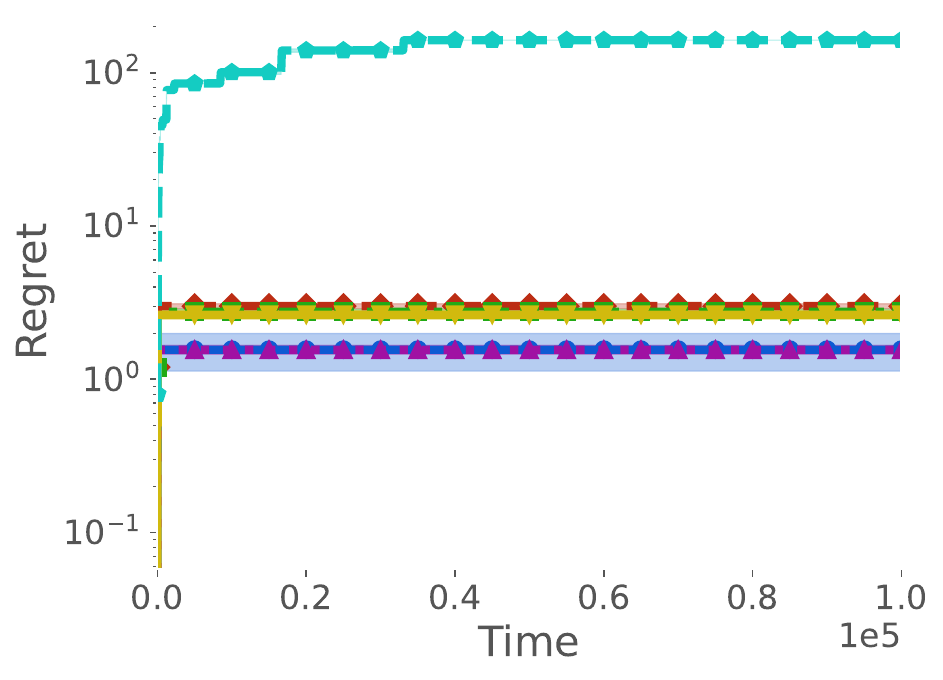}
            \caption[Regret]%
            {{\small Exploration Regret}}      
            \label{fig1:exp}
        \end{subfigure}
        \hfill
        \begin{subfigure}[b]{0.23\textwidth}  
            \centering 
            \includegraphics[width=\textwidth]{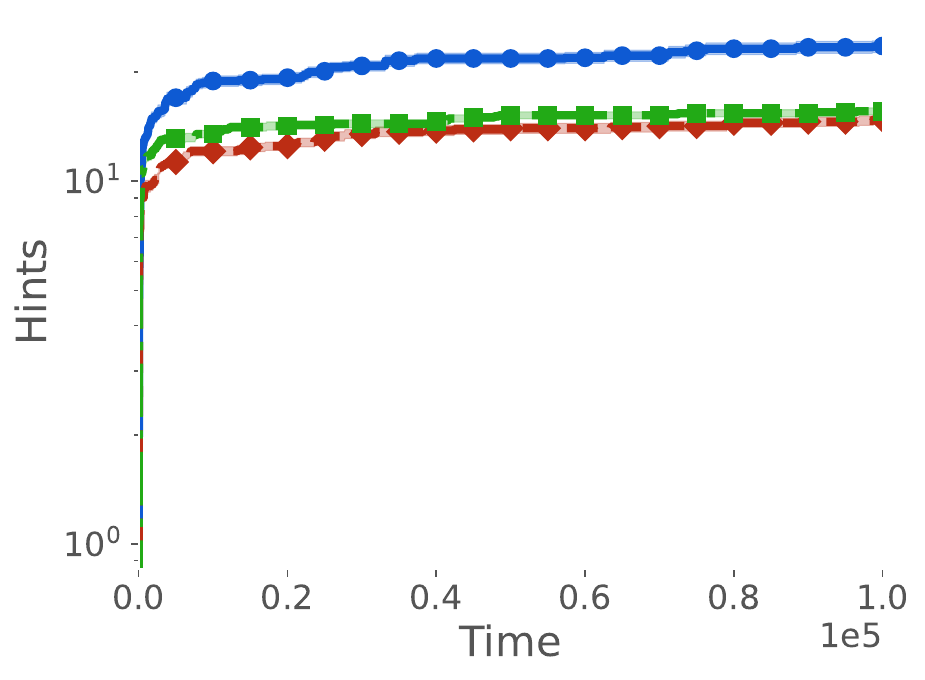}
            \caption[]%
            {{\small Centralized Hint Complexity}}    
            \label{fig1:chints}
        \end{subfigure}
        \hfill
        \begin{subfigure}[b]{0.23\textwidth}   
            \centering 
            \includegraphics[width=\textwidth]{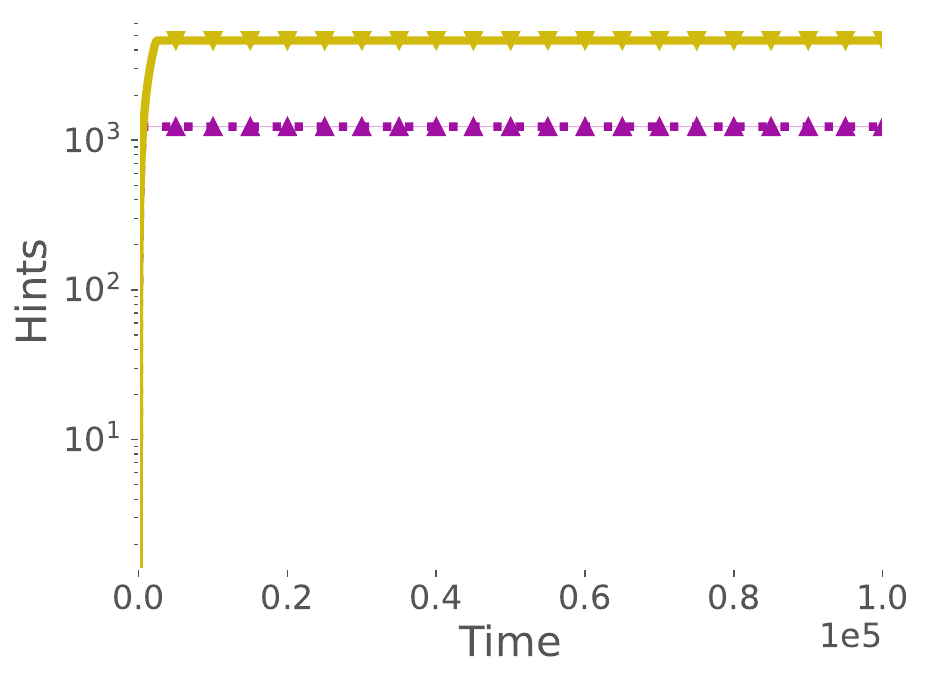}
            \caption[]%
            {{\small Decentral. Hint Complexity}}    
            \label{fig1:dhints}
        \end{subfigure}
        \hfill
        \begin{subfigure}[b]{0.23\textwidth}   
            \centering 
            \includegraphics[width=\textwidth]{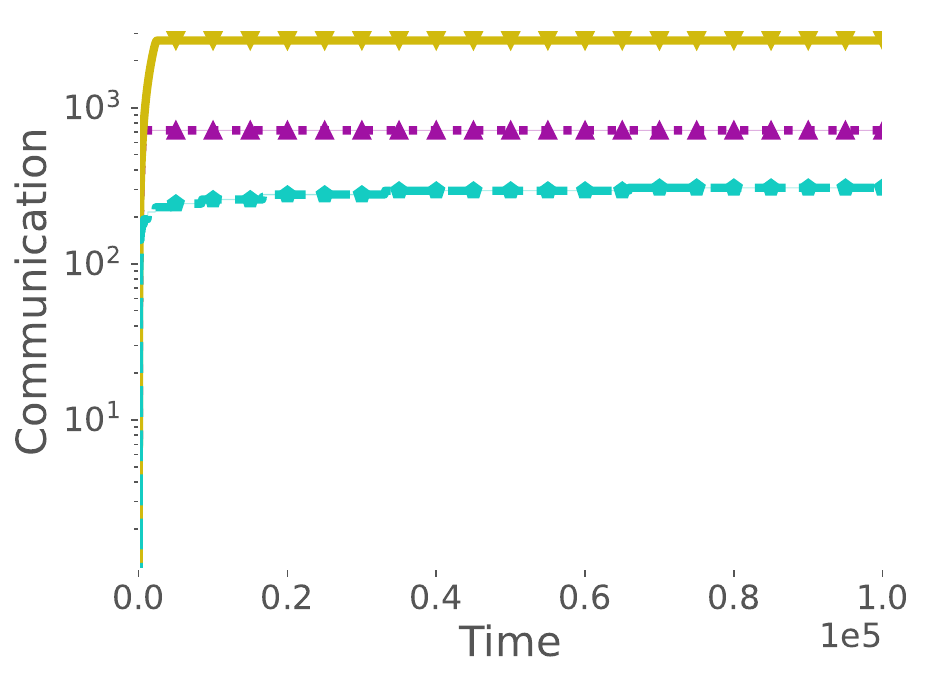}
            \caption[]%
            {{\small Communication Regret}}    
            \label{fig1:dcom}
        \end{subfigure}
        \caption[ ]
        {{\small Figure \ref{fig1:chints}
 demonstrates that $\GPHCLA$ outperforms $\GHCLA$ in terms of $L^\pi(T)$.}}
        \label{fig1:figss}
    \end{figure}

In the following plots in Figure \ref{fig2:figss}, we present the results of simulations conducted on a larger graph with $M=4$, $K=7$, and $\Delta^\text{match}_{\min} = 0.20$, averaged over 50 trials with $T=10^5$. These plots illustrate how the slight increase in the size of the instance, compared to the instance in Figure \ref{fig:figss}, amplifies the performance gap between $\HCLA$ and the other algorithms.

\begin{figure}[H]
        \centering

        \setcounter{subfigure}{0}
        \begin{subfigure}[b]{0.23\textwidth}
            \centering
            \includegraphics[width=\textwidth]{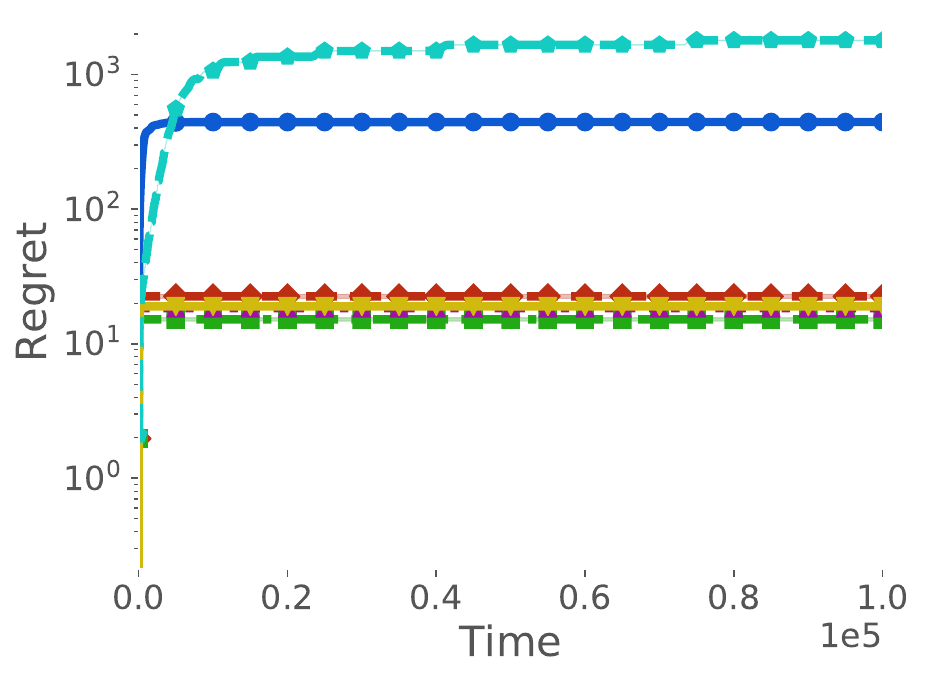}
            \caption[Regret]%
            {{\small Exploration Regret}}      
            \label{fig2:exp}
        \end{subfigure}
        \hfill
        \begin{subfigure}[b]{0.23\textwidth}  
            \centering 
            \includegraphics[width=\textwidth]{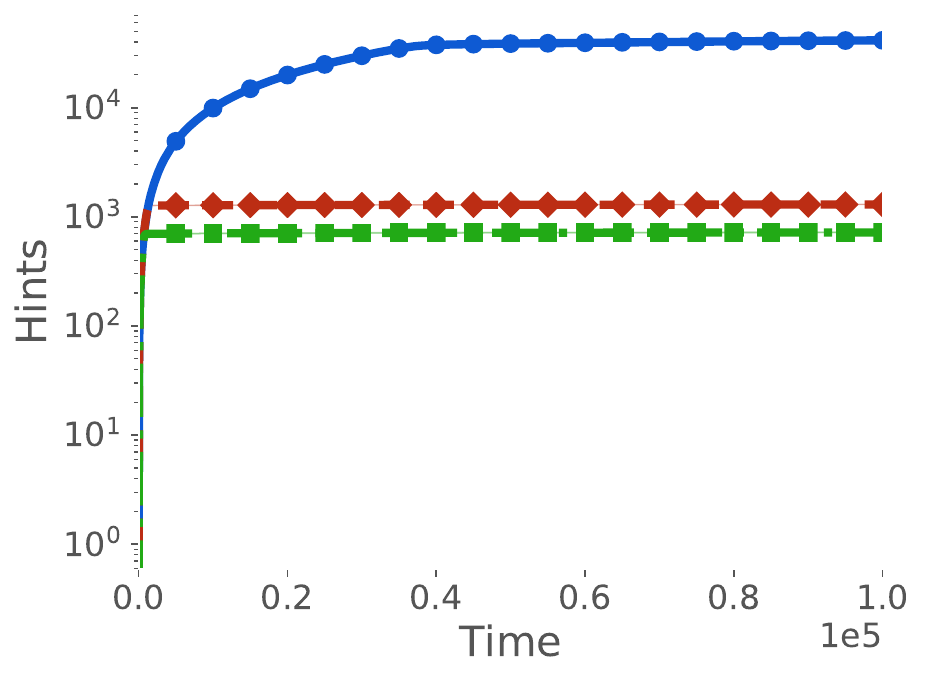}
            \caption[]%
            {{\small Centralized Hint Complexity}}    
            \label{fig2:chints}
        \end{subfigure}
        \hfill
        \begin{subfigure}[b]{0.23\textwidth}   
            \centering 
            \includegraphics[width=\textwidth]{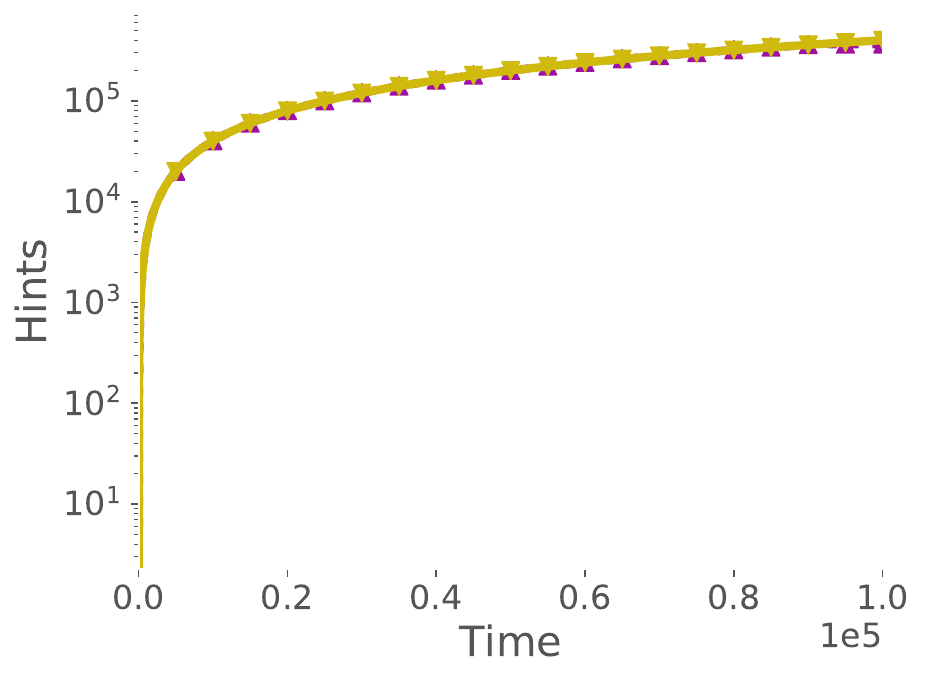}
            \caption[]%
            {{\small Decentral. Hint Complexity}}    
            \label{fig2:dhints}
        \end{subfigure}
        \hfill
        \begin{subfigure}[b]{0.23\textwidth}   
            \centering 
            \includegraphics[width=\textwidth]{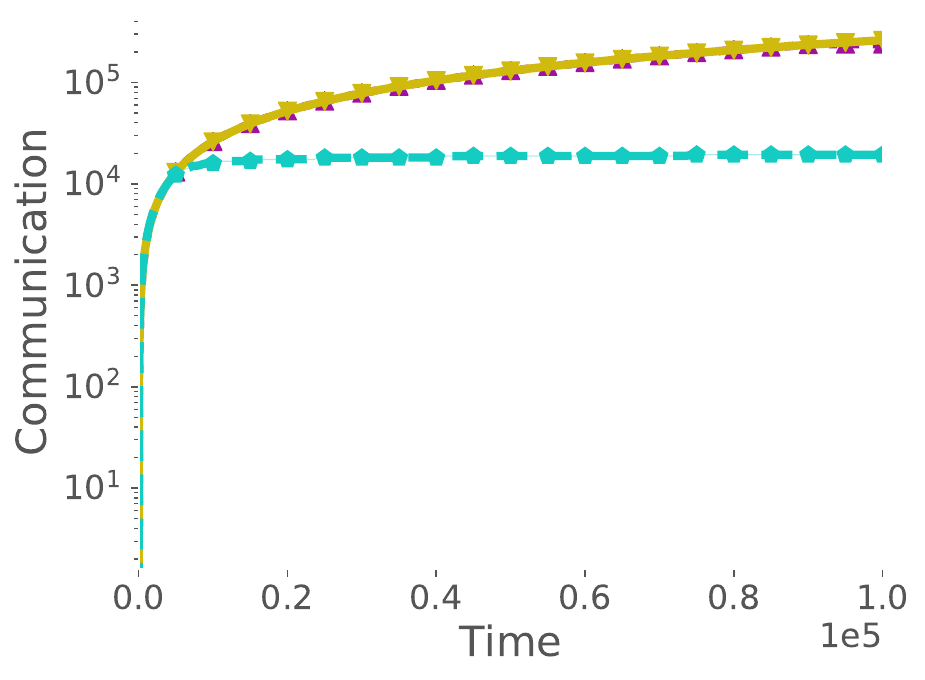}
            \caption[]%
            {{\small Communication Regret}}    
            \label{fig2:dcom}
        \end{subfigure}
        \caption[ ]
        {{\small Figures \ref{fig2:chints} and \ref{fig2:exp} illustrate the inefficiency of $\HCLA$ when subjected to a slight increase in the size of the instance.}}
        \label{fig2:figss}
    \end{figure}

\section{Optimality of the Results}
\label{App:OofR}
\subsection{A Lower Bound on the Necessary Number of Hints}
An important concern in designing efficient learning policies for $\HHMMAB$, is the number of queried hints $L^\pi(T)$. In Theorem \ref{thm:imphi}, we prove that all uniformly fast convergent learning policies, defined in Definition \ref{def:ufc}, require $\Omega(\log T)$ hints to achieve time-independent regret.  Accordingly, we call a policy $\pi$,  \textit{asymptotically hint optimal} if $L^\pi(T) \in O(\log T)$.
\begin{definition}\label{def:ufc}
    A policy $\pi$ is called uniformly fast convergent if for all $0<\alpha<1$ and all sub-optimal matchings $G \neq G^*$, $$\E\left[N^{\pi}_{G}(T)\right]=o\left(T^\alpha\right),$$
    holds.
\end{definition}
\begin{theorem}\label{thm:imphi}\label{thm:hint-lower-bound}
    Any uniformly fast convergent policy $\pi$ with time-independent regret should query for $L^\pi(T) \in \Omega(\log T)$ hints.
\end{theorem}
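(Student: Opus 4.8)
The plan is to adapt the classical change-of-measure lower bound argument (in the spirit of Lai--Robbins and Graves--Lai) to the hint setting. The key observation is that, without queried hints, a uniformly fast convergent policy cannot simultaneously achieve time-independent regret on all instances: exploration of a sub-optimal matching $G \neq G^*$ costs $\Omega(\Delta_G^{\mathrm{match}})$ regret per pull, so time-independent regret forces $\E[N^\pi_G(T)] = O(1)$ for the pulled-arm counts. Yet distinguishing the true instance $\bm{\mu}$ from a confusing instance $\bm{\mu}'$ (in which some other matching becomes optimal) requires on the order of $\log T$ \emph{observations} of the relevant edges/matchings; if those observations cannot come from arm-pulls, they must come from hints, giving $L^\pi(T) = \Omega(\log T)$.

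Concretely, I would proceed as follows. First, fix the true instance $\bm{\mu}$ with unique optimal matching $G^*$ and pick a sub-optimal matching $\bar G$ together with an edge $(m,k) \in \bar G \setminus G^*$ whose mean is the one to be perturbed. Construct a confusing instance $\bm{\mu}'$ that agrees with $\bm{\mu}$ on every edge except $(m,k)$, where $\mu'_{m,k}$ is raised just past the threshold that makes $\bar G$ the optimal matching under $\bm{\mu}'$; choose the perturbation small enough that $\bm{\mu}'$ also has a unique optimum, and note $\mathrm{kl}(\mu_{m,k}, \mu'_{m,k}) < \infty$. Second, let $N^\pi_{m,k}(T)$ count the \emph{total} number of observations of edge $(m,k)$ (arm-pulls plus hints). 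A standard divergence-decomposition / data-processing argument (e.g. Garivier--Ménard--Stoltz style) gives, for any event $\mathcal{E}$ in the $\sigma$-algebra of the observations up to $T$,
\begin{align*}
\E_{\bm{\mu}}\!\left[N^\pi_{m,k}(T)\right] \cdot \mathrm{kl}(\mu_{m,k}, \mu'_{m,k}) \;\geq\; \mathrm{kl}\!\left(\Pr_{\bm{\mu}}(\mathcal{E}), \Pr_{\bm{\mu}'}(\mathcal{E})\right).
\end{align*}
Third, take $\mathcal{E} = \{N^\pi_{\bar G}(T) \geq T/2\}$, the event that the would-be-optimal-under-$\bm{\mu}'$ matching is pulled often. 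Uniform fast convergence applied to $\bm{\mu}$ forces $\Pr_{\bm{\mu}}(\mathcal{E}) = o(T^{\alpha-1}) \to 0$ (since $\E_{\bm\mu}[N^\pi_{\bar G}(T)] = o(T^\alpha)$ and Markov's inequality), while time-independent regret on $\bm{\mu}'$ forces $\Pr_{\bm{\mu}'}(\mathcal{E}) \to 1$ (otherwise $\bar G = G^{*}$ under $\bm{\mu}'$ is not pulled $\Theta(T)$ times and regret on $\bm{\mu}'$ grows linearly). Hence $\mathrm{kl}(\Pr_{\bm{\mu}}(\mathcal{E}), \Pr_{\bm{\mu}'}(\mathcal{E})) = \Omega(\log T)$, so $\E_{\bm{\mu}}[N^\pi_{m,k}(T)] = \Omega(\log T / \mathrm{kl}(\mu_{m,k},\mu'_{m,k}))$.

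Fourth, and this is where the hint count enters, I would argue that of those $\Omega(\log T)$ observations of $(m,k)$, only $O(1)$ can be arm-pulls: every round in which some agent pulls an arm of $\bar G$ (more precisely, every round where the pulled matching $G(t) \neq G^*$ and touches edge $(m,k)$) contributes $\Theta(\Delta^{\mathrm{match}}_{\min})$ to the regret on $\bm{\mu}$, so time-independent regret caps the pulled-arm observation count of $(m,k)$ by $O(1/\Delta^{\mathrm{match}}_{\min})$. Subtracting, the hint-observation count of $(m,k)$ is $\Omega(\log T)$, and since $L^\pi(T)$ is at least the expected number of hints on this single edge, $L^\pi(T) = \Omega(\log T)$. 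The main obstacle I anticipate is the fourth step: making rigorous the claim that arm-pull observations of $(m,k)$ cannot themselves supply the $\log T$ discriminating samples — one has to be careful that pulling $G^*$ never touches $(m,k)$ (true since $(m,k) \notin G^*$ by construction) and that the only way to sample $(m,k)$ via pulls is to pull a sub-optimal matching, which is exactly what bounded regret forbids; a secondary subtlety is handling the collision structure so that "observing edge $(m,k)$" is well-defined (an agent pulling $k$ alongside a collision observes a zero, not $X_{m,k}$), which is why restricting attention to a matching $\bar G$ and counting genuine (collision-free) observations is the clean way to set up the divergence bound.
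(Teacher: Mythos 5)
Your proposal is correct and follows essentially the same route as the paper: the paper simply cites the classical Lai--Robbins/Graves--Lai bound that any uniformly fast convergent policy needs $\Omega(\log T)$ total observations of suboptimal matchings, and then notes that time-independent regret caps the pull-observations at $O(1)$, forcing the remainder to be hints. Your version additionally derives that classical bound from scratch via a change-of-measure/divergence-decomposition argument and handles the collision caveat explicitly, which makes it more detailed than (but not different in substance from) the paper's two-line sketch.
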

\begin{proof}
    Recall that in classic multi-armed bandits \citet{garivier2019explore,lai1985asymptotically}, for any uniformly fast convergent policy $\pi$, we have

    $$
        \liminf _{T \rightarrow \infty} \frac{\E\left[N_{G}^{\pi}\left(T\right)\right]}{\log T} \geqslant \frac{1}{\mathrm{kl}\left(U(G;\bm{\mu}), U(G^*;\bm{\mu})\right)}, \forall G \neq G^*,
    $$

    If the number of hints is $o(\log T)$, then there must be at least $\Omega(\log T)$ non-hint observations on some suboptimal arms. Consequently, for any uniformly fast convergent policy, the regret is also $\Omega(\log T)$, which is time-dependent.

\end{proof}

This theorem indicates that the upper bounds we have achieved for $L^\pi(T)$ are asymptotically optimal for $\pi \in {\HCLA,\GPHCLA,\GHCLA,\HDETC,\EBHDETC}$. Additionally, we can observe that the upper bound $O\left(MK\log T\right)$ obtained for $L^\GPHCLA(T)$ is also tight. This is because there are $MK$ edges, and there always exist instances where each edge must be explored or hinted at least $O\left(\log T\right)$ times.

\subsection{Bounds on the Necessary Regret}

When estimations are accurate for all agent pairs \(\left(m_{1}, m_2\right)\) with potential matches \(\left(k_{1}, k_2\right)\), the Hungarian Algorithm finds the optimal matching with zero regret. This leads to an upper bound of \(O(M^3 K^2)\), as there are \(O(M^2 K^2)\) exploration rounds, each incurring a maximum regret of \(M\). Our bound is tighter, particularly as \(K\) increases, due to the assumption \(M \leq K\).  

For the lower bound, in the homogeneous setting, agents rank arms globally to identify the top \(M\) arms, requiring sufficient exploration to distinguish them from the remaining \(K-M\) arms. This results in an \(O(M^2 K)\) bound, which also serves as a lower bound in our case. However, there remains a gap of \(M^2\) between our centralized bound and this lower bound, which remains an open question.  

In the decentralized case, our upper bound remains \(O(M^3 K^2)\), aligning with the centralized case. However, proving a matching lower bound is an open direction for future work.

\subsection{A Lower Bound on the Necessary Number of Communications}
Minimizing communication phases is a crucial challenge in designing algorithms for decentralized setups. Even in offline scenarios without uncertainty, bounding the required communication remains difficult. While time-independent regret might seem achievable with time-independent communication phases in homogeneous multi-agent multi-armed bandits, we conjecture that achieving time-independent regret in $\HHMMAB$s requires \(O\left(\log T\right)\) communication epochs, similar to \(L^\pi(T)\).

\begin{conjecture}\label{Conj:conj}
Any uniformly fast-converging policy \(\pi\) with time-independent regret must communicate \(\Omega\left(\log T\right)\) times, i.e., \(R^{\pi_{\text{com}}} \in \Omega\left(\log T\right)\).
\end{conjecture}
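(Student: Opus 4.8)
The plan is to adapt the change-of-measure argument behind the hint lower bound (Theorem~\ref{thm:hint-lower-bound}) to the communication transcript, coupling it with a regret-accumulation argument that charges each synchronization against the regret accrued while agents pull a stale matching. First I would reduce to a two-instance family: fix $\mathcal{M},\mathcal{K}$ and construct $\bm{\mu}$ and $\bm{\mu}'$ that agree on every entry except a single pivotal edge $(m^\star,k^\star)$, chosen so that $G^*(\bm{\mu})\neq G^*(\bm{\mu}')$ while both instances have gap $\Delta$. Since only agent $m^\star$ ever observes arm $k^\star$ through its own pulls and hints, every other agent's belief about which matching is optimal is a function solely of the collision-based transcript $\Pi$. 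By Theorem~\ref{obs:msg} this channel is the only conduit of cross-agent information, so a time-independent-regret policy must let the non-pivotal agents commit to $G^*$ with error $O(1/T)$ using $\Pi$ alone.

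The second step is to convert this correct-commitment requirement into a bound on the number of transmissions rather than on the number of samples. I would index the horizon by communication times $t_1<\dots<t_C$ and observe that between consecutive transmissions the agents must pull a matching that is \emph{frozen} with respect to the shared statistics $\bm{\tilde{\mu}}$. Writing $p_i$ for the probability that the matching frozen at $t_i$ differs from $G^*$, the interval $[t_i,t_{i+1})$ contributes $\Theta\big((t_{i+1}-t_i)\,p_i\,M\big)$ to the regret, so time-independent regret forces $\sum_i (t_{i+1}-t_i)\,p_i = O(1)$. A Bretagnolle--Huber bound applied to the frozen-matching test under $\bm{\mu}$ versus $\bm{\mu}'$ lower-bounds how fast $p_i$ can shrink in the pivotal observations available by $t_i$, tying the admissible gap lengths $t_{i+1}-t_i$ to $t_i$. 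The target is then to show that reconciling this schedule with a final commitment correct up to error $1/T$, which by Theorem~\ref{thm:hint-lower-bound} needs $\Omega(\log T)$ pivotal observations and hence $t_C\in\Omega(\log T)$, cannot be achieved with $C=o(\log T)$ synchronizations; charging each transmission its $\Omega(1)$ collision cost would then give $R^{\pi_{\text{com}}}\in\Omega(\log T)$.

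The main obstacle is precisely this last implication. The difficulty is that a policy may run long stretches of silent, hint-only exploration, which incur no regret, and transmit only rarely, so a naive reading of the constraint $\sum_i (t_{i+1}-t_i)\,p_i = O(1)$ permits the $t_i$ to grow extremely fast and pushes $C$ well below $\log T$. Overcoming this requires a structural argument showing that in the heterogeneous, combinatorial-matching setting the identity of the frozen-optimal matching cannot be stabilized by a single late transmission: I would aim to exhibit a sub-family of instances that place the pivotal gap at $\Omega(\log T)$ distinct dyadic scales and argue that any admissible policy must re-synchronize at each scale, so the number of decision points the transcript must certify is $\Omega(\log T)$ rather than constant. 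Establishing that no adaptive, batched (tower-spaced) schedule can certify all these scales with fewer than $\Omega(\log T)$ transmissions is the crux, and it is the reason the statement is posed as a conjecture; I expect it to demand a transcript-compression and anti-concentration argument tailored to the collision channel, beyond the generic information-theoretic tools that already suffice for the sample and hint lower bounds.
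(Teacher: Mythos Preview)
The paper does not prove this statement: it is explicitly left as a conjecture, accompanied only by an intuition paragraph in Appendix~\ref{App:OofR}. There is therefore no proof to compare against. The paper's intuition is a counting argument: partition the horizon into exploration intervals separated by communication phases; if every interval length is a time-independent constant, then covering the first $T_0^\pi\in\Omega(\log T)$ steps requires $\Omega(\log T)$ phases; the case the paper declares open is when interval lengths are allowed to grow with $t$ or $T$.

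Your plan is considerably more concrete than that sketch, bringing in a two-instance change-of-measure on a pivotal edge and a Bretagnolle--Huber bound on the frozen-matching error probability $p_i$ between transmissions. This is a sensible sharpening, and you correctly identify that it runs into the same wall the paper flags: the constraint $\sum_i (t_{i+1}-t_i)\,p_i=O(1)$ alone does not prevent a policy from spacing its transmissions on a tower schedule, driving $C=o(\log T)$ while silently accumulating pivotal observations through hints at zero regret cost. Your proposed remedy---a family of instances placing the pivotal gap at $\Omega(\log T)$ dyadic scales so that each scale forces a separate synchronization---is plausible but speculative, and you say as much.

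In short, neither you nor the paper has a proof. Your outline is more detailed and more information-theoretic than the paper's intuition, but both stall at exactly the same obstacle (adaptive, widely spaced communication epochs), and resolving it remains open.
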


The intuition behind the proof of Conjecture \ref{Conj:conj} involves dividing the time horizon into exploration intervals separated by communication phases. Policies like $\HDETC$ and $\EBHDETC$ achieve time-independent regret by uniformly querying hints until \( T^\pi_0 \in O(\log T) \). If the length of any exploration interval before \( T^\pi_0 \) depends on \( t \), there exist instances where any learning policy \(\pi\) incurs time-dependent regret. Conversely, if all exploration intervals before \( T^\pi_0 \) have time-independent lengths, their number must be \( O(\log T) \), as they span the first \( T^\pi_0 \) steps. This number coincides with the required communication phases. Assessing the probability of time-dependent regret when interval lengths depend on \( T \) remains an open and non-trivial question for future work.

\section{Structure of the Optimal Matching}
We argue that any optimal matching on a weighted bipartite graph implies a hierarchical structure on agents, referred to as the \textbf{Multi-Level Agent Structure (MLAS)}, based on the position of their optimal match in their sorted weight list. This hierarchical structure is central to designing efficient decentralized learning algorithms using the described hint inquiry mechanisms and the associated regret analysis.

\begin{definition}\label{def:mlas}
    In a given perfect matching $G$ over $\mathcal{M}$ and $\mathcal{K}$ and a sorted list $\bm{\mu}^s_m$ for each agent $m \in \mathcal{M}$, we say $G$ follows $\MLAS$ if all the following existence conditions hold simultaneously:
    \[
        \exists m_1 \in \mathcal{M} :\quad p_{m_1}(S) \in \{1\},
    \]
    \[
        \exists m_2 \in \mathcal{M} \setminus \{m_1\} :\quad p_{m_2}(S) \in \{1, 2\},
    \]
    \[
        \vdots\;\;\;\;\;\;\;\;\;\;\;\;\;\;\;\;\;\;\;\;\;\;\;\;\;\vdots
    \]
    \[
        \exists m_M \in \mathcal{M} \setminus \{m_1, m_2, \ldots, m_{M-1}\} :\quad p_{m_M}(S) \in \{1, 2, \ldots, K\},
    \]
    where $p_{m}(S)$ denotes the index of $k^G_m$ in $\bm{\mu}^s_m$.

\end{definition}
Intuitively, Definition \ref{def:mlas} indicates that if $\MLAS$ holds for $S$, then there is an agent who is matched with his most preferred arm under $\bm{\mu}^s$s, another agent who is matched with his most or second most preferred arm, and so forth. Lemma \ref{lemma:mlas} demonstrates that an optimal matching $G^*$ adheres to $\MLAS$ when considering the $\bm{\mu}^s_m$s, which are the sorted versions of the underlying vectors $\bm{\mu}_m$. It is important to note that sorting the underlying $\bm{\mu}_m$s does not alter $G^*$; thus, applying $\HALG$ to $\bm{\mu}^s_m$s will also produce $G^*$ as the optimal matching.

\begin{lemma}\label{lemma:mlas}
    Lets denote the sorted lists of the underlying edge weights by $\bm{\mu}^s_m$. Then, the optimal matching $G^*$ over $\bm{\mu}^s_m$s follows $\MLAS$ structure.
\end{lemma}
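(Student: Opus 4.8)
The plan is to recast the $\MLAS$ property as a Hall-type inequality on the ranks of the matched arms, and then contradict the optimality (and uniqueness) of $G^*$ by a local-exchange argument on an auxiliary directed graph built from agents' preferences. Throughout, fix any consistent tie-break so that each sorted list $\bm{\mu}^s_m$ is well defined and ``$k'$ ranked above $k$ for agent $m$'' always implies $\mu_{m,k'}\ge \mu_{m,k}$; the rank $p_m(G)$ of the matched arm is then unambiguous.

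First I would show $G^*$ follows $\MLAS$ iff the agents can be assigned bijectively to ``slots'' $1,\dots,M$ so that the agent in slot $i$ has $p_m(G^*)\le i$ (with slot $M$ admitting any rank in $\{1,\dots,K\}$); this is literally the chain of existence conditions in Definition~\ref{def:mlas}. Applying Hall's theorem to this bipartite agent–slot graph, feasibility is equivalent to: for every $t\in\{1,\dots,M\}$,
\[
|B_t|\ge t,\qquad\text{where } B_t:=\{\,m\in\mathcal{M}: p_m(G^*)\le t\,\}.
\]
(The binding neighbourhoods are the prefixes $\{1,\dots,t\}$ of slots; the $t=M$ case is automatic from slot $M$, but proving the inequality for all $t\le M$ is cleanest.) This reformulation step is routine.

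Next, suppose for contradiction that $|B_t|\le t-1$ for some $t$, chosen minimal, and set $S:=\mathcal{M}\setminus B_t$, so $|S|\ge M-t+1\ge 1$ and every $m\in S$ has $p_m(G^*)\ge t+1$. Let $A_m$ be the set of $p_m(G^*)-1\ge t$ arms that $m$ ranks strictly above $k^{G^*}_m$. If some $m\in S$ has an arm of $A_m$ that is unmatched by $G^*$, reassigning $m$ to that arm yields a matching of utility $\ge U(G^*;\bm{\mu})$ that differs from $G^*$, contradicting optimality (or uniqueness) of $G^*$. Hence $\bigcup_{m\in S}A_m\subseteq\mathcal{K}_{G^*}$, the set of the $M$ arms used by $G^*$. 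Now build a digraph $D$ on $\mathcal{K}_{G^*}$ with an arc $k\to k'$ whenever the agent matched to $k$ ranks $k'$ strictly above $k$. Any directed cycle in $D$ gives a cyclic rotation of arms along it in which every involved agent weakly improves, producing a matching of utility $\ge U(G^*;\bm{\mu})$ distinct from $G^*$ — impossible. So $D$ is a DAG, and for $m\in S$ the vertex $k^{G^*}_m$ has out-degree $|A_m|\ge t$ in $D$.

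Finally I would fix a topological order $v_1,\dots,v_M$ of $D$ (all arcs forward): then the $i$-th vertex from the end has out-degree at most $i-1$, so the last $t$ vertices all have out-degree $<t$ and therefore none of them is an arm matched to an $S$-agent; they must lie among the $\le t-1$ arms matched to $B_t$-agents, which is impossible. This contradiction establishes $|B_t|\ge t$ for all $t$, hence $\MLAS$; and since sorting rows does not change the set of optimal matchings, $\HALG$ applied to the $\bm{\mu}^s_m$ returns $G^*$, so $G^*$ over the sorted lists follows $\MLAS$. The main obstacle is the last step: one has to see that local optimality of $G^*$ simultaneously rules out free-arm switches and directed cycles, and then extract a \emph{quantitative} contradiction from the DAG via the topological-order out-degree bound; the careful handling of ties (forcing ``prefers'' to be a tie-broken strict order) and the appeal to uniqueness of $G^*$ in the equality cases is the delicate bookkeeping.
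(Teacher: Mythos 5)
Your proof is correct, and it reaches the conclusion by a genuinely different combinatorial route than the paper, although both arguments share the same structural core. The paper works inductively: it builds the preference digraph on matched pairs, shows it is acyclic via the improving-rotation exchange, deduces the existence of a sink (an agent matched to its top arm), removes that agent and its arm, and recurses to produce the chain of existence conditions in Definition~\ref{def:mlas} one level at a time. You instead compress the whole chain into a single Hall-type inequality $\lvert B_t\rvert\ge t$ on the rank-prefix sets, and refute a violation in one shot by counting out-degrees along a topological order of the same DAG; the two exchange arguments you invoke (no unmatched arm may be preferred to one's match, and no directed cycle may exist) are exactly the facts the paper also relies on, so the ``buy'' of your version is not a new mechanism but a cleaner quantitative endgame. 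Two aspects of your writeup are actually tighter than the paper's: you make explicit the case where a preferred arm is \emph{unmatched} (the paper's claim that every non-top-matched agent's node has an outgoing edge in $PG_{G^*}$ silently assumes the preferred arm is matched, and the free-arm switch is needed to justify that), and you correctly flag that with tie-broken sorted lists the exchanges only give weak improvement, so the uniqueness assumption on $G^*$ must be invoked rather than bare optimality. The trade-off is that the paper's peeling argument yields the distinguished agents $m_1,\dots,m_M$ constructively, whereas your Hall/counting argument is purely existential; both are fully adequate for the lemma as stated.
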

\begin{proof}[Proof of Lemma \ref{lemma:mlas}]
    We first prove that there exists an agent $m_1$ who satisfies the topmost existence condition by having $p_{m_1}(G^*) = 1$, meaning that $k_{m_1}^{G^*}$ is the most preferred arm by agent $m_1$. We then remove $k_{m_1}^{G^*}$ from the sorted lists $\bm{\mu}^s_{m' \neq m_1}$s and use induction to show the existence of another agent $m_2$ who satisfies the second condition. We repeat this process to prove the conditions for the remaining agents.

    To start, we prove that given an optimal matching $G^*$, there exists an agent $m_1$ such that $p_{m_1}(G^*) = 1$. We achieve this by constructing a directed preference graph $PG_{G^*}$ where the nodes represent matched pairs $( m, k^{G^*}_m )$. An edge is directed from node $( m, k^{G^*}_m)$ to $( m', k^{G^*}_{m'})$ if $\mu_{m, k^{G^*}_{m'}} > \mu_{m, k^{G^*}_m}$.

    We argue that $PG_{G^*}$ must be an acyclic graph; otherwise, it would contradict the optimality of $G^*$. Suppose $PG_{G^*}$ contains a cycle of length $i$, as depicted in Figure \ref{fig:cycle}. We label the nodes as $( m_1, k^*_1 ), ( m_2, k^*_2 ), ( m_3, k^*_3 ), \ldots, ( m_i, k^*_i )$.

    \begin{figure}[!htb]
        \centering
        \scalebox{.75}{
            \begin{tikzpicture}[node distance={35mm}, main/.style = {draw, circle},scale=0.5]
                \node[main] (1) {$( m_1,k^*_1 )$};
                \node[main] (2) [right of=1] {$( m_2,k^*_2 )$};
                \node[main] (3) [right of=2]{$( m_{3},k^*_{3} ) $};
                \node[main] (4) [right of=3]{$( m_i,k^*_i )$};
                \draw[->] (4) to [out=75, in=75, looseness=0.25] (1);
                \draw[->] (1) -- (2);
                \draw[->] (2) -- (3);
                \draw[dashed ,->] (3) -- (4);
            \end{tikzpicture}
        }
        \caption{An $i$-cycle contained in $PG_{G^*}$}
        \label{fig:cycle}
    \end{figure}
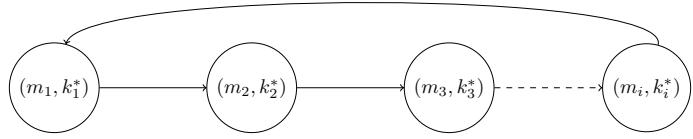

    We can then construct a new matching by reassigning each agent $m_j$ to the arm indicated by the node it points to, i.e., $k^*_{(j\%i) + 1}$, thus creating a new matching $G'$ with a higher total weight. We then show that if no agent is matched with their most preferred arm, there must be a cycle in $PG_{G^*}$, which contradicts the optimality of $G^*$ as established.

    If no agent is matched with their most preferred arm, consider an arbitrary agent $m$. We observe that the node $( m, k )$ has at least one outgoing edge. Suppose the node $( m', k' )$ is the any of them $( m, k )$ is pointing to. We also know that $( m', k' )$ must have at least one outgoing edge that does not point back to $( m, k )$; otherwise, $PG_{G^*}$ would contain a 2-cycle, contradicting the optimality of $G^*$. Therefore, $( m', k' )$ must point to another node, $( m'', k'' )$. To avoid forming a cycle, this chain of nodes would need to grow infinitely, which contradicts the assumption that $\mathcal{M}$ is finite. Thus, there must be at least one agent, denoted $m_1$, who is matched with their most preferred arm under $G^*$.

    To complete the proof, we then remove $m_1$ from $\mathcal{M}$ and $k_{m_1}^{G^*}$ from $\mathcal{K}$. We apply the same argument to show that there exists an agent $m_2$ who is matched with their most preferred arm after removing $k_{m_1}^{G^*}$, which could have been preferred to $k_{m_2}^{G^*}$ by $m_2$. This implies that $m_2$'s match under $G^*$ is either their most or second most preferred arm. We continue this argument for the remaining agents by induction, thereby completing the proof.

\end{proof}

\section{Auxiliary Lemmas}

\begin{lemma}\label{lemma:scaling_kl}
    For a given $0<p\leq q\leq1 $ as mean of Bernoulli distribution and $n\in \mathbb{R}^+$, the following inequality holds
    \begin{align*}
        \mathrm{kl}\left(\frac{p}{n},\frac{q}{n}\right)\leq \frac{1}{n} \mathrm{kl}\left(p,q\right).
    \end{align*}
\end{lemma}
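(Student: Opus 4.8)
The plan is to read the right-hand side as the value at $n=1$ of a one-parameter family and then prove monotonicity in $n$. Fix $p,q$ with $0<p\le q\le 1$, and for $n$ in the range where $p/n,q/n\in(0,1]$ — which contains all $n\ge 1$, the regime relevant here (where $n$ is a count of observations or a number of summands) — define
\[
g(n)\coloneqq n\,\mathrm{kl}\!\left(\tfrac{p}{n},\tfrac{q}{n}\right).
\]
Since $g(1)=\mathrm{kl}(p,q)$, the claimed inequality $\mathrm{kl}(p/n,q/n)\le \tfrac1n\mathrm{kl}(p,q)$ is exactly $g(n)\le g(1)$, so it suffices to show that $g$ is non-increasing on $[1,\infty)$.

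The key observation is that the two logarithmic ratios inside $\mathrm{kl}(p/n,q/n)$ are scale-friendly: $\tfrac{p/n}{q/n}=\tfrac pq$ and $\tfrac{1-p/n}{1-q/n}=\tfrac{n-p}{n-q}$. Hence
\[
g(n)=p\log\tfrac pq+(n-p)\log\tfrac{n-p}{n-q},
\]
and only the second summand depends on $n$. Differentiating and using the cancellation $(n-p)\cdot\tfrac{1}{n-p}=1$ gives
\[
g'(n)=\log\tfrac{n-p}{n-q}+1-\tfrac{n-p}{n-q}.
\]

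Now set $u\coloneqq \tfrac{n-p}{n-q}$. Because $p\le q$ and $n-q>0$ on this range, $u\ge 1$, and the elementary inequality $\log u\le u-1$ (valid for all $u>0$, with equality iff $u=1$) yields $g'(n)=\log u-(u-1)\le 0$. Thus $g$ is non-increasing wherever it is smooth, in particular on $[1,\infty)$, so $g(n)\le g(1)=\mathrm{kl}(p,q)$ for every $n\ge 1$; dividing by $n$ gives the lemma. The only points requiring a separate word are degenerate: if $q=1$ with $p<1$ the right-hand side is $+\infty$ and nothing is to prove; if $p=q$ both sides vanish; and $n<1$ is (correctly) excluded, since there $g$ non-increasing would make the inequality reverse. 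I expect no genuine obstacle — the sole ``trick'' is spotting the algebraic simplification of $g$ that reduces the whole derivative analysis to the one-line estimate $\log u\le u-1$.
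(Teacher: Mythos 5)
Your proof is correct and follows essentially the same route as the paper's: both reduce the claim to showing that $(n-p)\log\frac{n-p}{n-q}$ is non-increasing in $n$ on $[1,\infty)$, and both close the derivative computation with the elementary bound $\log u \le u-1$ (the paper phrases it as $(1+x)^{1/x}\le e$ after first exponentiating to $\left(\frac{n-p}{n-q}\right)^{n-p}$). Your direct differentiation of $g(n)$ is marginally cleaner, and your observation that the argument really requires $n\ge 1$ matches the restriction the paper silently imposes in its own proof despite the statement saying $n\in\mathbb{R}^+$.
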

\begin{proof}
    We first replace both sides of the inequality with the definition of $\mathrm{kl}$ and write it as:
    \begin{align}\label{ineq:kl_n}
        \frac{p}{n} \log\left(\frac{p}{q}\right) + \left(1 - \frac{p}{n}\right) \log\left(\frac{1 - \frac{p}{n}}{1 - \frac{q}{n}}\right) \leq \frac{1}{n}\left(p \log\left(\frac{p}{q}\right) + (1 - p) \log\left(\frac{1 - p}{1 - q}\right)\right),
    \end{align}

    For \eqref{ineq:kl_n} to hold, it suffices to prove:
    \begin{align*}
        \left(1 - \frac{p}{n}\right) \log\left(\frac{1 - \frac{p}{n}}{1 - \frac{q}{n}}\right) \leq \frac{1}{n} (1 - p) \log\left(\frac{1 - p}{1 - q}\right),
    \end{align*}
    which is equivalent to:
    \begin{align}\label{ineq:kl_scaling_2}
        \left(\frac{1 - \frac{p}{n}}{1 - \frac{q}{n}}\right)^{n - p} \leq \left(\frac{1 - p}{1 - q}\right)^{1 - p}.
    \end{align}

    Now we can observe that \(n = 1\) makes both sides of inequality \eqref{ineq:kl_scaling_2} equal. Assuming that \(n \geq 1\), we prove that \(\left(\frac{1 - \frac{p}{n}}{1 - \frac{q}{n}}\right)^{n - p}\) is a decreasing function in \(n\). Then we finish the proof by showing that the derivative of \(\left(\frac{1 - \frac{p}{n}}{1 - \frac{q}{n}}\right)^{n - p}\) with respect to \(n\) is always negative.

    After renaming \(v = n - p\) and \(u = \frac{n - p}{n - q}\), we can write:
    \[
        \frac{d u^v}{dn} = u^v \left(\frac{dv}{dn} \log u + \frac{v \frac{du}{dn}}{u}\right),
    \]

    To wrap up the proof, we need to show that this derivative is always negative, which implies \(u^v\) is decreasing in \(n\). For this argument to be correct, the following inequalities should hold:
    \begin{align}
        u^v \left(\frac{dv}{dn} \log u + \frac{v \frac{du}{dn}}{u}\right) \leq 0 & \implies \frac{dv}{dn} \log u + \frac{v \frac{du}{dn}}{u} \leq 0, \nonumber                         \\
                                                                                 & \implies \log \left(\frac{n - p}{n - q}\right) \leq \frac{q - p}{n - q}, \nonumber                  \\
                                                                                 & \implies \left(1 + \frac{q - p}{n - q}\right) \leq e^{\frac{q - p}{n - q}}. \label{ineq:derivation}
    \end{align}

    By the inequality \(\forall x > 0, \left(1 + x\right)^{\frac{1}{x}} \leq e\), inequality \eqref{ineq:derivation} is always correct. Thus, we proved that \(\left(\frac{1 - \frac{p}{n}}{1 - \frac{q}{n}}\right)^{n - p}\) is decreasing in \(n\), which completes our proof.

\end{proof}

\begin{lemma}\label{lemma:c_whhmab_kl1}
    For a given $m_1,m_2 \in \mathcal{M}$,$k_1,k_2\in \mathcal{K}$, and $n,n_1,n_2\in \mathbb{N}^+$ such that $\hat{\mu}_{m_1,k_1}(t) \leq \hat{\mu}_{m_2,k_2}(t)$ and $n_1\leq n_2$, We define $d^n_{m,k}(t)$ as
    $$d^n_{m, k}(t)\coloneqq\sup \left\{q \geq 0: n\ \mathrm{kl}\left(\hat{\mu}_{m,k}(t), q\right) \leq \log t+4 \log \log t\right\},$$
    the following inequalities hold
    \begin{enumerate}
        \item $d^{n_1}_{m, k}(t) > d^{n_2}_{m, k}(t),$
        \item $d^{n}_{m_1, k_1}(t) \leq d^{n}_{m_2, k_2}(t).$
    \end{enumerate}

\end{lemma}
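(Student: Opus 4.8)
The argument rests on two elementary monotonicity properties of the binary relative entropy $\mathrm{kl}(p,q)=p\log\frac{p}{q}+(1-p)\log\frac{1-p}{1-q}$, which I would record first. (i) For fixed $p$, the map $q\mapsto\mathrm{kl}(p,q)$ is continuous, vanishes at $q=p$, is strictly increasing on $[p,1)$, and tends to $\infty$ as $q\uparrow 1$ when $p<1$; this follows from $\partial_q\mathrm{kl}(p,q)=\frac{q-p}{q(1-q)}>0$ for $q>p$. (ii) For fixed $q$, the map $p\mapsto\mathrm{kl}(p,q)$ is strictly decreasing on $[0,q]$, since $\partial_p\mathrm{kl}(p,q)=\log\frac{p(1-q)}{q(1-p)}<0$ for $p<q$. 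Using (i), and assuming $t$ is large enough that $f(t)=\log t+4\log\log t>0$, I would rewrite the index in the convenient form
\[
d^n_{m,k}(t)=\max\Bigl\{q\ge\hat\mu_{m,k}(t):\ \mathrm{kl}\bigl(\hat\mu_{m,k}(t),q\bigr)\le f(t)/n\Bigr\},
\]
where the supremum is attained because $\mathrm{kl}(\hat\mu_{m,k}(t),\cdot)$ is continuous and blows up at $1$; note also $d^n_{m,k}(t)\ge\hat\mu_{m,k}(t)$ always, as $\mathrm{kl}(\hat\mu_{m,k}(t),\hat\mu_{m,k}(t))=0\le f(t)/n$.

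\textbf{Part 1.} Put $c_i=f(t)/n_i$, so $c_1\ge c_2$ since $n_1\le n_2$. By (i), the feasible sets $S_i=\{q\ge\hat\mu_{m,k}(t):\mathrm{kl}(\hat\mu_{m,k}(t),q)\le c_i\}$ are nested intervals $[\hat\mu_{m,k}(t),\,d^{n_i}_{m,k}(t)]$ with $S_2\subseteq S_1$, hence $d^{n_2}_{m,k}(t)\le d^{n_1}_{m,k}(t)$. For the strict inequality I would use that on $[\hat\mu_{m,k}(t),1)$ the function $q\mapsto\mathrm{kl}(\hat\mu_{m,k}(t),q)$ is strictly increasing and unbounded, so the equation $\mathrm{kl}(\hat\mu_{m,k}(t),q)=c$ has a unique interior root that is strictly increasing in $c$; with $n_1<n_2$, i.e. $c_1>c_2$, this gives $d^{n_1}_{m,k}(t)>d^{n_2}_{m,k}(t)$. (When $\hat\mu_{m,k}(t)=1$ both indices equal $1$ and the claim holds only with $\ge$; the stated strict version is to be read under the non-degeneracy $n_1<n_2$, $\hat\mu_{m,k}(t)<1$.)

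\textbf{Part 2.} Fix $n$ and set $q^\star=d^n_{m_1,k_1}(t)$. If $q^\star<\hat\mu_{m_2,k_2}(t)$, then $q^\star<\hat\mu_{m_2,k_2}(t)\le d^n_{m_2,k_2}(t)$ and we are done. Otherwise $q^\star\ge\hat\mu_{m_2,k_2}(t)\ge\hat\mu_{m_1,k_1}(t)$; since $q^\star$ is feasible for the first index, property (ii) applied with second argument $q^\star$ yields
\[
\mathrm{kl}\bigl(\hat\mu_{m_2,k_2}(t),q^\star\bigr)\le\mathrm{kl}\bigl(\hat\mu_{m_1,k_1}(t),q^\star\bigr)\le\frac{f(t)}{n},
\]
so $q^\star$ is feasible for the second index as well, whence $d^n_{m_2,k_2}(t)\ge q^\star=d^n_{m_1,k_1}(t)$.

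\textbf{Main obstacle.} There is no deep difficulty: the only care needed is the boundary behaviour of $\mathrm{kl}$. One must verify that the supremum defining $d^n$ is a maximum (continuity in $q$ plus $\mathrm{kl}(p,\cdot)\uparrow\infty$ as $q\uparrow1$ for $p<1$), treat the degenerate case $\hat\mu_{m,k}(t)=1$ (then $d^n=1$), and note that $f(t)>0$ — true for $t\ge 3$ — is what makes the feasible set a genuine non-degenerate interval and hence makes the strict inequality in Part 1 meaningful. All of this is routine once the two derivative computations above are in place.
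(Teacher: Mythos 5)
Your proof is correct and follows essentially the same route as the paper's: part 1 via the feasible set $\{q:\mathrm{kl}(\hat\mu,q)\le f(t)/n\}$ shrinking as $n$ grows, and part 2 via the monotonicity of $\mathrm{kl}(p,q)$ in its first argument for $p\le q$ (the paper argues from $q^\star=d^n_{m_2,k_2}(t)$ downward, you argue from $q^\star=d^n_{m_1,k_1}(t)$ upward — two sides of the same comparison). Your treatment is in fact slightly more careful than the paper's, which asserts the strict inequality in part 1 directly from $n_1\le n_2$ without flagging the degenerate cases you identify.
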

\begin{proof}
    For the first part, according to the definition, \(d^n_{m,k}\) is decreasing in \(n\). This is because increasing \(n\) requires \(q\) to be closer to \(\hat{\mu}_{m,k}(t)\), as \(\mathrm{kl}(p', q')\) increases with \(q'\). Therefore, since \(n_1 \leq n_2\), we conclude that \(d^{n_1}_{m, k}(t) > d^{n_2}_{m, k}(t)\).

    For the second part we basically use the fact that $d^n_{m,k}(t)\geq \hat{\mu}_{m,k}(t)$. Assuming $q^*=d^n_{m_2,k_2}(t)$, then we know that
    \begin{align}\label{fct:lemma_kl_1}
        n\mathrm{kl}\left(\hat{\mu}_{m_2,k_2}(t),q^*\right) = \log t + \log\log t,
    \end{align}
    We then we can prove that $d^n_{m_1,k_1} \leq q^*$. Accordingly we put $q^*$ in definition of $d^n_{m_1,k_1}$ and write
    \begin{align*}
        \hat{\mu}_{m_1,k_1}(t) \leq \hat{\mu}_{m_2,k_2}(t) & \overset{}{\implies} q^*-\hat{\mu}_{m_2,k_2}(t) \leq q^*-\hat{\mu}_{m_1,k_1}(t),                                                    \\
                                                           & \overset{(a)}{\implies} \mathrm{kl}\left(\hat{\mu}_{m_2,k_2}(t),q^*\right) \leq \mathrm{kl}\left(\hat{\mu}_{m_1,k_1}(t),q^*\right), \\
                                                           & \overset{(b)}{\implies} n\mathrm{kl}\left(\hat{\mu}_{m_1,k_1}(t),q^*\right) \geq \log t + \log\log t,                               \\
                                                           & \implies d^n_{m_1,k_1}(t) \leq q^*,                                                                                                 \\
                                                           & \implies d^n_{m_1,k_1}(t) \leq d^n_{m_2,k_2}(t) .
    \end{align*}
    while (a) is implied by the fact that $\mathrm{kl}(p',q')$ increases as $\abs{q'-p'}$ grows and (b) is implied by equation \ref{fct:lemma_kl_1}.
\end{proof}

\end{document}